\newcolumntype{P}[1]{>{\centering\arraybackslash}p{#1}}
\theoremstyle{definition}
\newtheorem{definition}{Definition}[section]
\newcommand{\bx}{\mathbf{x}} 
\newcommand{\bob}{\mathbf{f}} 
\newcommand{\bigO}{\mathcal{O}}
\newcommand{\todo}[1]{\textcolor{red}{[~#1~]}}
\newcommand{\hide}[1]{}
\newcommand{\set}[1]{\left\{{#1}\right\}}
\providecommand{\R}{\ensuremath{\mathbb{R}}}
\newtheorem{corollary}{Corollary}[section]
\newtheorem{remark}{Remark}[section]
\let\svthefootnote\thefootnote
\newcommand\colorfootnote[2][black]{\def\thefootnote{\color{#1}\svthefootnote}%
  \footnote{\color{#1}#2}\def\thefootnote{\color{black}\svthefootnote}}
\newcommand{\RICHARD}[1]{\protect\colorfootnote[blue]{{\textbf{[RICHARD: #1]}}}}
\let\emph\relax
\DeclareTextFontCommand{\emph}{\bfseries}
\begin{document}

\begin{frontmatter}


\title{What if we Increase the Number of Objectives?\\
Theoretical and Empirical Implications for Many-objective Optimization}


\author[label1]{Richard Allmendinger}
\author[label2]{Andrzej Jaszkiewicz}
\author[label3]{Arnaud Liefooghe}
\author[label4]{Christiane Tammer}

\address[label1]{The University of Manchester, Alliance Manchester Business School, Booth Street West, Manchester M15 6PB, UK}
\address[label2]{Pozna\'{n} University of Technology, Institute of Computing Science, Piotrowo 2, 60-965 Pozna\'{n}, Poland}
\address[label3]{University of Lille, CNRS, Centrale Lille, Inria, UMR 9189 - CRIStAL, F-59000 Lille, France}
\address[label4]{Martin-Luther-Universit\"{a}t Halle-Wittenberg, Institut für Mathematik, 06099 Halle, Germany}

\begin{abstract}
The difficulty of solving a multi-objective optimization problem is impacted by the number of objectives to be optimized.
The presence of many objectives typically introduces a number of challenges that affect the choice/design of optimization algorithms. This paper investigates the drivers of these challenges from two angles: (i)~the influence of the number of objectives on problem characteristics and (ii)~the practical behavior of commonly used procedures and algorithms for coping with many objectives. In addition to reviewing various drivers, the paper makes theoretical contributions by quantifying some drivers and/or verifying these drivers empirically by carrying out experiments on multi-objective NK landscapes and other typical benchmarks. We then make use of our theoretical and empirical findings to derive practical recommendations to support algorithm design. Finally, we discuss remaining theoretical gaps and opportunities for future research in the area of multi- and many-objective optimization.

\medskip
\noindent
\textit{Contribution:} Survey, theory, empirical, implications on algorithm design. 
\end{abstract}

\begin{keyword}
Multi- and many-objective optimization
\sep
problem characteristics
\sep
complexity of procedures and algorithms
\sep
survey
\sep
theoretical and empirical analysis.



\end{keyword}

\end{frontmatter}

\section{Introduction}
Many real-world optimization problems require the decision maker (DM) to trade off two or more conflicting criteria (objectives), such as maximizing quality vs minimizing cost when buying a car~\citep{deb2001multi}, or maximizing potency vs minimizing side effects and manufacturing costs when developing a new drug~\citep{small2011efficient}. In practical applications, there can be significantly more than three objectives accounting for a mix of performance, resource, environmental and other goals; for example,~\cite{kollat2011many} optimize the position and frequency of tracer sampling of groundwater using six design objectives, ~\cite{eikelboom2015spatial} 
address land use planning problems using cost, benefit and spatial objectives for the individual parcels of land, and~\cite{fleming2005many} design optimal control systems for aircraft engines using eight objectives, each corresponding to a different design specifications. 

Multi-objective optimization (MO)~\citep{deb2001multi,miettinen2012nonlinear} is the area looking at the development and application of algorithms to problems with multiple conflicting objectives; problems with more than three objectives have also been termed as many-objective problems~\citep{kollat2011many} and are less studied. In the absence of any user preferences about desired ideal solutions, the goal of MO algorithms (MOAs) is not to identify a single optimal solution but to approximate the set of best trade-off solutions to a problem, also known as the Pareto (optimal) set. We say that a solution Pareto dominates, or simply dominates, another solution if it is not worse in any objective and if it is strictly better in at least one objective. Solutions in the Pareto set are non-dominated by any other solution from the feasible solution space.

The traditional approach to tackle a MO problem is to convert it into a single-objective problem using a scalarizing function~\citep{eich08}, and then solve the problem repeatedly using different `configurations' of the scalarizing function (e.g. different weights). A commonly used scalarizing method is to combine the objectives using a weighted sum, and then alter the weights to discover one 
solution per algorithmic run. However, in the non-convex case, non-linear scalarization methods are more useful. 
An alternative approach is to use a population-based approach, such as a multi-objective evolutionary algorithm (MOEA), to evolve several solutions (a population) in one algorithmic run with the hope to also discover multiple Pareto optimal solutions in one go. MOEAs have proven to be an efficient approach, especially for problems with two and three objectives~\citep{coello2007evolutionary,deb2001multi}. Traditional MOEAs rely on the Pareto dominance concept combined with diversity maintenance mechanisms to drive the search (e.g. NSGA-II~\citep{deb2002fast}). Two further prominent concepts are indicator-based and decomposition-based methods. The former concept replaces the objectives with a unary set performance metric (indicator) and then optimizes this metric (e.g. IBEA~\citep{zitzler2004indicator}). A commonly used indicator is the hypervolume indicator~\citep{Zitzler2003}, which measures the volume of the objective space that is dominated by the image of the Pareto set (also known as the Pareto front) and bounded by a reference point. Decomposition-based approaches (e.g. MOEA/D~\citep{zhang2007moea}) decompose a MO problem into several single-objective problems using a scalarizing function with different weights. Each solution in the population is then dedicated to optimize one scalarizing function. 

From the literature, it is apparent that solving MO problems is known to be difficult for some problem classes; moreover, we know from the No Free Lunch theorem~\citep{wolpert1997no} that there is no single best approach in the general case. The computational difficulty grows with an increase in the number of objectives affecting different problem characteristics, such as the number of Pareto optimal solutions, distances among solutions, and likelihood of objectives varying in complexity and evaluation times. Furthermore, it is anticipated that algorithms and procedures designed for MO problems will face difficulties as the number of objectives increases. For example, the complexity for routines such as dominance tests and updating of the Pareto archive affects Pareto dominance-based MOEAs, creating evenly distributed weight vectors impacts decomposition-based MOEAs and scalarizing methods, and computing and approximating the hypervolume can become an issue for indicator-based MOEAs and performance validation. An additional issue accompanied by an increase in the number of objectives is the limited availability of visualisable test problems. An overview of recent developments in the area of many-objective optimization can be found, for example, in~\citet{ishibuchi2008evolutionary,aguirre_many_survey,chand2015evolutionary}, and empirical work on the efficiency of MOEAs for many-objective optimization problems can be found, for example, in~\citet{purshouse2003evolutionary,hughes2005evolutionary,wagner2007pareto}.


We make several contributions 
to support the community in gaining a better understanding about the effect of increasing the number of objectives on problem characteristics and the complexity of MO procedures and algorithms:

\begin{itemize}
    \item We adopt a holistic approach linking theory with empirical analysis, and then we highlight how our results translate into practice. 
    
    \item We theoretically investigate the key drivers attributing to an increase in computational and algorithmic challenges. In particular, we derive probabilities for a solution to be non-dominated, and propose a general formulation for  scalarizing functions.  
    
    \item We conduct empirical experiments on multi-objective \mbox{NK-landscapes}~\citep{aguirre2007,verel2013} and other benchmarks to back up our theoretical contributions and verify various published theoretical results (as summarized in Table~\ref{contribution_table}).

    \item We derive practical recommendations from our analysis to support algorithm design. 
\end{itemize}
%
Let us highlight that our focus is on understanding the impact of the number of objectives on different problem characteristics and MO procedures. The review and evaluation of individual MOEAs is out of scope of this paper, and can be found, for example, in~\citet{ishibuchi2008evolutionary,chand2015evolutionary}. However, as pointed out above, we will use our theoretical and empirical findings to provide recommendations for MOEAs of different type.

This paper is organized as follows. The next section defines key MO concepts and motivates the choice of the main test function used for empirical evaluation. 
Section~\ref{problemChar} investigates the influence of the number of objectives on different problem characteristics, while Section~\ref{MOEAprocedures} investigates the complexity of commonly used procedures and algorithms for coping with multiple objectives. Section~\ref{conclusions} presents a summary of our theoretical and empirical findings, 
and then uses these to make recommendations for algorithmic setup choices 
for MOEAs, before finally  discussing directions for future research. 

\begin{table}[t]
\caption{Overview the topics to which this paper is making either a theoretical or empirical contribution, and the section, equation(s) and/or figure(s) presenting the actual contribution. Sections not listed in the table perform a mini review and are included in the paper to put the work in context.}%
    \label{contribution_table}%
    \small%
    \centering%
    \begin{tabular}{llll}
    \toprule
    \textbf{Topic} & \textbf{Presented in} & \textbf{Theoretical contribution} & \textbf{Empirical contribution} \\
    \midrule
    Number of Pareto optimal solutions ~~~~~~~~~~~~~~~~~~~~~~~~~~ & Section~\ref{sec:pos} &  & Fig.~\ref{fig:pos}  \\ 
    Probability for a solution to be non-dominated & Section~\ref{sec:dom_rel} & Eq.~(\ref{eq:nd_pairs}) & Figs.~\ref{fig:prob_nd_model}-\ref{fig:nb_nd_nk} \\
    Probability of heterogeneous objectives &  Section~\ref{HeterogObj} & & Fig.~\ref{fig:HetOb} \\
    Distances between solutions & Section~\ref{distanceBetweenSolutions} & & Fig.~\ref{fig:dist_sol} \\
    Dominance test and updating the Pareto archive & Section~\ref{sec:DomTestUpdate} & & Figs.~\ref{fig:NDAll}-\ref{fig:NDvsPoints}\\
    Computing and approximating hypervolume & Section~\ref{sec:hv} & & Figs.~\ref{fig:Exact}-\ref{fig:MC}\\
    Impact on scalarization methods & Section~\ref{s-furtherscal} & Eq.~(\ref{f-spec-scal-block-new}) & \\
    Distances between weight vectors & Section~\ref{sec:dist_weights} & & Fig.~\ref{fig:dist_weight} \\
    \bottomrule
    \end{tabular}
\end{table}

\section{Background}\label{background}
This section provides the relevant formal definitions to understand and avoid ambiguity of the various MO concepts, which will be used in the subsequent sections. The section will also provide details of multi-objective \mbox{NK-landscapes}, the model we use to empirically validate our theoretical contributions and/or existing theoretical results.  

\subsection{Basic Definitions}

\smallskip

\begin{definition}{(Multi-objective optimization (MO) problem)}\label{d-MO}
The general formulation of a MO problem is to ``maximize'' an $m$-dimensional objective function vector: $\mathbf{\bob}(\mathbf{\bx}) = (f_{1}(\mathbf{\bx}),\ldots,f_{i}(\mathbf{\bx}),\ldots,f_{m}(\mathbf{\bx}))$, where each objective depends upon a vector $\mathbf{\bx} = (x_{1},\ldots,x_j,\ldots,x_{n})$ of
$n$ design (or decision) variables; we refer to $\mathbf{\bx}$ also as the candidate solution vector or simply as a solution. A MO problem may also include equality and inequality constraints, and these constraints define a feasible design space, $X$ (which can be, for example, real-valued, binary, or of combinatorial nature). The objective space image of $X$ is denoted $Y$, and termed the feasible objective space. The term ``maximize'' is written in quotes to indicate that there are no unique maxima to a MO problem in general, and a further definition is needed to define an ordering on candidate solutions (see below).
\end{definition}

\begin{definition}{(Pareto dominance)}
Consider two solutions $\bx^1\in X$ and $\bx^2\in X$. We say that $\bx^1$ is dominated by~$\bx^2$, also written as $\bx^1 \prec \bx^2$, if and only if 	$\exists i$ such that $f_{i}(\bx^1) < f_{i}(\bx^2)$ and $\forall j, f_{j}(\bx^1) \leq f_{j}(\bx^2)$. This relation is also sometimes called strict Pareto dominance in contrast to the weak Pareto dominance defined below.
\end{definition}

\begin{definition}{(Weak Pareto dominance)}
Consider two solutions $\bx^1\in X$ and $\bx^2\in X$. We say that $\bx^1$ is weakly dominated by~$\bx^2$, also written as $\bx^1 \preceq \bx^2$, if and only if $\forall j, f_{j}(\bx^1) \leq f_{j}(\bx^2)$.
\end{definition}

\begin{definition}{(Pareto optimal)}\label{d-MOmax}
A solution $\bx^1\in X$ is called Pareto optimal if there does not exist a solution $\bx^2\in X$ that dominates it. 
\end{definition}

\begin{remark}In Section \ref{s-furtherscal}, where we will discuss a general scalarization technique for a characterization of solutions to MO problems, we will assume a more general definition of Pareto optimality: there we will consider Pareto (weakly) optimal solutions defined by a nontrivial, closed, convex and pointed cone $C\subset \mathbb{R}^{m }$ (see Definition \ref{d-fCmax}).
\end{remark}

\begin{definition}{(Pareto set)}
The set of all Pareto optimal solutions is said to form the Pareto set.
\end{definition}

\begin{definition}{(Pareto front)}
The image of the Pareto set in the objective space is known as the Pareto front. 
\end{definition}

\begin{definition}\label{d-fCmax}
Consider a MO problem where the objective function $\mathbf{\bob}$ is to ''maximize'' with respect to a nontrivial, closed, convex and pointed cone $C\subset \mathbb{R}^{m }$. A solution $\bx^1 \in X$ is called a Pareto optimal solution with respect to $\mathbf{\bob}$ and $C$ if
\[ \mathbf{\bob}(\bx^1) \in \{ \mathbf{\bob}(\bx)\in Y \ : \ Y \bigcap (\mathbf{\bob}(\bx) + (C \setminus \{0\})) = \emptyset \}.\]
Furthermore, $\bx^1 \in X$ is called a Pareto weakly optimal solution with respect to $C$ if
\[ \mathbf{\bob}(\bx^1) \in \{ \mathbf{\bob}(\bx)\in Y\ : \ Y \bigcap (\mathbf{\bob}(\bx) + \operatorname*{int} C ) = \emptyset \},\]
where $\operatorname*{int} A$ denotes the interior of a set $A \subset \mathbb{R}^m$.
\end{definition}
{\begin{remark}
Of course, for the special case $C= \mathbb{R}^m_+ := \{y \in \mathbb{R}^m \mid \forall i\in \{1, \ldots , m\}: \; y_i \geq 0 \}$, a Pareto optimal solution in the sense of Definition \ref{d-fCmax} coincides with a Pareto optimal solution in the sense of Definition \ref{d-MOmax}.
\end{remark}}

\begin{remark}
Pareto (weakly) optimal solutions of a multi-objective optimization problem where the objective function $\mathbf{\bob}$ is to ''minimize'' are analogously defined, i.e., we replace $C$ by $-C$ and $\operatorname*{int} C $ by $-\operatorname*{int} C $ in Definition \ref{d-fCmax}.
\end{remark}

{\begin{remark}\label{r-maxmin}
Obviously, $\bx^1$  is a Pareto optimal (maximal) solution with respect to $\mathbf{\bob}$ and $C$ in the sense of Definition \ref{d-fCmax} if and only if $\bx^1$ is a Pareto optimal (minimal) solution with respect to $-\mathbf{\bob}$ and $-C$.
\end{remark}}

\begin{definition}{(Hypervolume indicator)}
Given a set of points in the objective space $S \subset \mathbb{R}^m$ and a reference point $r \in \mathbb{R}^m$,
the hypervolume indicator of S is the measure of the region weakly dominated by $S$ and weakly dominating $r$ , i.e.:
\begin{equation*}
    H(S,r) = \Lambda({q \in \mathbb{R}^m \:|\: r \preceq q \; \land \; \exists p \in S : q \preceq p })    
\end{equation*}
where $\Lambda(\text{.})$ denotes the Lebesgue measure. Alternatively, it may be interpreted as the measure of the union of boxes:
\begin{equation*}
    H(S) = \Lambda \Big(\bigcup_{p \in S, r \preceq p}[r,p]\Big)
\end{equation*}
where $[r, p] = \{q \in \mathbb{R}^m \:|\: q \preceq p \;  \land \; r \preceq q\}$ denotes the hypercuboid delimited by $p$ and $r$. The advantage of hypervolume indicator is its compliance with the Pareto dominance relation \citep{Zitzler2003}.
\end{definition}

\begin{definition}{(Pareto archive)}
A Pareto archive is a set of mutually non-dominated solutions. 
In the context of multi-objective 
algorithms, a Pareto archive is used to store potentially Pareto optimal solutions, i.e. solutions that are not dominated by any solution generated so far.
The Pareto archive may be unbounded, or bounded in cardinality, i.e. it may contain only a limited number of $N$ solutions.
\end{definition}


\subsection{Multi-objective NK-Landscapes}
\label{sec:rmnk}

\newcommand{\N}[0]{\textit{n}}
\newcommand{\K}[0]{\textit{k}}
\newcommand{\M}[0]{m}

Multi-objective \mbox{NK-landscapes}~\citep{aguirre2007,verel2013} 
are a  problem-inde\-pen\-dent model used for constructing multi-objective multi-modal combinatorial problems. 
They extend single-objective NK-land\-scapes~\citep{kauffman1993}.
Candidate solutions are binary strings of size~\N. 
The objective function vector $\mathbf{\bob}$ is defined as $\mathbf{\bob}\colon \lbrace 0, 1 \rbrace^{\N} \mapsto [0,1]^\M$
such that each objective $f_i$ is to be maximized.
As in the single-objective case, the objective value $f_i(\bx)$ of a solution~$\bx$
is the average value of the individual contributions associated with each design variable~$x_j$.
Given objective~$f_i$, $i \in \set{1, \ldots, \M}$, and each variable~$x_j$, $j \in \set{1,\ldots,\N}$,
a component function $f_{ij}\colon \lbrace 0, 1 \rbrace^{\K+1} \mapsto [0,1]$ assigns a real-valued contribution for every combination
of~$x_j$ and its \K~\textit{epistatic interactions} $\set{x_{j_1}, \ldots, x_{j_\K}}$. 
These $f_{ij}$-values are uniformly distributed in~$[0,1]$.
Thus, the individual contribution of a variable~$x_j$ depends on its value and on the values of $\K < \N$ variables~$\set{x_{j_1}, \ldots, x_{j_\K}}$ other than $x_j$.
The problem can be formalized as follows:
\begin{equation*}
\begin{array}{rll}
\label{eq:nk}
\max			&	\displaystyle f_i(x) = \frac{1}{\N} \sum_{j=1}^{\N} f_{ij}(x_j, x_{j_1}, \ldots, x_{j_\K})	&	i \in \set{1, \ldots, \M}	\\
\mbox{s.t.}	&	\displaystyle x_j \in \{0,1\}											&	j \in \set{1, \ldots, \N}
\end{array}
\end{equation*}
The epistatic interactions, i.e.~the \K\ variables that influence the contribution of~$x_j$, are typically set uniformly at random among the $(\N-1)$ variables other than~$x_j$,
following the random neighborhood model from~\citep{kauffman1993}.
By increasing the number of epistatic interactions~$\K$ from $0$ to~$(\N-1)$, problem instances can be gradually tuned from smooth to rugged.
%
Interestingly, multi-objective \mbox{NK-landscapes} exhibit different characteristics and different degrees of difficulty for multi-objective optimization methods%
\footnote{The source code of the multi-objective \mbox{NK-landscapes} generator is available at the following URL: \url{http://mocobench.sf.net}.}%
~\citep{daolio2017,8832171}. In the following two sections, if not otherwise stated we have considered 30 randomly-generated instances of multi-objective \mbox{NK-landscapes} with $n=10$ decision variables, and $k=0$ (i.e. linear problems),
but, of course, varied the number of objectives $m$ as the impact of $m$ is the focus of this study.



\section{Effect of the Number of Objectives on Problem Characteristics}\label{problemChar}

In this section, we study the influence of the number of objectives on different problem characteristics as outlined in Table~\ref{contribution_table}.

\subsection{Number of Pareto Optimal Solutions (Combinatorial~Case)}
\label{sec:pos}

\begin{figure}[!t]
\centering%
\includegraphics[height=170pt]{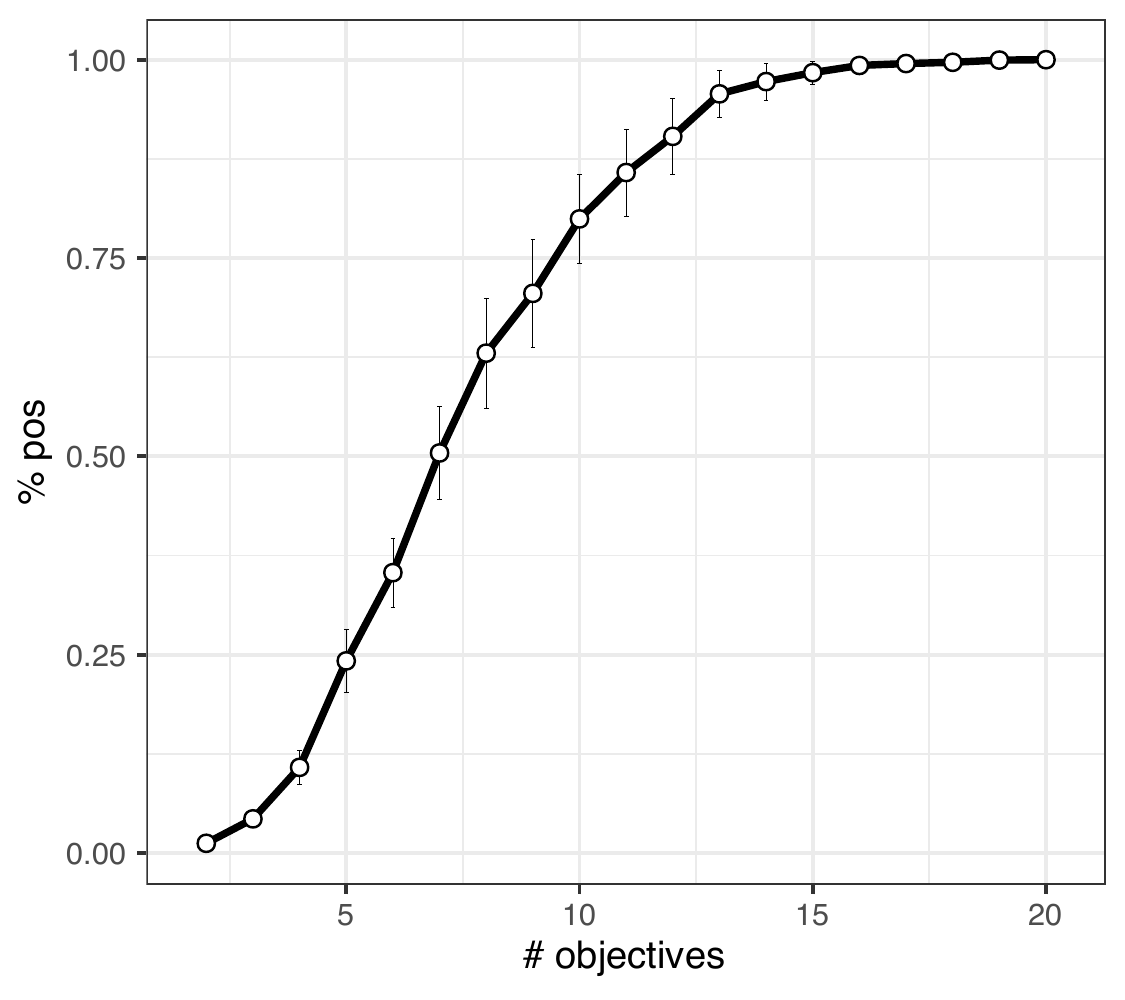}%
\caption{%
Proportional number of Pareto optimal solutions with respect to the number of objectives $m$ for multi-objective \mbox{NK-landscapes}.%
}
\label{fig:pos}
\end{figure}

In the combinatorial case, the number of Pareto optimal solutions is known to grow exponentially with the number of objectives in the worst case, that is $\bigO(c^{m-1})$, where $c$ is a constant value~\citep{BAZGAN2013}. Furthermore, as shown by~\citet{BAZGAN2013}, this bound is tight for many classical multi-objective combinatorial optimization problems, such as selection, knapsack, shortest path, spanning tree, traveling salesperson, and s–t cut problems. Obviously, the number of Pareto optimal solutions is also bounded by the size of the whole feasible set. 

\citet{verel2013} experimentally investigate the number of Pareto optimal solutions for multi-objective NK-land\-scapes with $2$, $3$ and $5$ objectives. They report that it grows exponentially with the problem size, the number of objectives and the degree of conflict among the objectives, and that it slightly decreases with the number of variable interactions. 
%
%
Given the aim of the current paper, we focus on the number of objectives by considering multi-objective \mbox{NK-landscapes} with $2$ to $20$ objectives.
In Fig.~\ref{fig:pos}, we report the proportion of Pareto optimal solutions in the solution space with respect to the number of objectives.
%
%
We see that less than $5\%$ of solutions are Pareto optimal for bi-objective problems ($m=2$), whereas this proportion grows to about $50\%$ for $m=7$ objectives.
For $m=20$ objectives, more than $99\%$ of solutions are Pareto optimal for all considered instances. 
%

\hide{\textbf{[Arnaud]} 
Following our discussions, I added a curve for $\rho=0.5, k=2$ (remind that we cannot have a high negative correlation, since $\rho > \frac{-1}{\M-1}$)\\
However, I feel that adding this information is a bit out-of-focus and might raise even more questions from the reviewers: if we study the impact of $\rho$ and $k$ for the proportion of Pareto optimal solutions, why not doing the same for all other Sections?
(In fact, we could even do the same for the number of variables $n$)\\
Therefore, I would suggest to keep $n$, $k$ and $\rho$ fixed throughout the paper in order to maintain the focus of the paper as clear as possible: what is the impact of the number of objectives $m$?\\
Of course, I'm not trying to say that studying the objective correlation ($\rho$) is not interesting, and the same for $n$ and $k$ (this is actually the point of our previous EJOR paper \citep{verel2013} and one of the reason we proposed $\rho m n k$-landscapes, I added comments about those previous results). But I feel it would be too much for the paper, and I don't see any reason to do so for some parts and not for others. We might also mention this in the future work section.\\
Anyway, I added both figures, and I would be happy to go with the one you like most :)
}

\subsection{Probability for a Solution to be Non-Dominated}
\label{sec:dom_rel}

With a growing number of objectives, the dominance relation becomes less discriminative~\citep{aguirre_many_survey}. Let us consider the comparison between two arbitrary solutions $\bx_1$ and $\bx_2$ on $m$ objectives. Assume that the probability of equal objective values 
can be neglected, and that the comparison with respect to each objective is independent. For each objective there is a $1/2$ probability that $\bx_1$ has a better value for this objective than $\bx_2$, and the same probability applies for the opposite situation. As such, given a problem with $m$ objectives, the probability that one solution dominates the other one is
$1/2^{(m-1)}$.
Thus, as $m$ increases, it becomes more likely that two arbitrary solutions are mutually non-dominated. If the objectives are positively correlated, this probability increases, and if they are negatively correlated this probability decreases~\citep{verel2013}.

As a consequence of the reduced discriminative power of the dominance relation, the probability that a given solution is Pareto optimal increases with the number of objectives.
The probability that all $\mu$ randomly selected pairs of solutions  
are mutually non-dominated is:%
\begin{equation}
\label{eq:nd_pairs}
\prod_{i=1}^{\mu} \left( 1-\frac{1}{2^{m-1}} \right) \; = \; \left( 1-\frac{1}{2^{m-1}} \right)^{\mu}
\end{equation}
%
This theoretical probability is reported in Fig.~\ref{fig:prob_nd_model} for different number of objectives, $m$, and number of pairs, $\mu$.
As such, the probability that all $\mu$ randomly selected pairs of solutions are mutually non-dominated increases with the number of objectives, and decreases with the number of pairs. It becomes very likely that all pairs are mutually non-dominated for problems with $m>15$ objectives, even for large numbers of pairs.
In Fig.~\ref{fig:nk_prob_nd_pairs_m}, we measure this proportion for multi-objective \mbox{NK-landscapes} using $m \in \{2, 3, \ldots, 20\}$. 
In particular, we perform $30$ independent samples per instance for each setting.
Comparing Fig.~\ref{fig:prob_nd_model} with Fig.~\ref{fig:nk_prob_nd_pairs_m}, we observe that the theoretical model fits the experimental data well, although there are some small but significant differences. Through an additional analysis we found that this is caused by small positive or negative correlations between objectives resulting from random (diss)similarities of instance parameters. Such correlations modify the probabilities that two randomly selected solutions are non-dominated w.r.t. the assumed independent objectives model. Since Eq. (1) is exponential, even very small differences of the probabilities that two randomly selected solutions are non-dominated may results in noticeable differences in the probability that all pairs are non-dominated. 

\begin{figure}[!t]
\centering%
\includegraphics[width=0.5\textwidth]{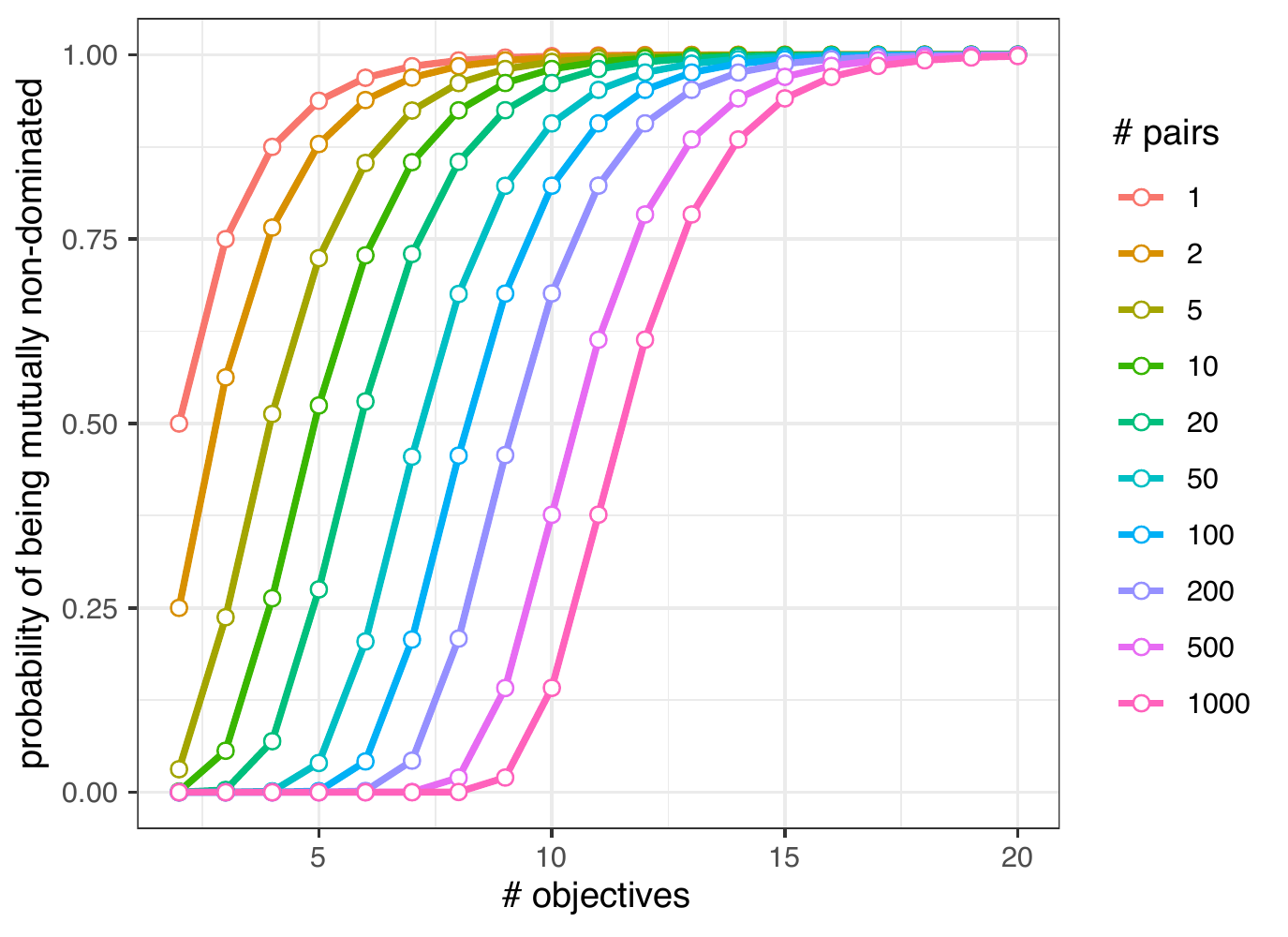}%
\includegraphics[width=0.5\textwidth]{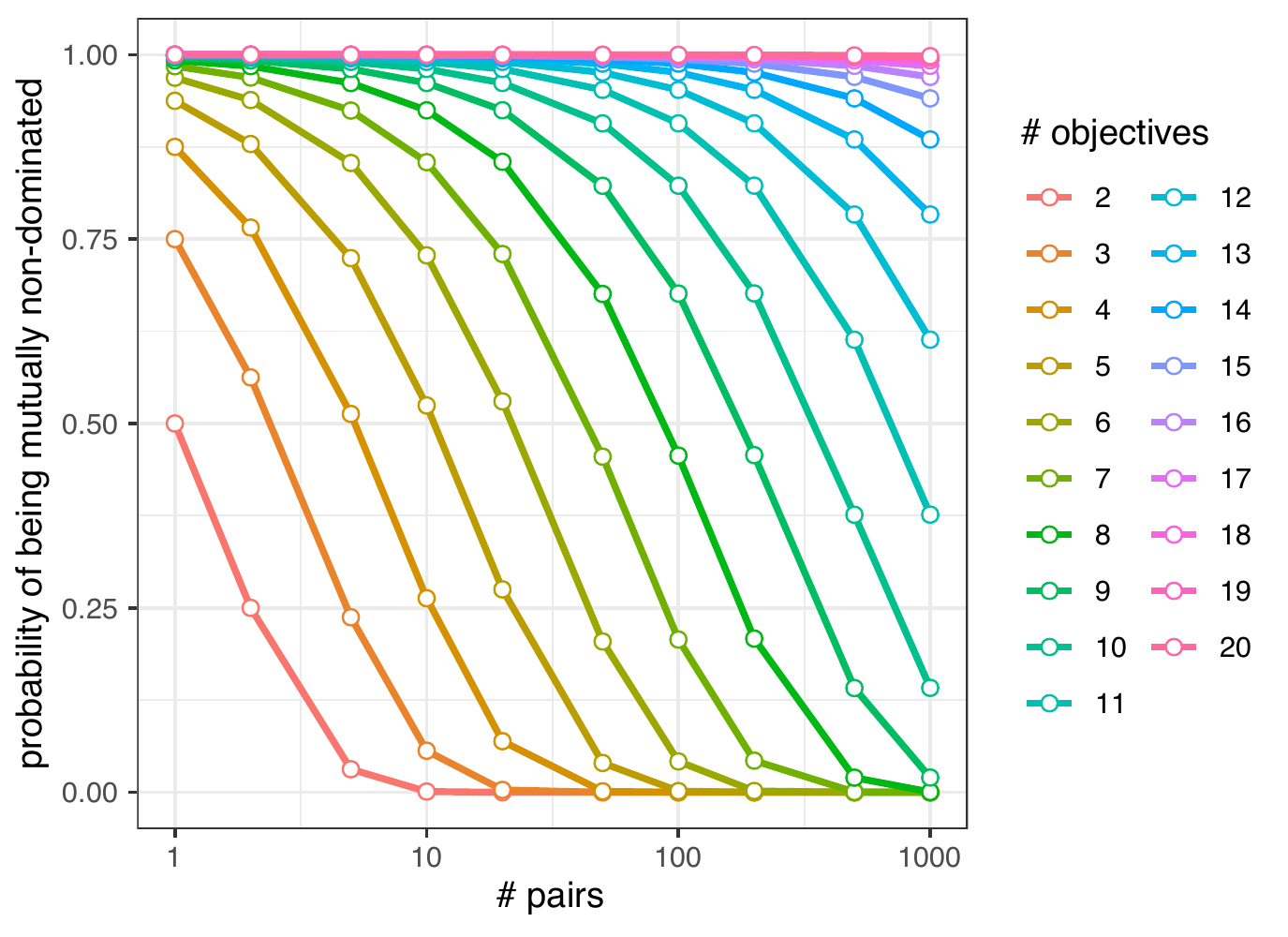}%
\caption{
Probability that all $\mu$ randomly selected pairs of solutions are mutually non-dominated
from the theoretical model given in Eq.~(\ref{eq:nd_pairs}), with respect to the number of objectives $m$ (left), and to the number of pairs $\mu$ (right).}
\label{fig:prob_nd_model}
\end{figure}

\begin{figure}[!t]
\centering%
\includegraphics[width=0.5\textwidth]{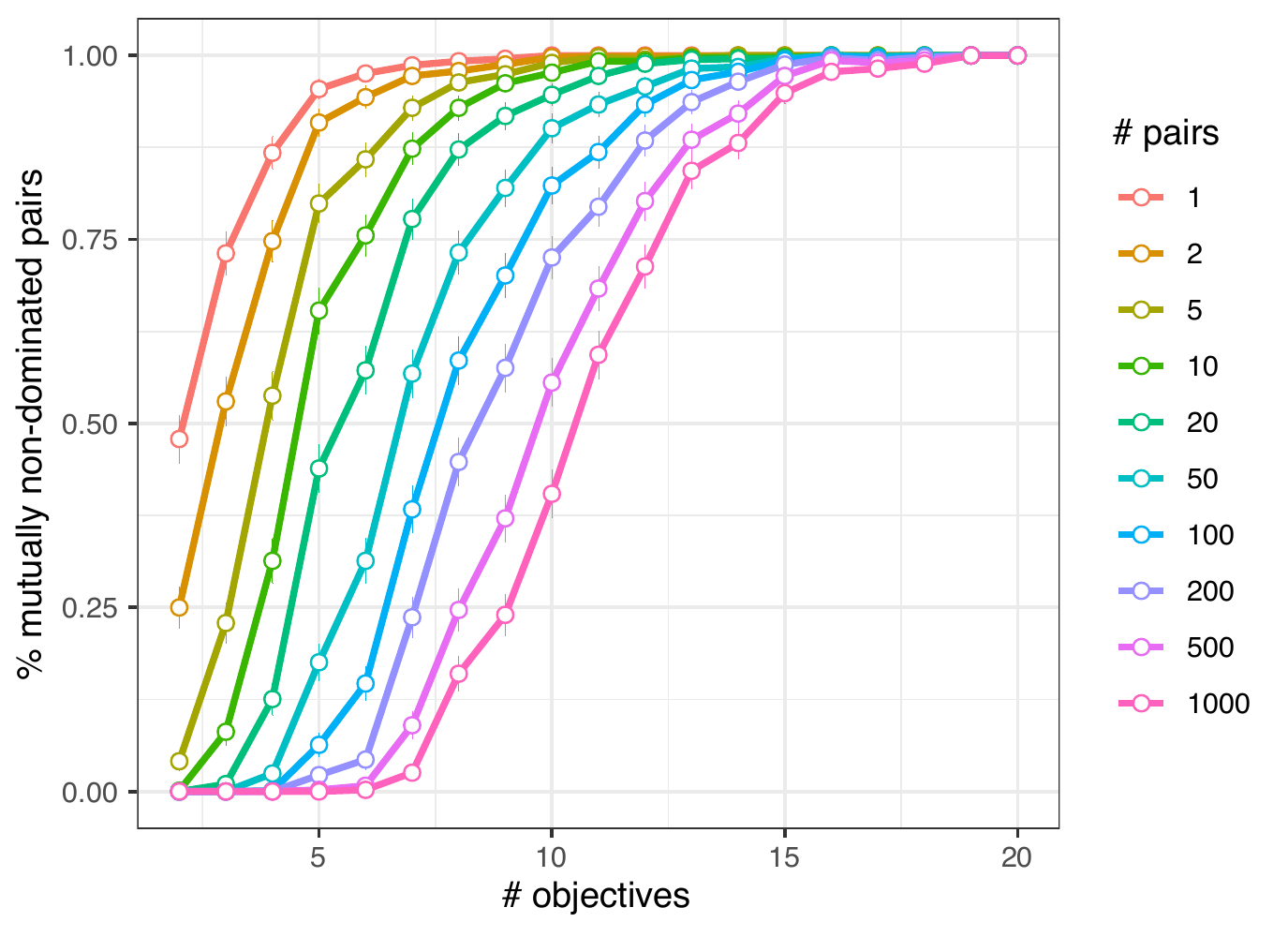}%
\includegraphics[width=0.5\textwidth]{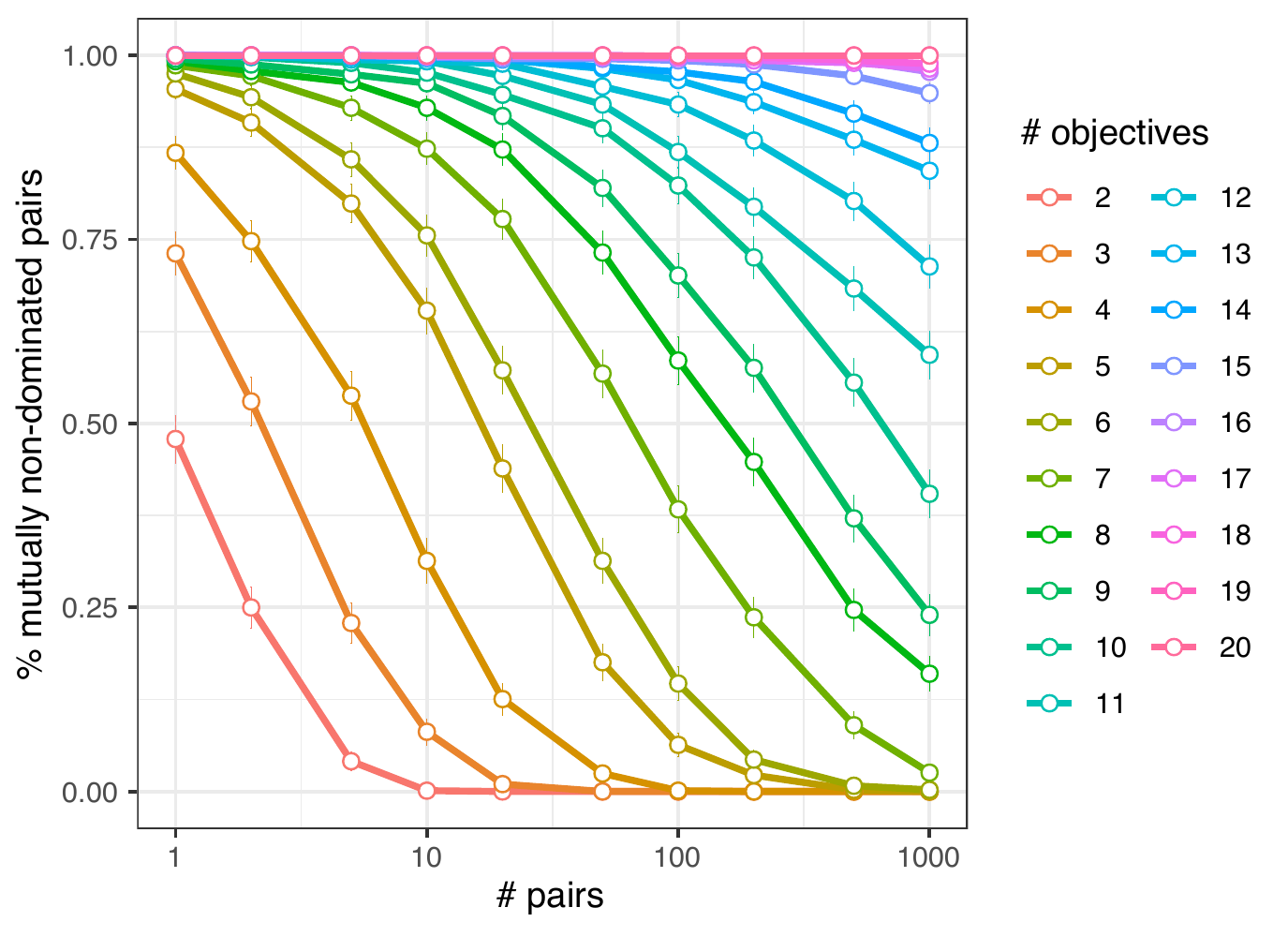}
\caption{
Proportion 
from $\mu$ randomly selected pairs of solutions
that are mutually non-dominated for multi-objective \mbox{NK-landscapes}, with respect to 
the number of objectives $m$ (left), and to the number of solutions $\mu$ (right).}
\label{fig:nk_prob_nd_pairs_m}
\end{figure}

Although the probability for a solution to be non-dominated
in random populations 
might be difficult to derive theoretically due to the dependencies between solutions, we report in Fig.~\ref{fig:prob_nd_nk} empirical results on the proportional number of times a given solution is \textit{not} dominated by any of $\mu$ other solutions, all drawn at random.
%
We consider the same set of multi-objective \mbox{NK-landscapes}. 
The probability for a solution to be non-dominated grows with the number of objectives, but the increase is steeper than that of Fig.~\ref{fig:nk_prob_nd_pairs_m}.
It also reduces with the population size, although its effect is lower than that of the number of objectives.
For $2$ objectives, it ranges from about $70\%$ when a single pair of solutions is considered, to less than $2\%$ when the solution is compared against $1\,000$ others.
By contrast, for problems with more than $12$ objectives, there is over $90\%$ chance that the solution is not dominated by any other, independently of the population size. 

Additionally, we report in Fig.~\ref{fig:nb_nd_nk} the proportional number of non-dominated solutions in populations containing $\mu$ random solutions.
The curves follow a similar trend than that of Fig.~\ref{fig:prob_nd_nk}.
For small populations ($\mu=10$), there is about $30\%$ of non-dominated solutions for $2$-objective problems, $50\%$ for $3$-objective problems, to more than $70\%$ for problems with $4$ objectives and more,
and even more than $90\%$ for problems with $6$ objectives and more.
For larger populations, this proportion decreases but remains higher than $90\%$ for problems with $12$ objectives or more, independently of the population size. 


\begin{figure}[!t]
\centering%
\includegraphics[width=0.5\textwidth]{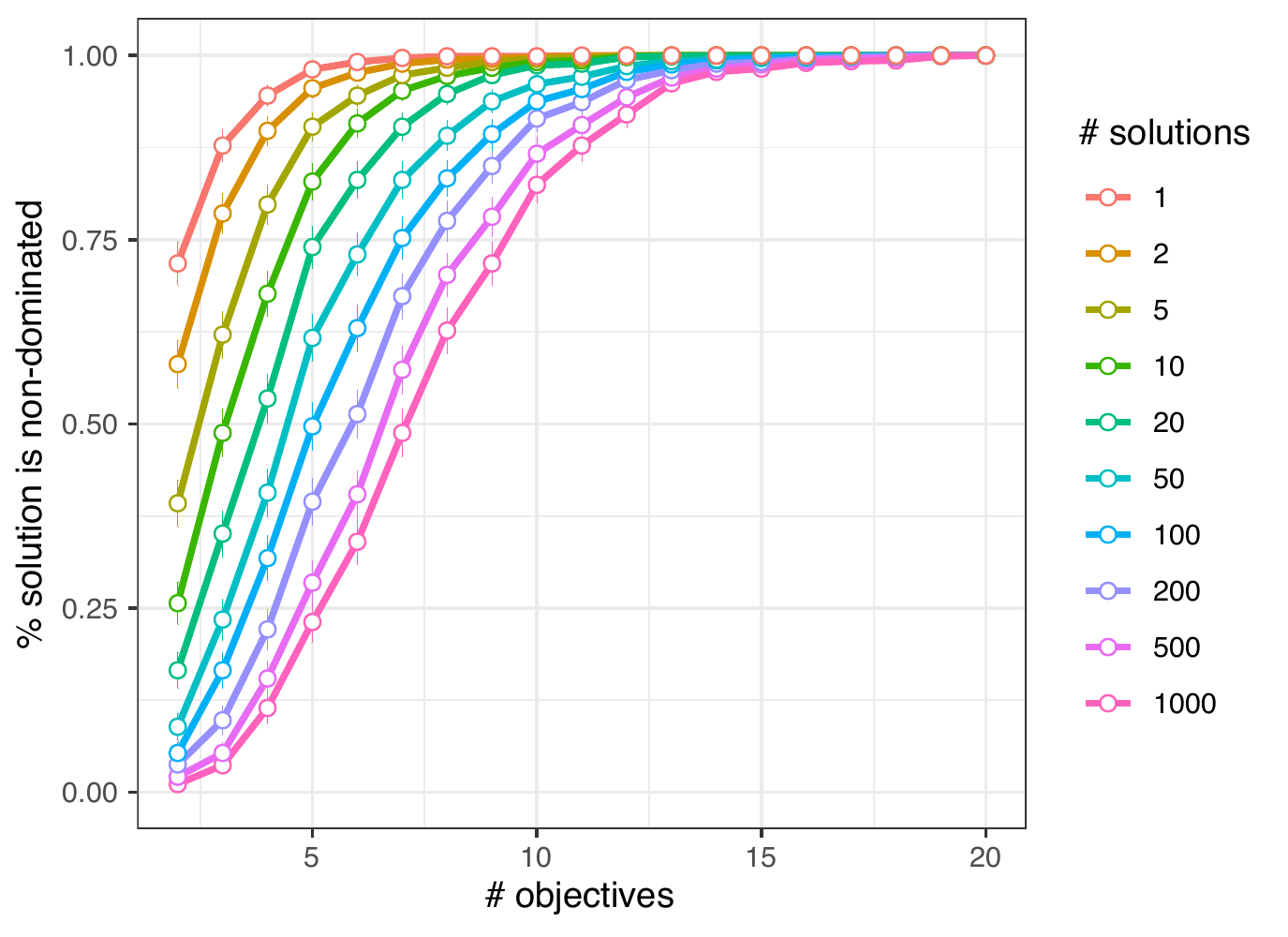}%
\includegraphics[width=0.5\textwidth]{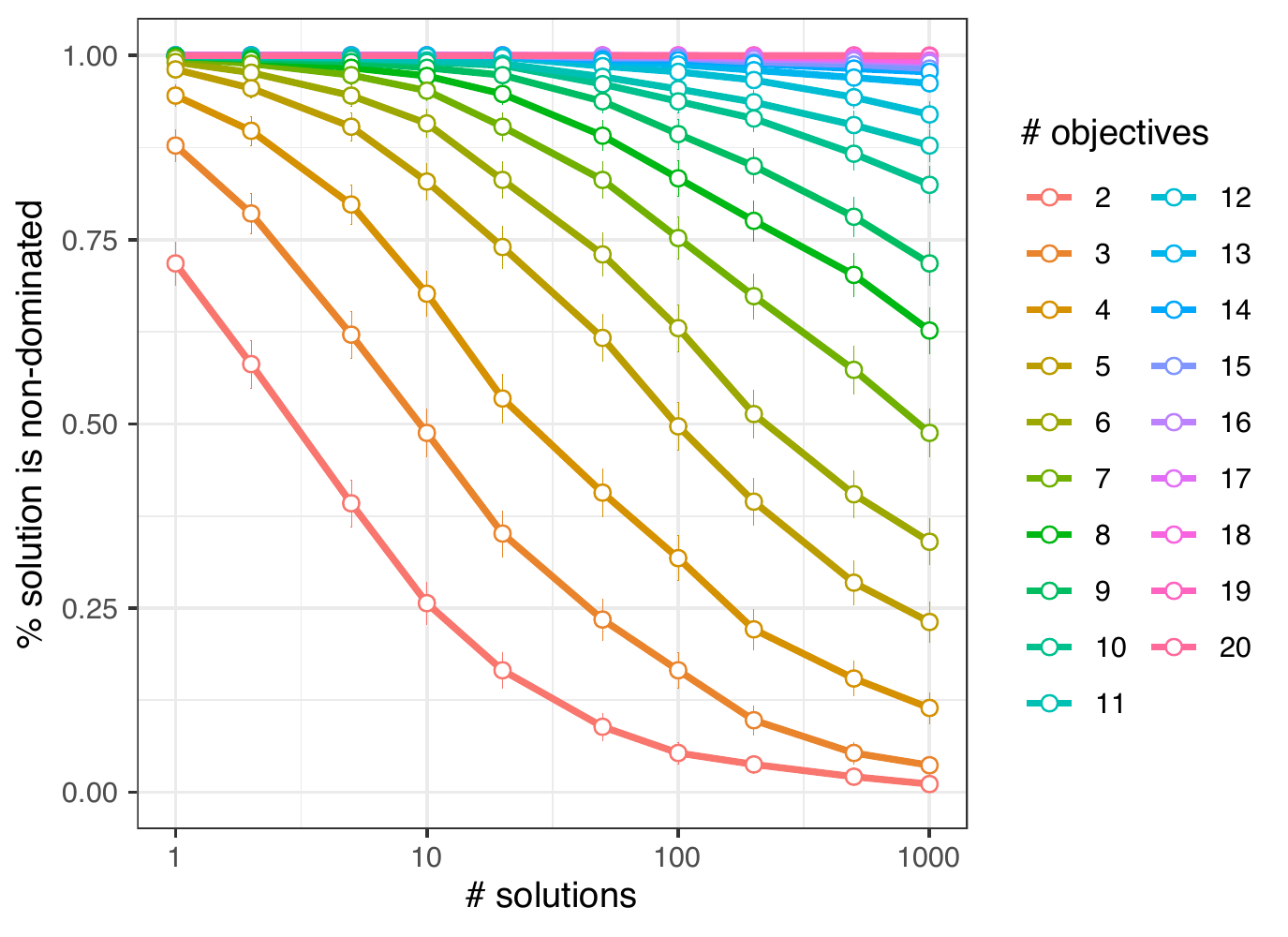}%
\caption{
Proportion of solutions that are not dominated by any of $\mu$ random solutions for multi-objective \mbox{NK-landscapes}, with respect to the number of objectives~$m$~(left), and to the number of solutions $\mu$ (right).
}
\label{fig:prob_nd_nk}
\end{figure}

\hide{[remove, probably more complicated than this]
and the expected number of non-dominated solutions in this population is then
$$
\mu \; \Big( 1-\frac{1}{2^m} \Big)^{\mu-1}
$$
\RICHARD{Maybe remove here but then elaborate on  in the future work section.}
Although the expected number of non-dominated solutions in random populations might be difficult to derive theoretically due to the dependencies between solutions, we report in Fig. 5 the measured number of non-dominated solutions in random populations for multi-objective \mbox{NK-landscapes}.
}

\begin{figure}[!t]
\centering%
\includegraphics[width=0.5\textwidth]{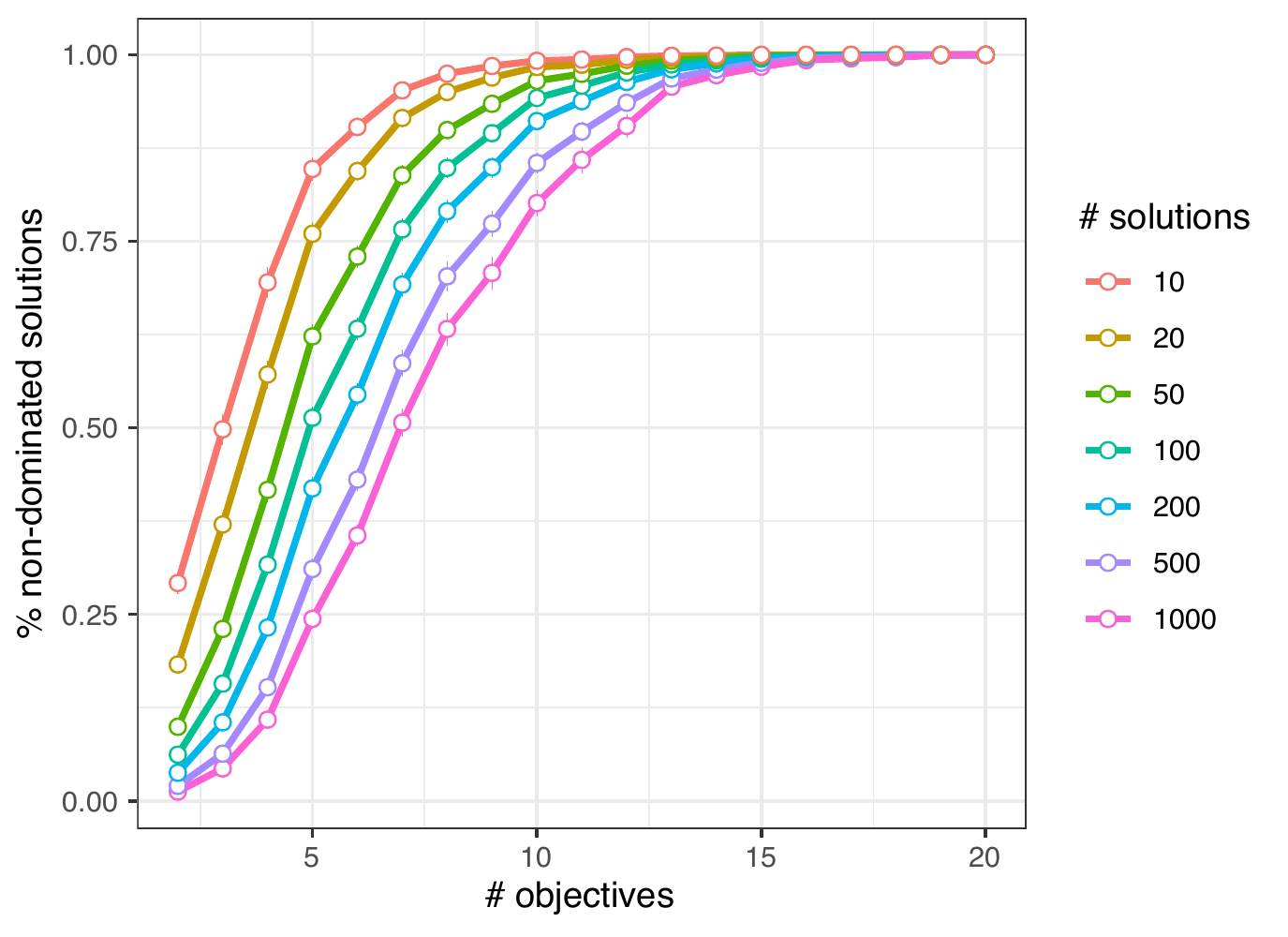}%
\includegraphics[width=0.5\textwidth]{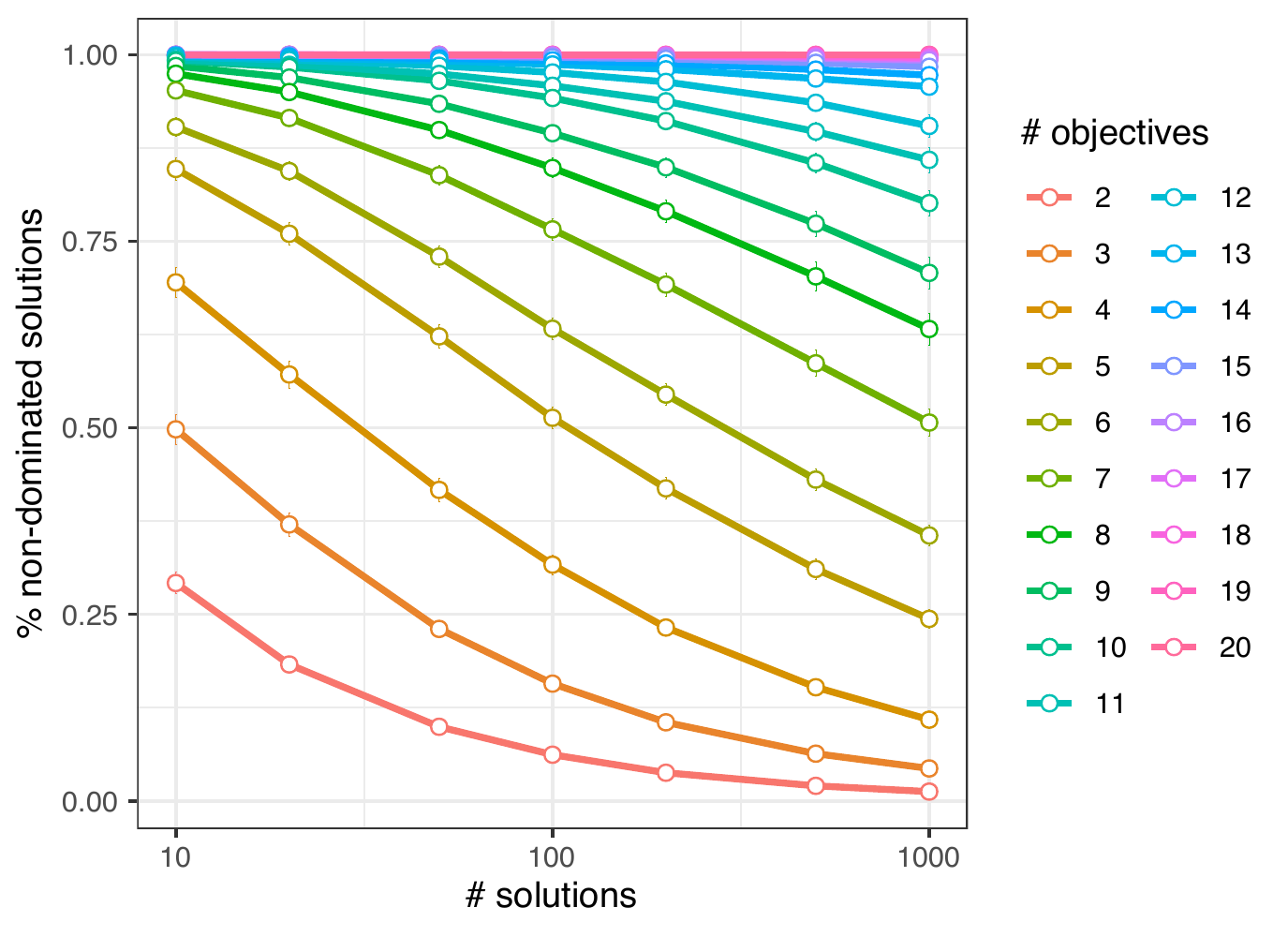}%
\caption{Proportion of non-dominated solutions in random populations 
for multi-objective \mbox{NK-landscapes}, with respect to the number of objectives~$m$ (left), and to the number of solutions $\mu$ (right).
}
\label{fig:nb_nd_nk}
\end{figure}

\hide{
\todo{A: The formula looks good to me, but when plotting the values for some $m$ and~$\mu$ (Fig.~\ref{fig:theo_nd}), the behavior does not precisely match what we observe in practice for $\rho$MNK-landscapes (Fig.~\ref{fig:nk_nd}), nor for random points (Fig.~\ref{fig:theo_nd_sample}).}
\todo{A:
The probability for a solution to be dominated or not by 1 solution, or by $\mu$ solutions, seems relevant.
However, I have the feeling that the formula does not directly translate to the expected number of non-dominated solutions due to dependencies between solutions from the set.
How about measuring the probability for $1$ solution to be non-dominated $+$ the probability for $2$ solutions to be (jointly) non-dominated $+ \dots$ until $\mu$, and then averaging?
See the attempt reported in Fig.~\ref{fig:theo_nd2}, but still unsure about this... 
}
\todo{A: In \ref{fig:theo_nd3} is simply reported the probability that one solution is non-dominated by $\mu$ solutions side-by-side for the theoretical model and for $\rho$MNK-landscapes.}
\begin{figure}[htbp]
\centering%
\includegraphics[width=0.5\textwidth]{fig-arnaud/prop_nd_m.pdf}%
\includegraphics[width=0.5\textwidth]{fig-arnaud/prop_nd_mu.pdf}\\
\includegraphics[width=0.5\textwidth]{fig-arnaud/n_nd_m.pdf}%
\includegraphics[width=0.5\textwidth]{fig-arnaud/n_nd_mu.pdf}
\caption{Proportion (top) and number (bottom) of non-dominated solutions in random populations with respect to the number of objectives ($m$, left) and to the population size ($\mu$, right).
$30$ random $\rho$MNK-landscapes with $n=10$, $k=2$ and $\rho=0.0$, and $10$ random populations are considered for each setting.}
\label{fig:nk_nd}
\end{figure}
\begin{figure}[htbp]
\centering%
\includegraphics[width=0.5\textwidth]{fig-arnaud/theo-vs-rnd-points/theo_prob_m.pdf}%
\includegraphics[width=0.5\textwidth]{fig-arnaud/theo-vs-rnd-points/theo_prob_mu.pdf}\\
\includegraphics[width=0.5\textwidth]{fig-arnaud/theo-vs-rnd-points/theo_exp_m.pdf}%
\includegraphics[width=0.5\textwidth]{fig-arnaud/theo-vs-rnd-points/theo_exp_mu.pdf}
\caption{Probability of being non-dominated (top) and expected number of non-dominated solutions (bottom) from our theoretical model with respect to the number of objectives ($m$, left) and to the population size ($\mu$, right) for random points in $(0,1)^m$.}
\label{fig:theo_nd_sample}
\end{figure}
\begin{figure}[htbp]
\centering%
\includegraphics[width=0.5\textwidth]{fig-arnaud/theo_prob_m.pdf}%
\includegraphics[width=0.5\textwidth]{fig-arnaud/theo_prob_mu.pdf}\\
\includegraphics[width=0.5\textwidth]{fig-arnaud/theo_exp_m.pdf}%
\includegraphics[width=0.5\textwidth]{fig-arnaud/theo_exp_mu.pdf}
\caption{Probability of being non-dominated (top) and expected number of non-dominated solutions (bottom) from our theoretical model with respect to the number of objectives ($m$, left) and to the population size ($\mu$, right) for model $\big(1-\frac{1}{2^m}\big)^{\mu-1}$.}
\label{fig:theo_nd}
\end{figure}
\begin{figure}[htbp]
\centering%
\includegraphics[width=0.5\textwidth]{fig-arnaud/theo-v1/theo_prob_m.pdf}%
\includegraphics[width=0.5\textwidth]{fig-arnaud/theo-v1/theo_prob_mu.pdf}\\
\includegraphics[width=0.5\textwidth]{fig-arnaud/theo-v1/theo_exp_m.pdf}%
\includegraphics[width=0.5\textwidth]{fig-arnaud/theo-v1/theo_exp_mu.pdf}
\caption{Probability of being non-dominated (top) and expected number of non-dominated solutions (bottom) from our theoretical model with respect to the number of objectives ($m$, left) and to the population size ($\mu$, right) for model $\frac{1}{\mu} \sum_{i=1}^{\mu} \big(1-\frac{1}{2^m}\big)^{i}$.}
\label{fig:theo_nd2}
\end{figure}
\begin{figure}[htbp]
\centering%
\includegraphics[width=0.5\textwidth]{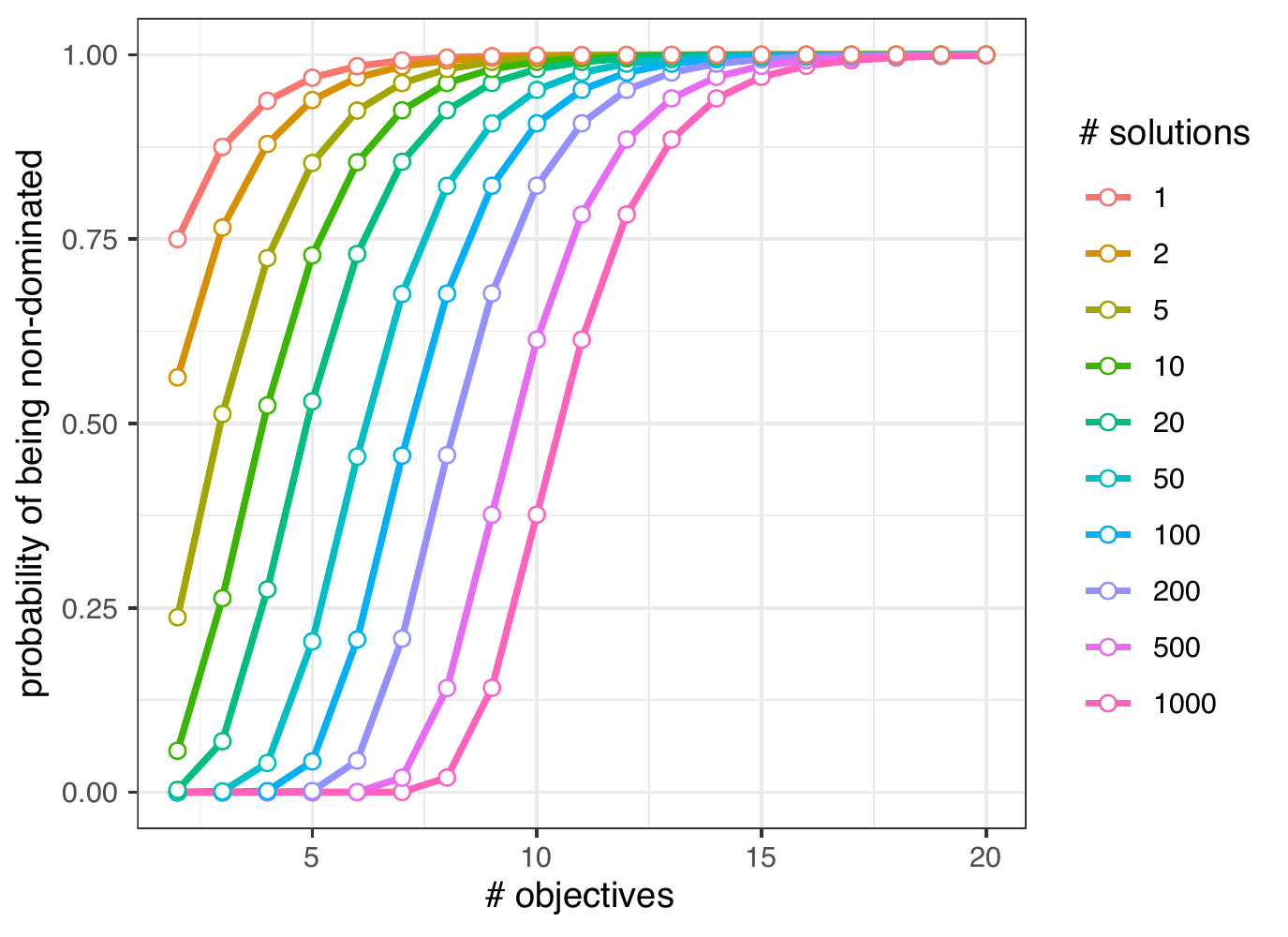}%
\includegraphics[width=0.5\textwidth]{fig-arnaud/nk_prob_nd_m.pdf}\\
\includegraphics[width=0.5\textwidth]{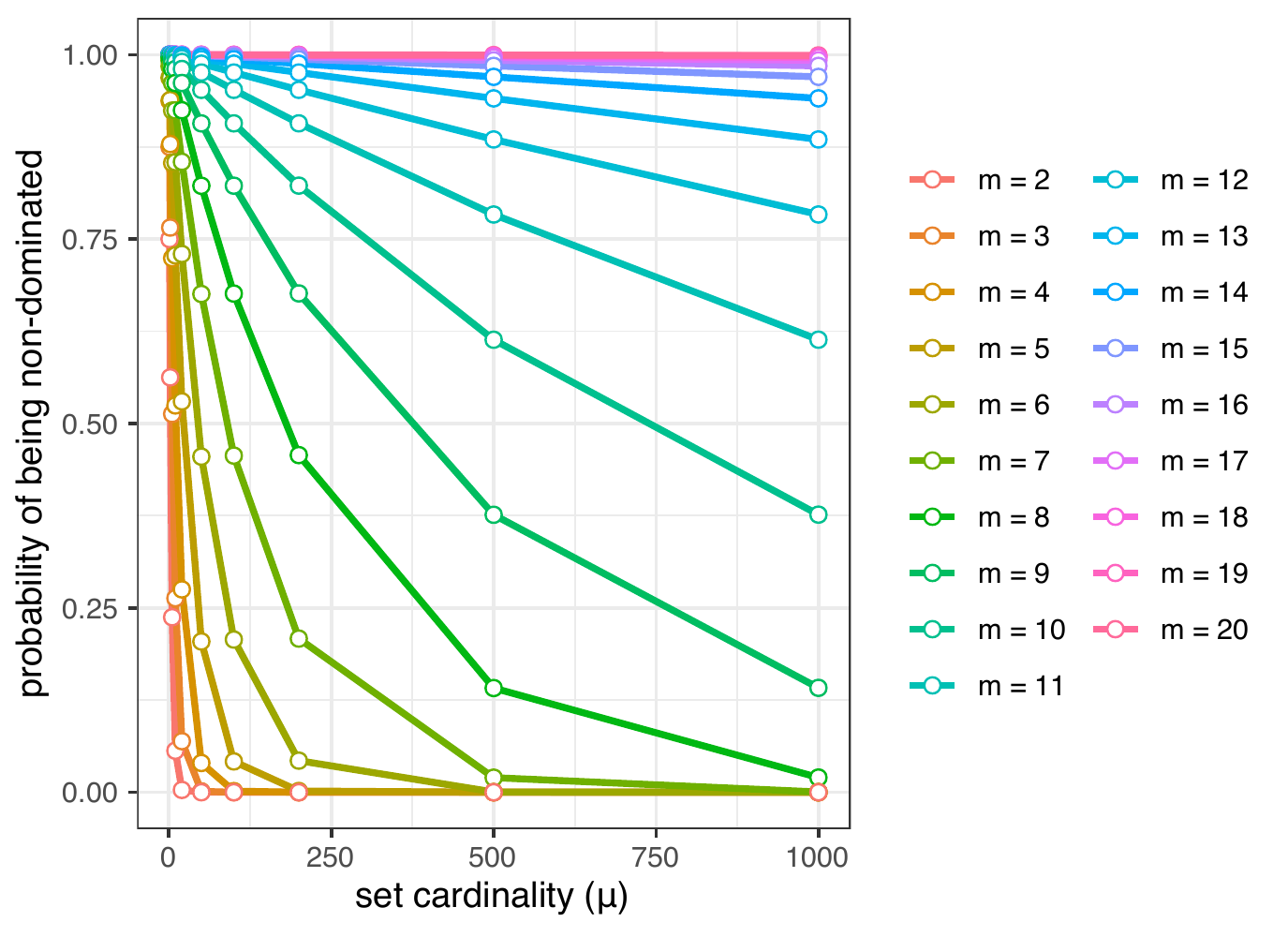}%
\includegraphics[width=0.5\textwidth]{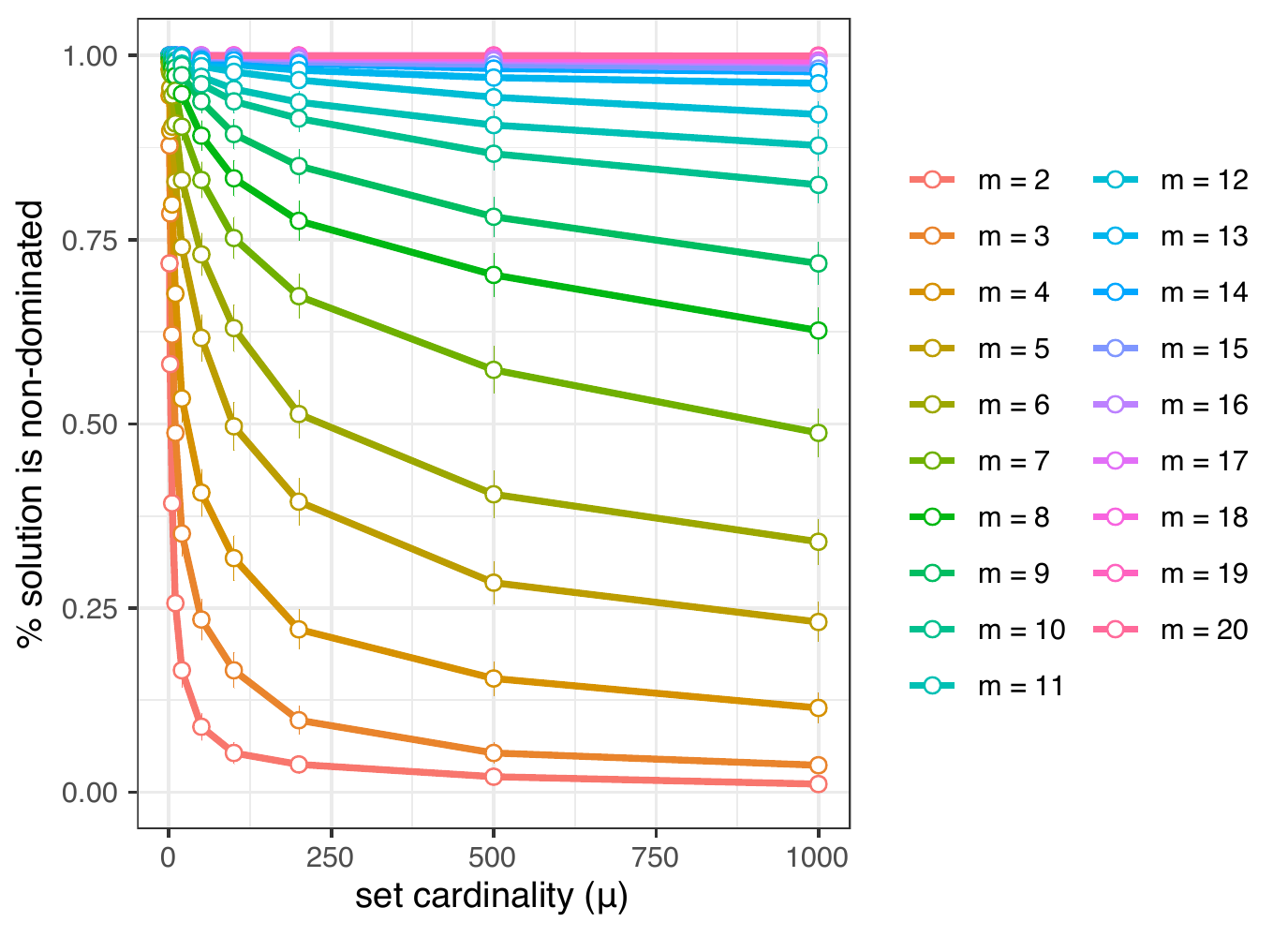}\\
\includegraphics[width=0.5\textwidth]{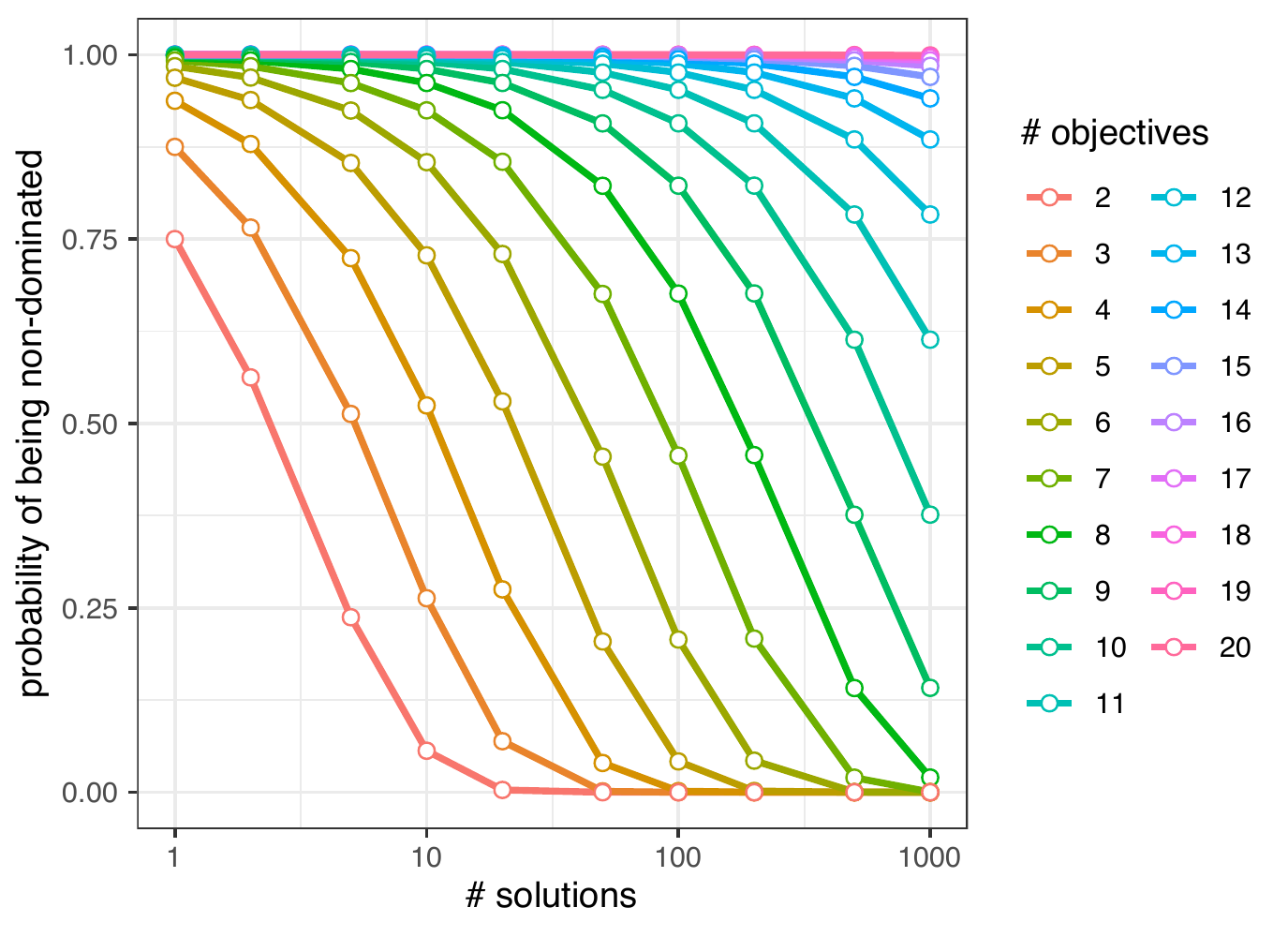}%
\includegraphics[width=0.5\textwidth]{fig-arnaud/nk_prob_nd_mu_log.pdf}\\
\caption{
Probability of being non-dominated by $\mu$ solutions for the theoretical model (left) and for $\rho$MNK-landscapes (right)
with respect to the number of objectives ($m$, top) and to the set cardinality ($\mu$, bottom); the model is \mbox{$\big(1-\frac{1}{2^m}\big)^{\mu}$}.
}
\label{fig:theo_nd3}
\end{figure}
}

\subsection{Visualizing the Objective Space}\label{dimensionalityOS}

In the continuous case, the size of the Pareto set does not grow formally, because it is infinite
already for two objectives. However, the dimensionality of the objective space and the Pareto front grows. This means that more points or directions are typically required to obtain an approximation of the Pareto front or the possible search directions, respectively. 

An additional challenge arises as the dimensionality of the objective space increases, namely the visualization of an optimizer's search performance~\citep{tuvsar2015visualization}. Scatter plots are a popular choice for problems with up to $m=3$ objectives but, beyond that, lead to a loss of Pareto dominance relation information between solutions because approaches rely on mapping a higher-dimensional space to a 2 or 3-dimensional space. Alternatively, the use of pair-wise plots becomes quickly overwhelming (requiring $(m^2-m)/2$ plots). Visualization techniques designed for higher-dimensional spaces include parallel coordinate plots and heatmap-based visualization techniques, potentially coupled with dimensionality reduction methods. An alternative is to use inherently visualisable test problems, such as distance-based multi/many-objective problems~\citep{koppenetal2005,distanceBasedProblem2019}, to visualize the search behavior in its native domain. If visualization of the Pareto set/front is not sufficient to select a single final solution, then the decision maker (DM) can use a decision support system to interactively learn about the problem and finally arrive at a solution. Such methods are also referred to as navigation methods~\citep{allmendinger2017navigation}.


\subsection{Number of Preference Parameters}\label{NrOfPreferenceParam}

In numerous multi-objective optimization methods, the DM is expected to express his/her preferences, e.g. in the form of weighting coefficients or reference levels (aspiration levels/goals) specified for each objective~\citep{miettinen2012nonlinear}. The number of such preference parameters grows just linearly with the number of objectives. However, for certain methods, such as AHP~\citep{saaty1987analytic}, preference parameters are expressed with respect to each pair of objectives; in this case, their number grows quadratically. 

In the non-convex case, especially in multi-objective combinatorial optimization, non-linear scalarization techniques are more useful. 
\citet{PasSer84} introduce a scalar optimization problem that can be formulated using a general non-linear scalarizing functional that we will discuss in Section \ref{s-furtherscal}. Corresponding non-linear scalarization methods, including applications in economics, are studied by \citet{BonCor88,BonCre07} and \citet{luen92b,luen92}. The parameters in the scalar optimization problem by \citet{PasSer84} are elements $k \in \mathbb{R}^m$ and $a \in \mathbb{R}^m$, see Eq.~(\ref{PS}) in Section~\ref{s-furtherscal}. Furthermore, in the $\epsilon$-constraint problem by 
\citet{Hai71,ChaHai83}, parameters $\epsilon = (\epsilon_1, \ldots , \epsilon_m) \in \mathbb{R}^m$ are involved. The scalarization method by \citet{PasSer84} and the $\epsilon$-constraint problem are used for deriving parameter-based adaptive algorithms in 
\citet{eich08}, compare also \citet{Pol76}. By varying the parameters $k \in \mathbb{R}^m$ and $a \in \mathbb{R}^m$ in Eq.~(\ref{PS}), adaptive algorithms generate an equidistant approximation of the set of Pareto optimal elements to the MO problem by solving scalarized problems.

If the number of objective functions $m$ is increasing, then the parameter vectors $k$ and $a$ belong to higher dimensional spaces, i.e., the number of preference parameters increases linearly.

\subsection {Probability of Having Heterogeneous Objectives}\label{HeterogObj}
As the number of objectives increases and one is faced with a many-objective problem, the assumption that all objective functions share the same properties becomes less valid. In other words, it becomes more likely that the objectives are heterogeneous meaning the objectives may differ in, for example, complexity (e.g. linear vs non-linear, unimodal vs multimodal), evaluation efforts in terms of time, costs, or resources, available information (expensive black box vs given closed-form function), or determinism (stochastic vs deterministic). 

Examples of problems with heterogeneous objectives include real-world problems where some objectives must be evaluated using time-consuming physical experiments in the laboratory, while other objectives are evaluated using relatively quicker calculations on the computer. Problems that fall into this category can be found, for instance, in drug design~\citep{small2011efficient} and engineering~\citep{terzijska2014}. In the former,
the objectives related to potency and side effects of drugs need to be validated experimentally, while the cost of manufacturing a drug or number of drugs in a drug combinations can be obtained quickly. 

\begin{figure}[!t]
\centering
\includegraphics[width=0.5\textwidth]{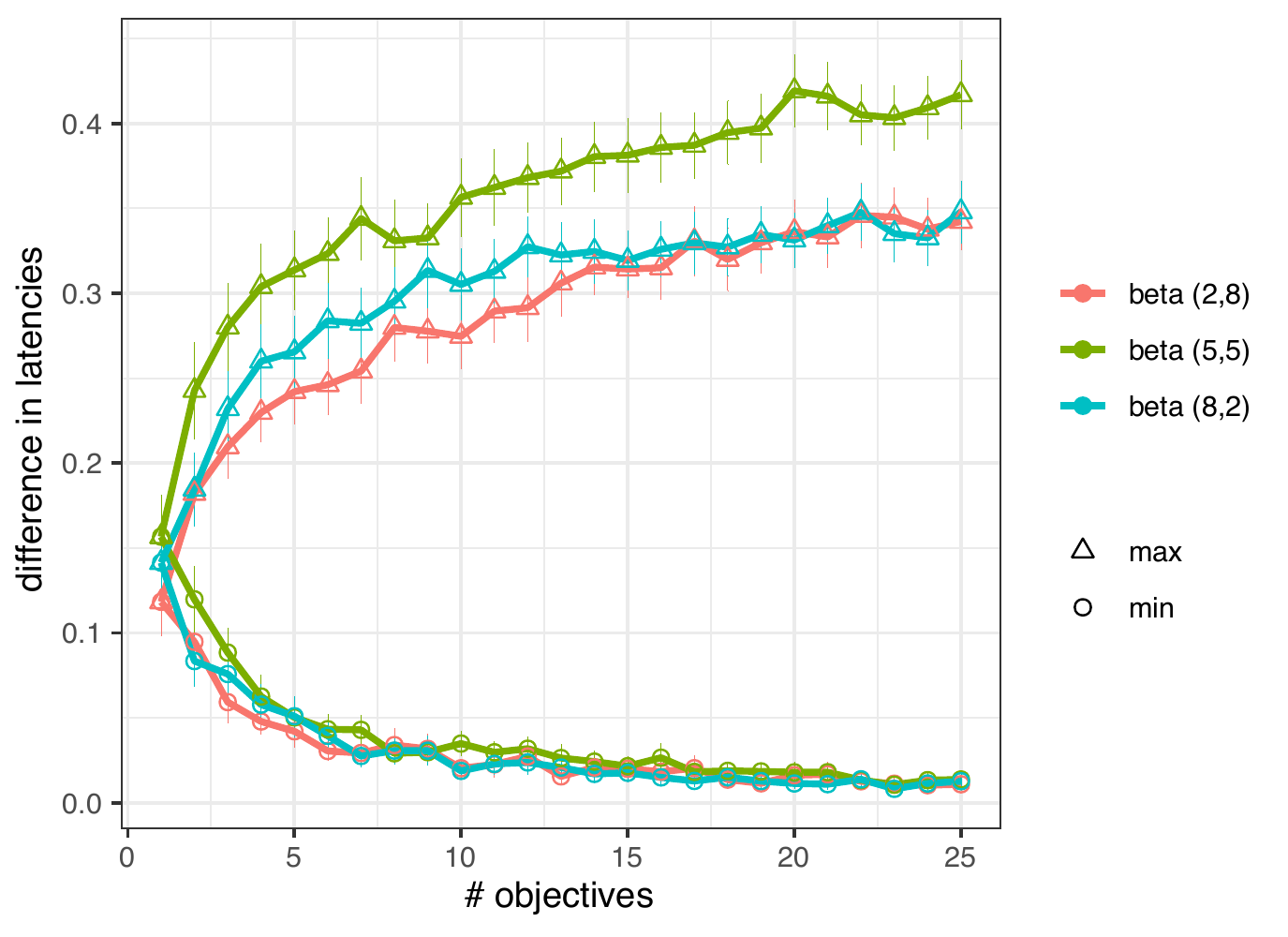}%
\includegraphics[width=0.5\textwidth]{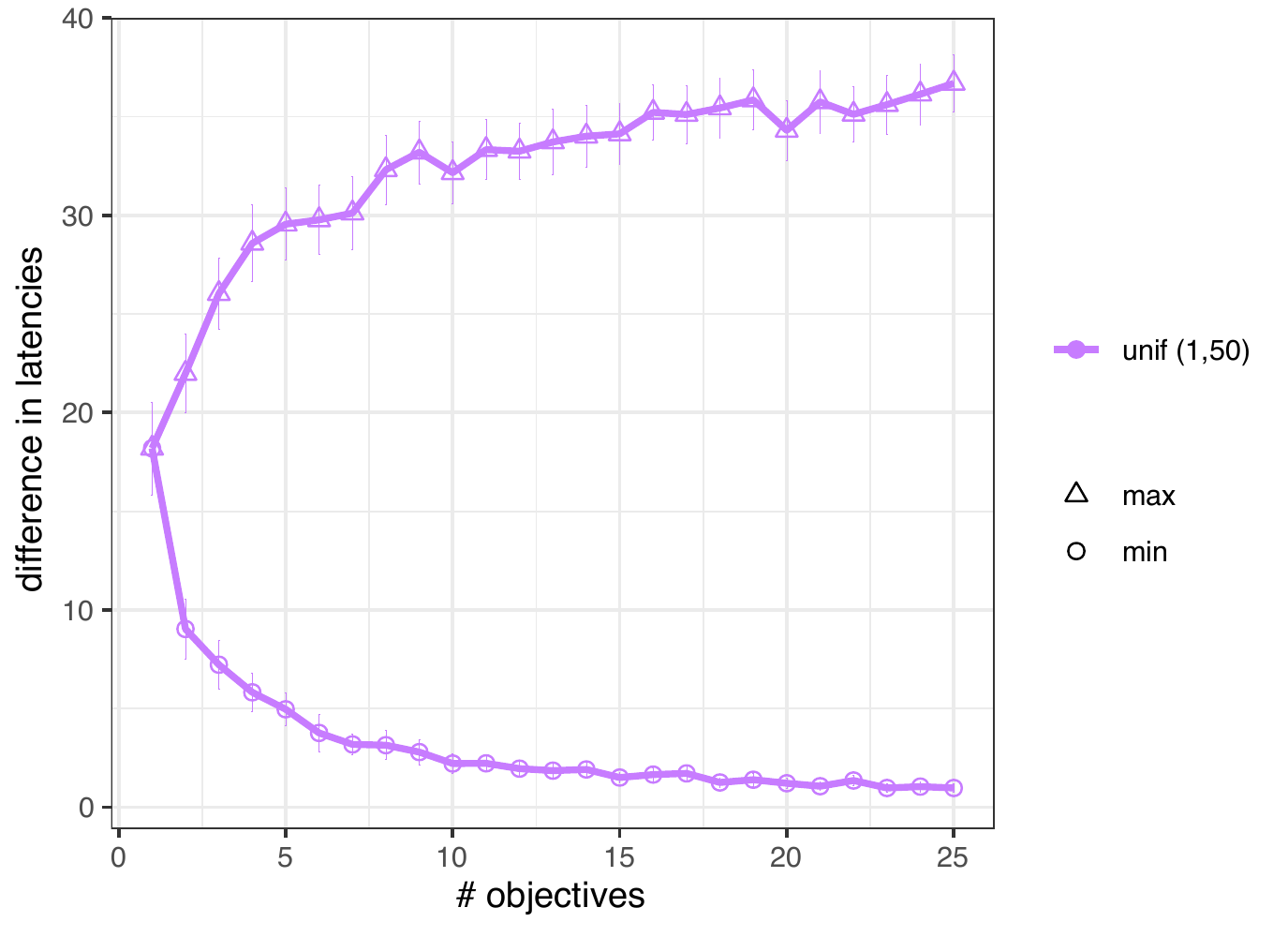}%
\caption{Mean and standard error of minimum (circles) and maximum differences (triangle) in objective evaluation duration (y-axis) as a function of the number of objectives, $m$ (x-axis). The left plot shows this data for three Beta distributions, and the right plot for a uniform distribution.}
\label{fig:HetOb}
\end{figure}

The first work we are aware of on the topic of heterogeneous objectives was by~\citet{allmendinger2013hang}. This work was motivated by a real-world scenario where the objective function components were of different ``latency'' (evaluation times). This initial study was then extended in~\citep{allmendinger2015delays}, where different strategies for coping with latencies in a bi-objective problem were proposes and analyzed. Recently, more research has been carried out on the topic of heterogeneous objectives including the application of surrogate-assisted methods~\citep{chugh2018surrogate,xilu2020}, and non-evolutionary methods~\citep{thomann2019trust_PhD}. The topic on heterogeneous functions was also the focus of a working group at a Dagstuhl seminar in 2015~\citep{eichfelder4}. 

To motivate and further raise awareness of the heterogeneous objectives, we will investigate the relationship between the number of objectives in a problem and the degree of heterogeneity induced by them. For the purpose of this experiment, we limit ourselves to heterogeneity in form of latencies (evaluation times) across the objectives. Assuming a problem with a certain number of objectives, each being associated with an evaluation duration drawn from a given distribution, we want to know what is the mean minimum and maximum difference in evaluation times among these objectives. We repeated this experiment for different numbers of objectives (1-25 objectives as indicated on the x-axis) and four distributions to drawn the evaluation durations from. Each configuration was repeated 100 times, and the mean and standard error of the minimum and maximum differences in evaluation durations are plotted in Figure~\ref{fig:HetOb}. As the considered distributions, we considered (i)~three different Beta distributions --- $\texttt{beta}(2,8)$ (skewed to the right), $\texttt{beta}(8,2)$ (skewed to the left), and $\texttt{beta}(5,5)$ (symmetric), each defined on the interval [0,1] --- allowing us to simulate conveniently skewness in the evaluation of durations of the objectives, and (ii)~one uniform distribution defined on the interval [1,50]. The Beta distribution has also been used extensively in the literature to quantify the duration of tasks (see seminal paper of~\citet{PERT_1959}) in different contexts. The purpose of using the uniform distribution is to have a baseline to compare against.  

It can be seen from the left plot of Figure~\ref{fig:HetOb} that, regardless of the skewness and probability distribution, the mean minimum and maximum difference in evaluation durations of objectives starts roughly from the same level for a bi-objective problem ($m=2$) with the two mean differences becoming exponentially more and more distinct as the problem becomes many-objective. This pattern is expected since it becomes more likely that the evaluation duration of a new objective (as sampled from the Beta distribution) is either more similar or distinct to the evaluation duration of an existing objective. Perhaps more surprisingly is the asymmetry between the minimum and maximum difference with increasing $m$: While the mean minimum difference is hardly affected by the choice of probability distribution (with the difference flattening quickly from around $m>15$ objectives), there is a statistical difference between the Beta distributions when considering the mean maximum difference; in particular, the mean maximum difference associated with the symmetric distribution ($\texttt{beta}(5,5)$) increases faster than with the two skewed distributions. This pattern stems from the fact that its more likely to sample extreme values for the evaluation duration with the symmetric distribution. Consequently, the gap between the mean minimum and maximum difference in evaluation times is even greater in the case of the uniform distribution (right plot of Figure~\ref{fig:HetOb}), where the probability of sampling any possible evaluation time is equal within the interval [1,50].

Having observed that the algorithm choice is affected amongst others by latency (see e.g.~\citep{allmendinger2013hang,allmendinger2015delays,chugh2018surrogate,xilu2020}), this experiment indicates that knowing about the distributions of latency (evaluation times) can be used in the selection and design of novel algorithms to cope with heterogeneous objectives.

\begin{figure}
\centering
\begin{minipage}{0.65\textwidth}
  \centering
  \includegraphics[width=\textwidth]{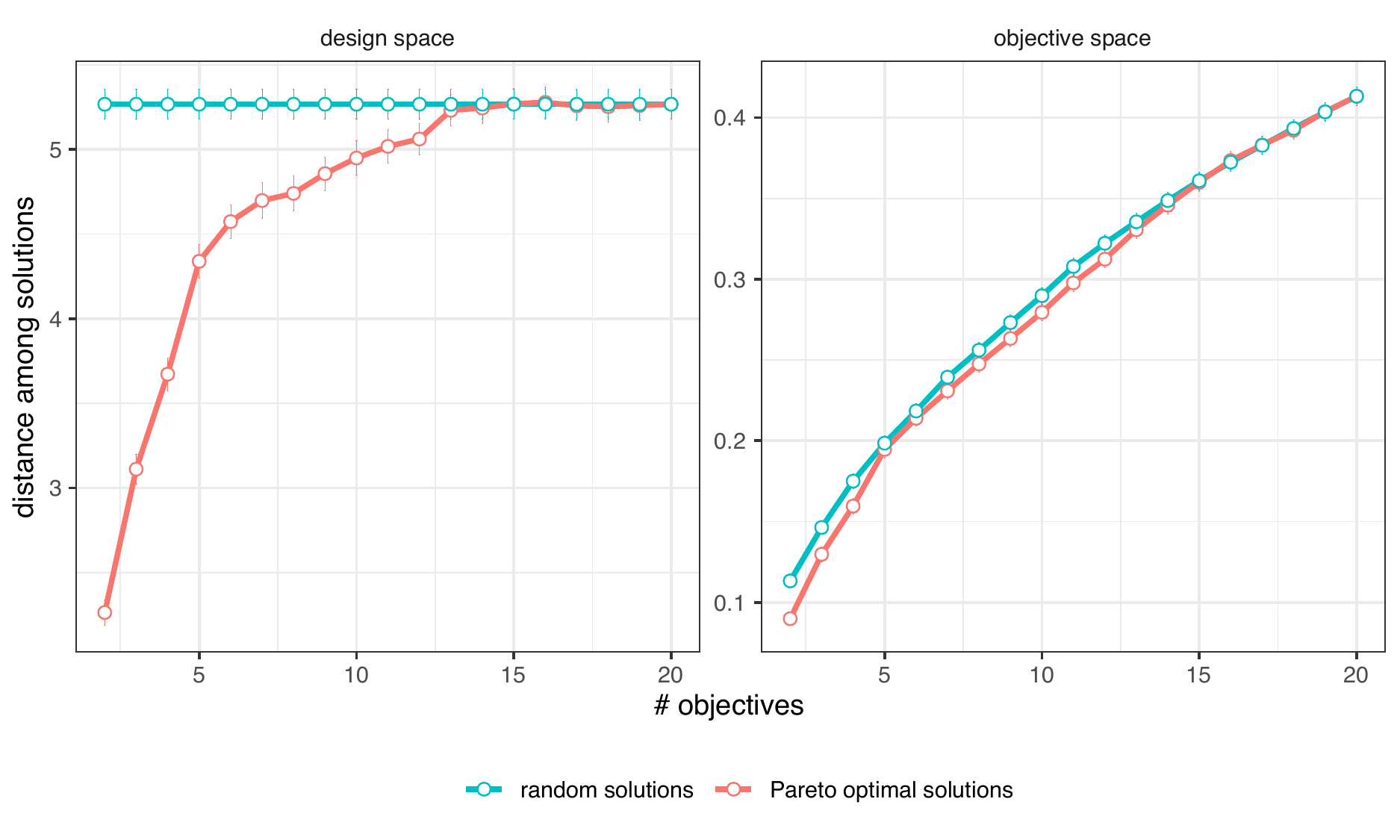}
  \caption{Distance among solutions from the whole search space and from the Pareto set for multi-objective \mbox{NK-landscapes} with respect to the number of objectives.}
  \label{fig:dist_sol}
\end{minipage}%
\hfill%
\begin{minipage}{0.325\textwidth}
  \centering
  \includegraphics[width=\textwidth]{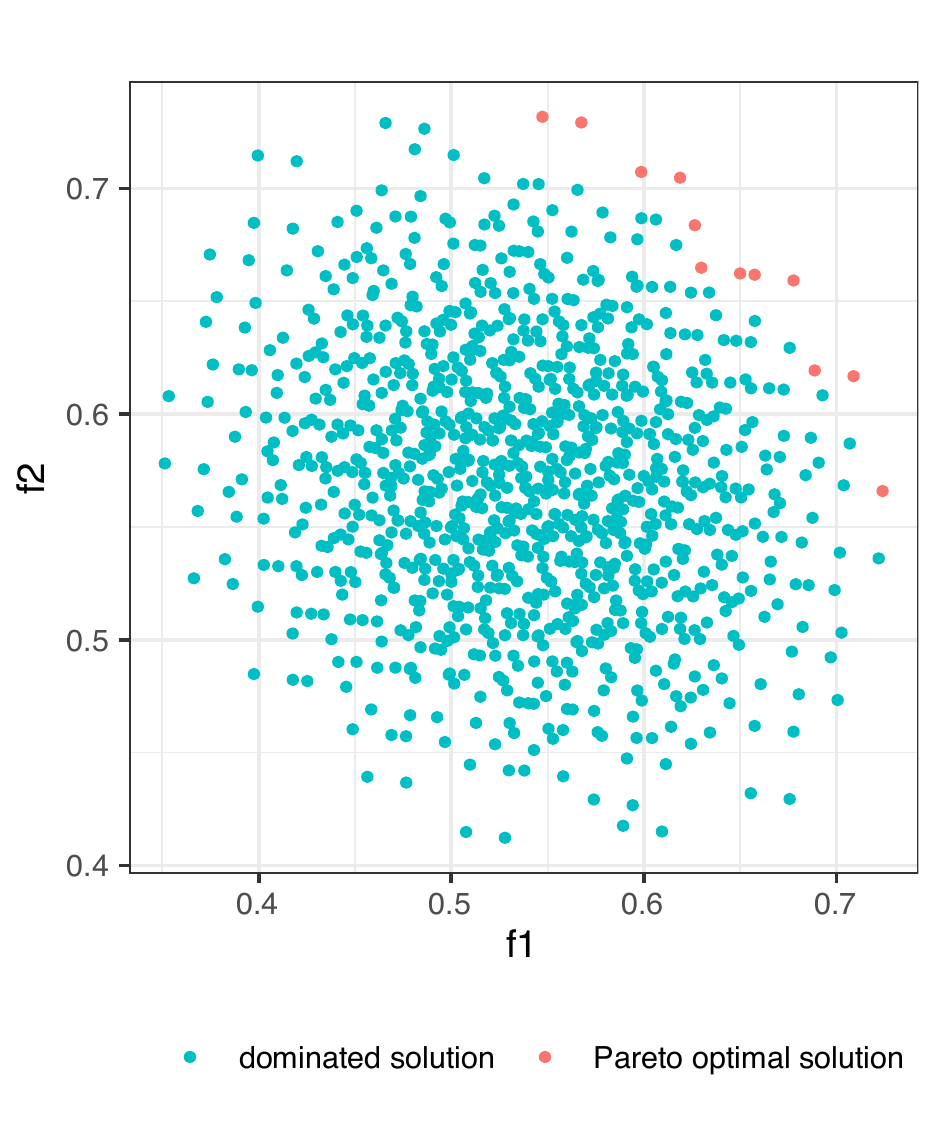}%
  \caption{Objective space of a 
  multi-objective NK-landscape instance with $m=2$ objectives.}
  \label{fig:obj_space}
\end{minipage}
\end{figure}

\subsection{Distance between Solutions}\label{distanceBetweenSolutions}

The distance between pairs of solutions randomly selected from the whole search space and from the Pareto set is reported in Fig.~\ref{fig:dist_sol}.
We consider multi-objective \mbox{NK-landscapes} with $m \in \set{2, 3, \ldots, 20}$.
For each instance, we select $30$ random pairs of solutions, and we report their Hamming distance in the design space and  their Euclidean distance in the objective space.
The Hamming distance among random solutions does not depend on the number of objectives, of course. However, the Hamming distance among Pareto optimal solutions increases significantly with the number of objectives, $m$. For $m>13$ objectives, the expected distance among Pareto optimal solutions matches the one from random solutions. This comes as no surprise since most solutions are Pareto optimal in such high-dimensional objective spaces, as already mentioned in Section~\ref{sec:pos}. However, when few objectives are considered, Pareto optimal solutions are much closer than random solutions.
%
As such, for many-objective problems, good-quality solutions spreading along the Pareto front are far away from each other, so that we argue that few building blocks (if any) might actually be exploited by (blind) recombination.

When looking at the objective space, the Euclidean distance among objective vectors is increasing almost linearly with the number of objectives, with the distance among Pareto optimal solutions being only slightly lower than the distance among random solutions for $m<15$ objectives.
As can be visualized in the illustrative two-objective \mbox{NK-landscapes} provided in Fig.~\ref{fig:obj_space}, reproduced from~\citet{verel2013}, the objective space resembles a multidimensional ``ball'', so that non-dominated objective vectors are not particularly closer than random ones.
Given that the distance between solutions in the objective space increases with the number of objectives, we expect that identifying a high-quality representation of the Pareto front, in terms of coverage, gets more difficult for many-objective optimization problems.

\hide{\textbf{[Arnaud]}
Following your comments, I was trying to clarify my idea with this experiment in the text above
\begin{itemize}
\item Decision space: the expected distance between random solutions is constant, the expected distance between PO solutions increases with m
\item Objective space: the expected distance between random and PO solutions both increases with m
\end{itemize}
My attempt was to support the following statements we have in Section 5:
\begin{itemize}
\item The distance between solutions in the objective space increases, so that the quality of representation likely decreases, with poorer coverage
\item The distance between solutions in the decision space increases, so that (blind) recombination likely becomes less effective
\end{itemize}
If you feel we are missing something, maybe you could update the text by adding what is your expectation about this experiment?
}





\section{Effect of the Number of Objectives on the Complexity of Multi-objective Procedures and Algorithms}\label{MOEAprocedures}

In this section we report a number of computational experiments, both performed for the purpose of this paper and analyzing results reported in the literature. Note that, although a number of computational studies with the methods considered in this paper have been reported in the literature, they are usually focused on performance comparison, in most cases w.r.t. a newly-proposed method. By contrast, our goal is to more systematically study the influence of the number of objectives on the behavior of some representative algorithms. Furthermore, usually smaller numbers of objectives are used in reported experiments. 

Although most studies are based on a number of pairwise comparisons of solutions, it is important to notice that the elementary operation for complexity results reported below is a pairwise comparison per objective. This choice is motivated by the fact that we want to highlight the effect of the number of objectives ($m$) on different multi-objective optimization tools and methods.

\subsection{Dominance Test and Updating the Pareto Archive}
\label{sec:DomTestUpdate}

In this section, we consider the processes of testing if a solution $\mathbf{\bx}$ is dominated or not by a Pareto archive and of updating this archive. Updating the Pareto archive $A$ with a new solution $\mathbf{\bx}$ means that all solutions dominated by $\mathbf{\bx}$ are removed from $A$ and $\mathbf{\bx}$ is added to $A$ if it is not dominated by any solution in $A$. The complexity analysis of the two processes is the same, since the dominance test is the bottleneck part of the updating process.  

The simplest data structure for the Pareto archive is a simple, unorderd list of solutions with linear time complexity of update. Several methods and related data structures aiming at efficient realization of the Pareto archive update have been proposed, e.g., Quad~Tree~\citep{Sun1996,Mostaghim2005,Sun2006,Sun2011, Fieldsend2020}, MFront~II~\citep{Drozdik2015}, BSP~Tree~\citep{Glasmachers2017}, and ND-Tree~\citep{Jaszkiewicz2018}.
\citet{Jaszkiewicz2018} reported some complexity results for ND-Tree:
\begin{itemize}
\item Worst case: $\bigO(m \; N)$;
\item Best case: $\Theta(m \; \log(N))$;
\item Average case: $\Theta(m \; N^b)$, where $N$ is the size of the archive, and $b \in [0,1]$ is the probability of branching (i.e. the probability of testing more than one subnode in ND-Tree). 
\end{itemize}
Note that linear time complexity in the worst case holds also for Quad~Tree~\citep{Sun1996} and the sublinear time complexity in average case could also be obtained with BSP~Tree~\citep{Glasmachers2017}.


The Pareto archive may be either bounded in size or unbounded, i.e. contain only some or all non-dominated solutions generated so far \citep{Fieldsend2003}. In the latter case, the size of the Pareto archive may, in general, grow exponentially with the number of objectives (see Section~\ref{sec:pos}). Assuming that $N = \bigO(c^{m-1})$ (see Section~\ref{sec:pos}), the complexity of the update process of an unbounded archive becomes:

\begin{itemize}
\item Worst case:	$\bigO(m \; N) = \bigO(m \; c^{m-1})$;
\item Best case:	$\Theta(m \; \log(N)) = \Theta(m \; \log(c^{m-1}) ) = \Theta(m^2 \; \log(c) )$;
\item Average case:	$\Theta(m \; N^b) = \Theta(m \; c^{(m-1) b})$. 
\end{itemize}
In other words, in the worst case, the time grows exponentially with $m$, and, in the average case, the time may grow exponentially with $m$ if $c^b > 1$.

To illustrate the above analysis, we perform the following computational experiment. We consider 
multi-objective \mbox{NK-landscapes}
with $n=16$, $k=0$, and $m \in \{3, 4, \ldots, 20\}$. %
For each setting, we generate $30$ random instances. %
All $2^{16}=65\,536$ solutions are processed in random order. Note that in this experiment we use instances with $n=16$ instead of $n=10$ used in other places, because with $n=10$ the number of solutions was too low to show significant differences between the evaluated methods. The methods used in this experiments are simple list, ND-Tree~\citep{Jaszkiewicz2018}, Quad~Tree (precisely Quad~Tree 2 algorithm described by~\citet{Mostaghim2005} with the corrections described by~\citet{Fieldsend2020}), and MFront~II~\citep{Drozdik2015} with the modifications proposed by~\citet{Jaszkiewicz2018}. We used C++ implementations of these methods described by~\citet{Jaszkiewicz2018}, however, implementation of the Quad~Tree has been improved on both technical level and using the corrections proposed by~\citet{Fieldsend2020}. Fig.~\ref{fig:NDAll} (left) presents average running times of the four methods needed to process all solutions. As could be observed, the ranking of the methods depends on the number of objectives in a non-trivial way. Overall, the two best methods are Quad~Tree and ND-Tree, with Quad~Tree being better for $m=\{12,\dots,17\}$ objectives, and ND-Tree being better in other cases. Somehow surprisingly, simple list performs very well for small numbers of objectives and is the best method for $m=3$ and $m=4$ objectives. This is related to the design of the experiment in which all solutions are used. For small numbers of objectives, there are relatively few non-dominated solutions (see Fig.~\ref{fig:pos}), so the list is relatively short and a new solution has a high chance to be dominated by many solutions in the list. Thus, a dominating solution is often quickly found and the update process is finished. Note also, that MFront~II is the worst performing method. 

It could also be observed that the running time grows exponentially with the number of objectives. This could be however, caused just by the growing size of the Pareto archive (see Section~\ref{sec:pos}, Fig.~\ref{fig:pos}). Thus, in Fig.~\ref{fig:NDAll} (right) we present running times divided by the number of Pareto optimal solutions. The relative running time changes only slightly for Quad~Tree and ND-Tree again with a non-trivial pattern.

\begin{figure}[!t]
\centering
	\subfloat{
      \includegraphics[width=0.5\columnwidth]{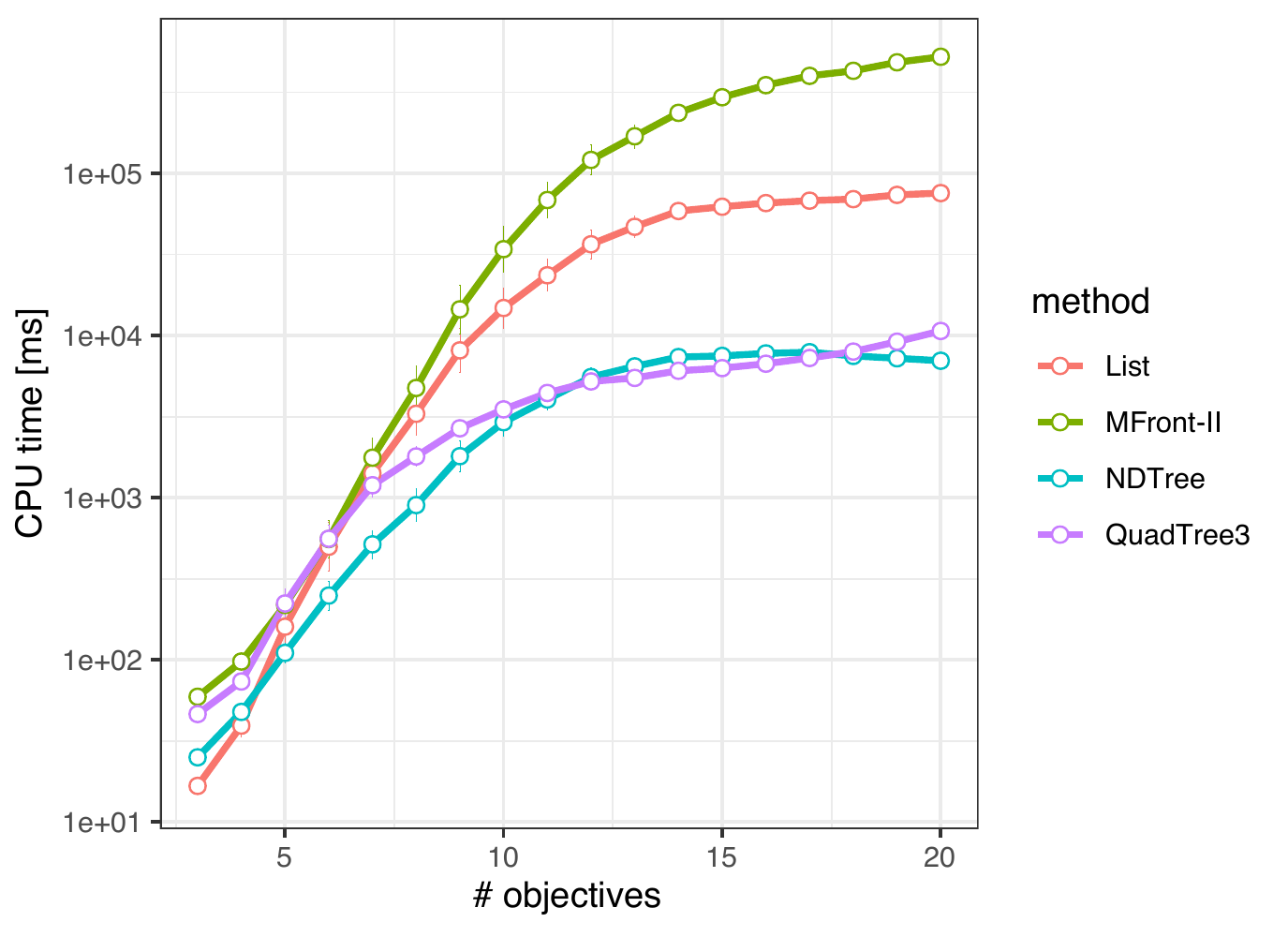}
	}
	\subfloat{
      \includegraphics[width=0.5\columnwidth]{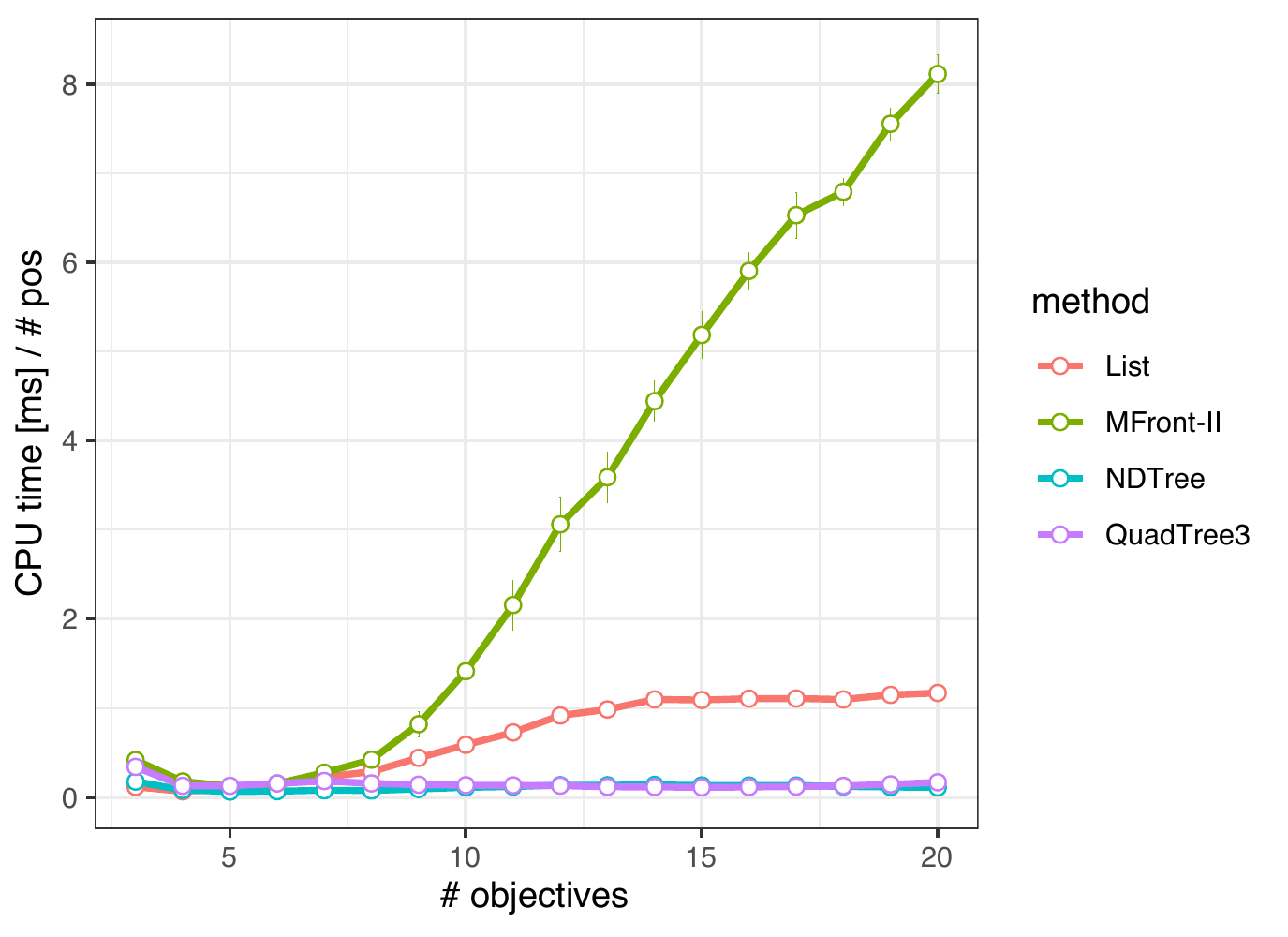}
	}
\caption{Running time of four methods for updating the Pareto archive (left, logarithmic scale), 
and running time divided by the number of Pareto optimal solutions~(right).
}
\label{fig:NDAll}
\end{figure}


Furthermore, as we discussed above in this Section, methods like ND-Tree aim at ensuring sublinear time complexity w.r.t. the size of the Pareto archive. To verify this, we measure running times needed to achieve the Pareto archive of $10\%,20\%,\dots,100\%$ of the final size. To be precise, the running time reported e.g. for $20\%$ is the running time from the moment of achieving $10\%$ of the final size till the moment of achieving $20\%$ of the final size divided by the number of processed solutions (in other words, we report average time of processing a single solution in a given interval of the Pareto archive size). Ideally, this running time should be constant or grow sublinearly with the size of the Pareto archive. The results for 20 objectives are presented in Fig.~\ref{fig:NDAllvsPos20}. Since the running times of the four methods differ significantly (see Fig.~\ref{fig:NDAll}), the running times of each method are normalized such that the maximum time is $1$. As it can be observed, running time needed to process a single solutions for 20 objectives grows linearly with the size of the Pareto archive for all methods. Of course, the speed of this growth differs for different methods. On the other hand, for some smaller numbers of objectives the running times are indeed almost constant or clearly sublinear (see Fig.~\ref{fig:NDvsPoints}\footnote{The 'jump' or running time observed for Quad tree with 20\% of the Pareto archive size and small number of objectives ($m=3$ in particular) is probably because the initial tree must be built before the method achieves full performance.}). This means that for each method, for some number of objectives, we observe a switch of its behavior from sublinear to approximately linear dependence on the size of the Pareto archive. The switching point is different for different methods and is between $5-7$ objectives for the simple list and MFront~II, between $6-8$ objectives for \mbox{ND-tree}, and between $4-6$ objectives for Quad tree. In other words, since the number of Pareto optimal solutions grows in general exponentially with the number of objectives (see Section~\ref{sec:pos}), according to the presented experiment, for at least $8$ objectives we can expect exponential growth (w.r.t. $m$) of updating an unbounded Pareto archive independent on the method used. In our opinion, it is caused by the reduced discriminative power of the dominance relation (see Section~\ref{sec:dom_rel}) for higher numbers of objectives. Although different methods are based on different ideas, all of them use some properties of dominance relation to speed-up the update process, thus all of them are affected by its reduced discriminative power.

\begin{figure}[!t]
\centering
      \includegraphics[width=0.5\textwidth]{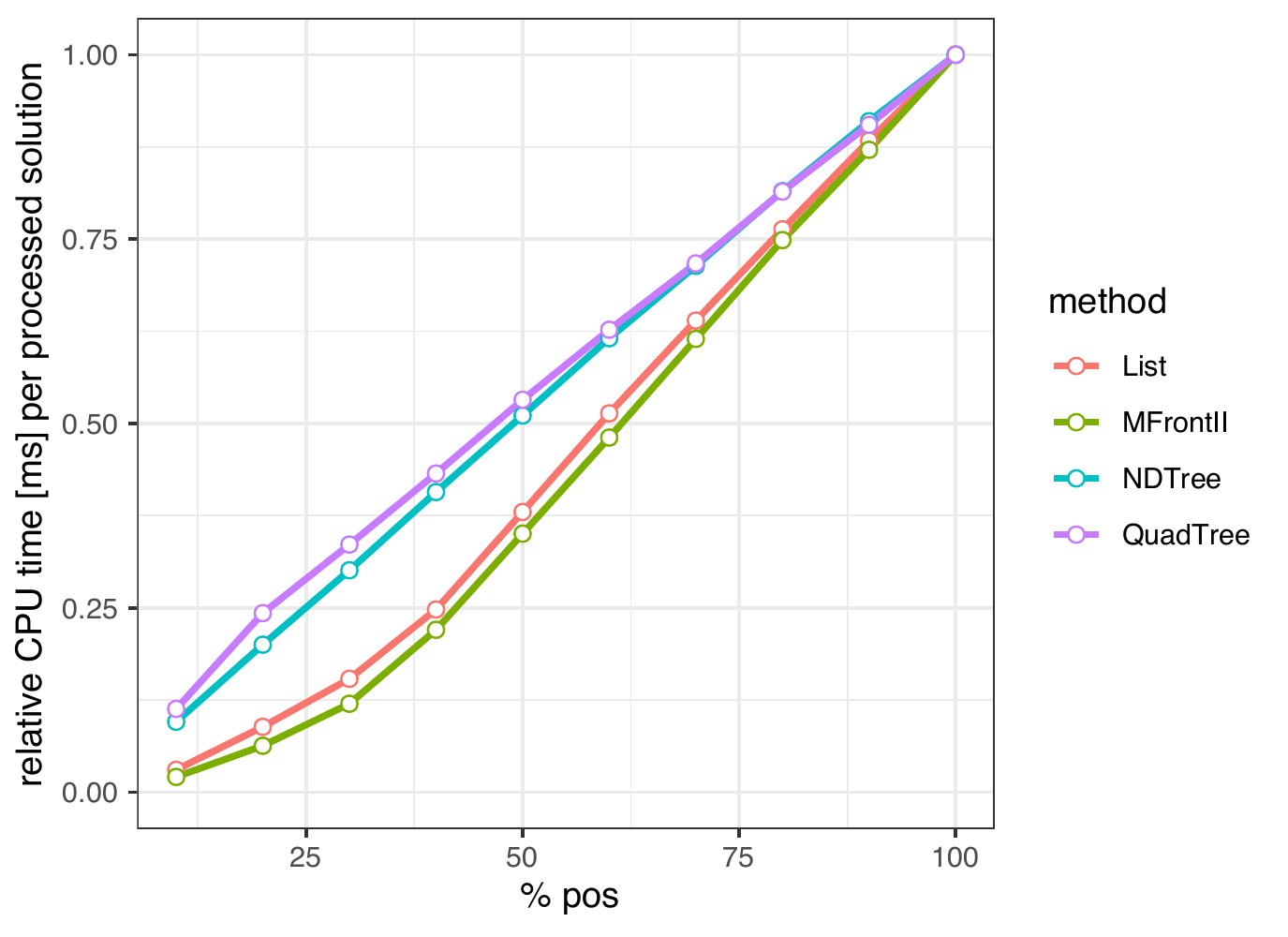}
\caption{Relative running time of Pareto archive for processing a single solution w.r.t. to the Pareto archive size for $m=20$ objectives.
}
\label{fig:NDAllvsPos20}
\end{figure}

\begin{figure}[!t]
\centering%
\includegraphics[width=\textwidth]{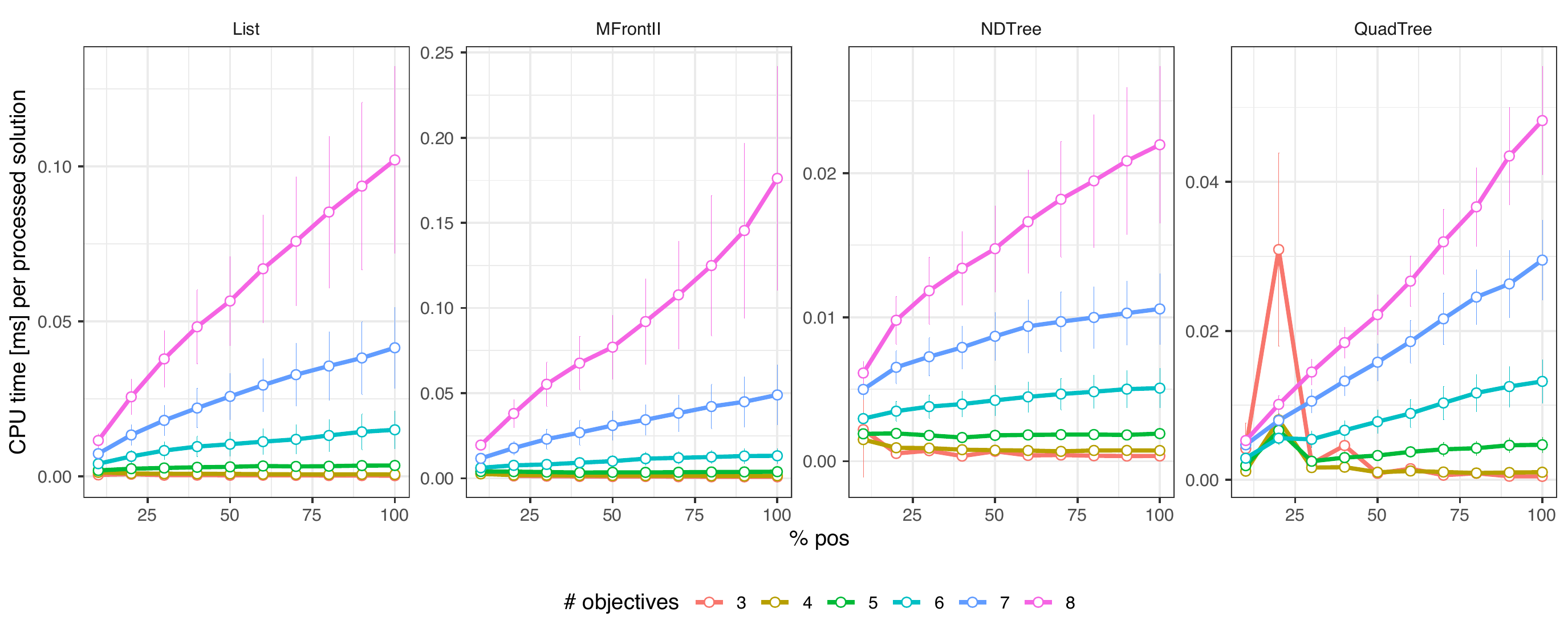}
\caption{Running time of Pareto archive for processing a single solution w.r.t. to the relative Pareto archive size for smaller numbers of objectives $m \in \{3, 4, \ldots, 8\}$.
}
\label{fig:NDvsPoints}
\end{figure}

The methods tested in our experiment together with several other methods were also recently experimentally evaluated by~\citet{Fieldsend2020}. The conclusion of that experiment was that ND-Tree generally performs best in long runs, but this was not a universal finding. Furthermore, the performance of the different methods can vary considerably between hardware architectures, and in a non-constant fashion. Also, our additional preliminary experiments (not reported in this paper) indicate that performance may strongly depend on the data sets used and the order in which the solutions are processed. In particular, the running time may depend on whether the new solution is dominated, non-dominated or dominating, and the pattern may be different for different methods. Thus, we encourage further experiments testing practical behavior of the methods for updating the Pareto archive, in particular using different orders of solutions, which is however, out of the scope of this paper.


\subsection{Computing and Approximating Hypervolume}
\label{sec:hv}
When assessing the performance of multi-objective optimization algorithms, or in indicator-based evolutionary multi-objective optimization, the indicator-value of a set of solutions is to be computed multiple times. One of the recommended and  most-often used indicator is the hypervolume~\citep{Zitzler2003}, because of its compliance with the comparison of sets of points based on the dominance relation. Note that, since dominated solutions do not influence the hypervolume, only non-dominated solutions, forming a Pareto archive of size $N$, need to be taken into account. Unfortunately, the exact hypervolume computation time is known to grow exponentially with the number of objectives. 
Although efficient algorithms exist for $m=3$~\citep{Beume2009} and $m=4$~\citep{Guerreiro2018}, the best known algorithm for the general case has a time complexity of $\mathcal{O}(N^{m/3} \: \textrm{polylog} \: N)$~\citep{Chan2013}. 

To analyze the above theoretical results experimentally, we test the behavior of two state-of-the-art exact methods for exact computation of hypervolume, namely the non-incremental version of the Hypervolume Box Decomposition Algorithm (HBDA-NI)~\citep{LACOUR2017347} and the Improved Quick Hypervolume algorithm (QHV-II)~\citep{JASZKIEWICZ2018b}. The results published in these papers are used, and the running times of HBDA-NI have been divided by 2.5 to compensate processor differences as suggested in~\citep{JASZKIEWICZ2018b}. The results are reported for data sets proposed in~\citep{LACOUR2017347} (concave, convex, and linear) composed of $1\,000$ points in $m \in \{4, 6, 8, 10\}$ dimensional objective spaces, with objective values (and thus hypervolume) normalized to the range $[0,1]$. We use 10 instances for each combination of data set type and number of objectives. The results are presented in Fig.~\ref{fig:Exact}. Note that for the running time w.r.t. the number of objectives, a logarithmic scale is used. The running times of the exact methods grow indeed exponentially with the number of objectives, which suggests that the exponential time complexity holds not only in the worst case, but in the typical case as well. At the same time the running times grow relatively slowly with the number of points (see Fig.~\ref{fig:Exact}, right). 

\begin{figure}[!t]
\centering
	\subfloat{
      \includegraphics[width=0.5\textwidth]{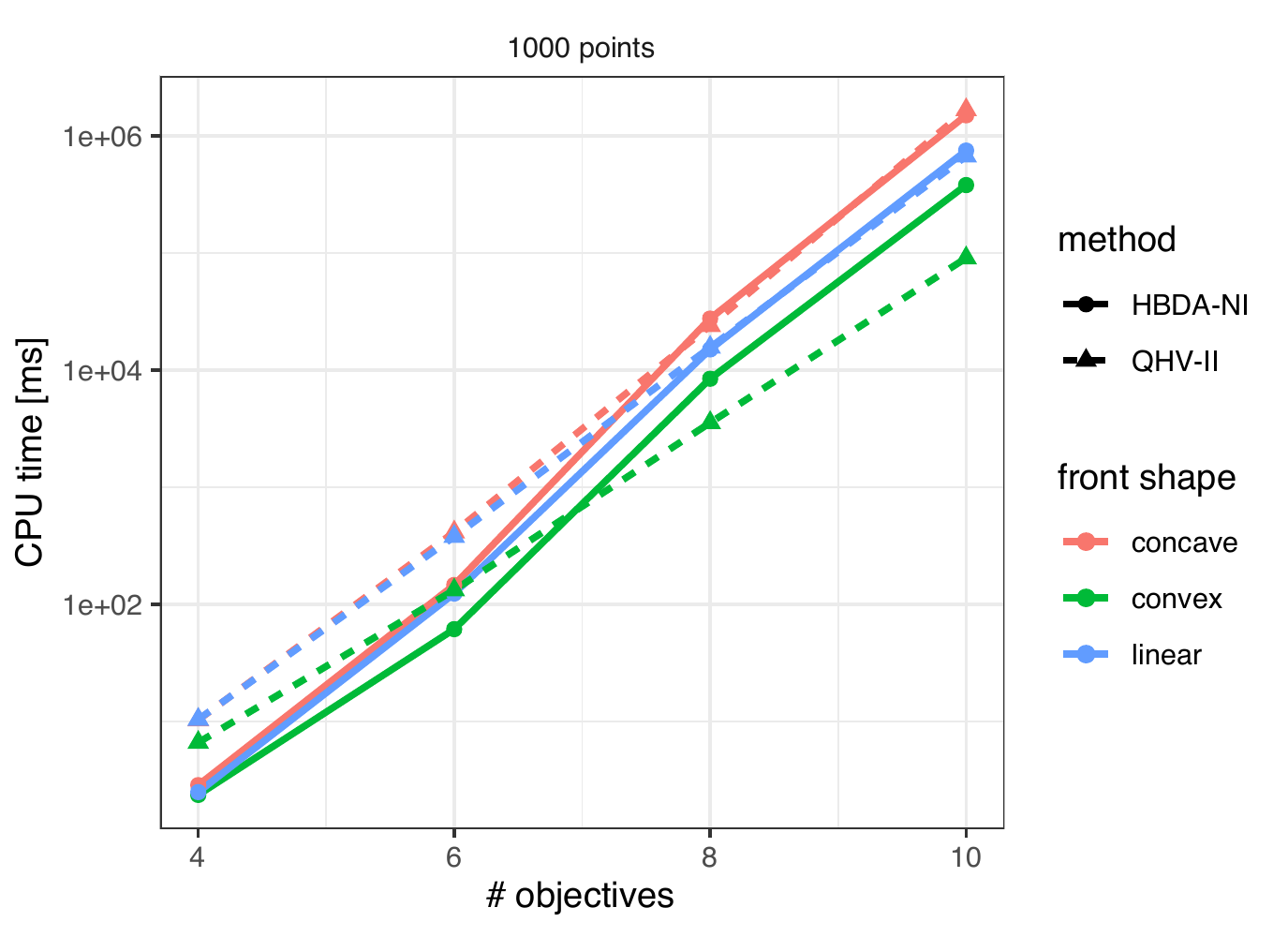}
	}
	\subfloat{
  \includegraphics[width=0.5\textwidth]{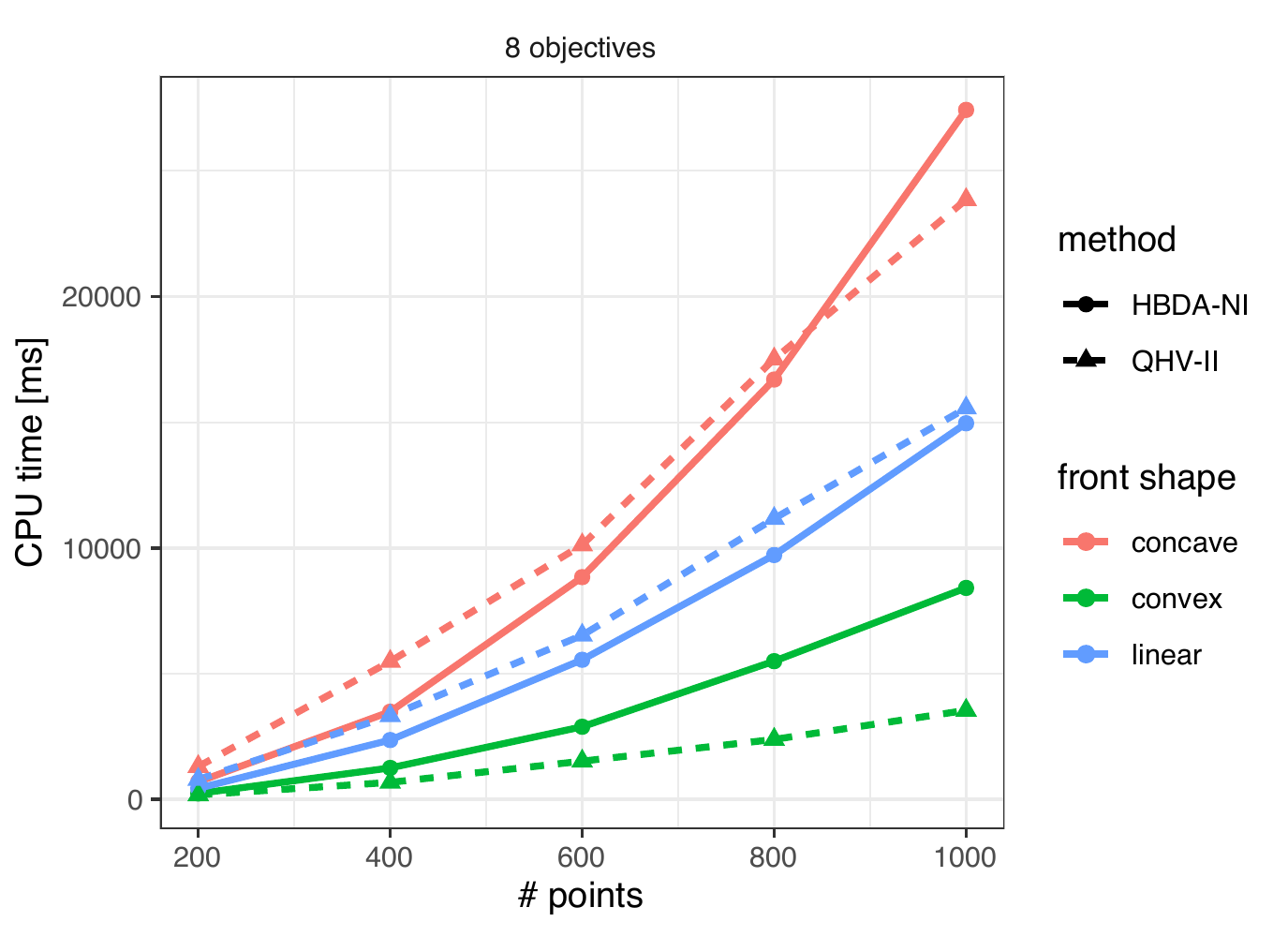}
  	} 
	\\
\caption{Running time of exact hypervolume computation methods w.r.t. the number of objectives (left, logarithmic scale) and the number of points for instances with $m=8$ objectives (right).}
\label{fig:Exact}
\end{figure}

Alternatively, the hypervolume can be approximated by Monte Carlo sampling~\citep{bader2011}. In this case, the complexity is $\Theta(s \; m \; N^b)$ 
, where $s$ is the number of sampling points 
and the term $N^b$ relates to the dominance tests. Since Monte Carlo sampling is just a sequence of $s$ independent experiments, each asking a yes/no question (dominated/non-dominated), the confidence interval can be derived from the binomial distribution and does not depend on the number of objectives. On the other hand, the question remains if the size of the confidence intervals should be reduced with growing number of objectives $m$ and/or growing $N$.

To analyze the above question, we consider first the case where the size $N$ of the set of points for which hypervolume is estimated is constant and only the number of objectives changes. The required precision of hypervolume estimation could be related to an average contribution of a single point to the hypervolume, i.e. the difference in the hypervolume with and without this single point. In other words, intuitively speaking, the precision of hypervolume estimation should be sufficient to distinguish addition/subtraction of a given number of points. It could be expected that the average contribution does not depend on the number of objectives. To analyze this experimentally, we use the same data sets as above. For each individual point its contribution to the hypervolume of the given set is calculated with an exact method. The results presented in Fig.~\ref{fig:HVContr} are average contributions for $10 \times 1\,000 = 10\,000$ individual points. The influence of the number of objectives depends mainly on the instance type and only in the case of convex instances the average contribution decreases with a growing number of objectives. However, note that for convex instances, the hypervolume approaches $0$ as the number of objectives grows (i.e. the points are very close to the nadir point), thus the contribution of an individual point also approaches $0$. On the other hand, the average contribution grows with the number of objectives for linear instances. Summarizing, no general trend for the decrease or increase of the average contribution with the growing number of objectives could be concluded from this experiment, thus we conclude that also the size of the confidence interval should not change with the number of objectives and constant $N$.

Additionally, as discussed above, it is often assumed that $N$ increases with $m$. In this case, it could be expected that the average contribution is $\Theta(1/N)$. To test this assumption, we show the results in Fig.~\ref{fig:HVContr} for the same set of instances with $200, 400, \ldots, 1\,000$ points. As it can be observed, the average contribution indeed decreases approximately linearly with $1/N$. Since we propose to treat the average contribution as the indicator of required precision, the size of the confidence interval $r$  (i.e. the difference between the upper and lower confidence bound) for the estimated hypervolume should be $\Theta(1/N)$. The confidence interval for binomial distribution could be calculated with Wilson score interval with continuity correction for binomial distribution \citep{Newcombe1998}. According to this test, $r = \Theta(1/\sqrt{s})$, where $s$ is the number of sampled points, so $s = \Theta(1/{r^2})$.
Thus, in order to achieve the required accuracy, the number of sampled points $s$ should be $\Theta(N^2)$. 

\begin{figure}[!t]
\centering
	\subfloat{
      \includegraphics[width=0.5\textwidth]{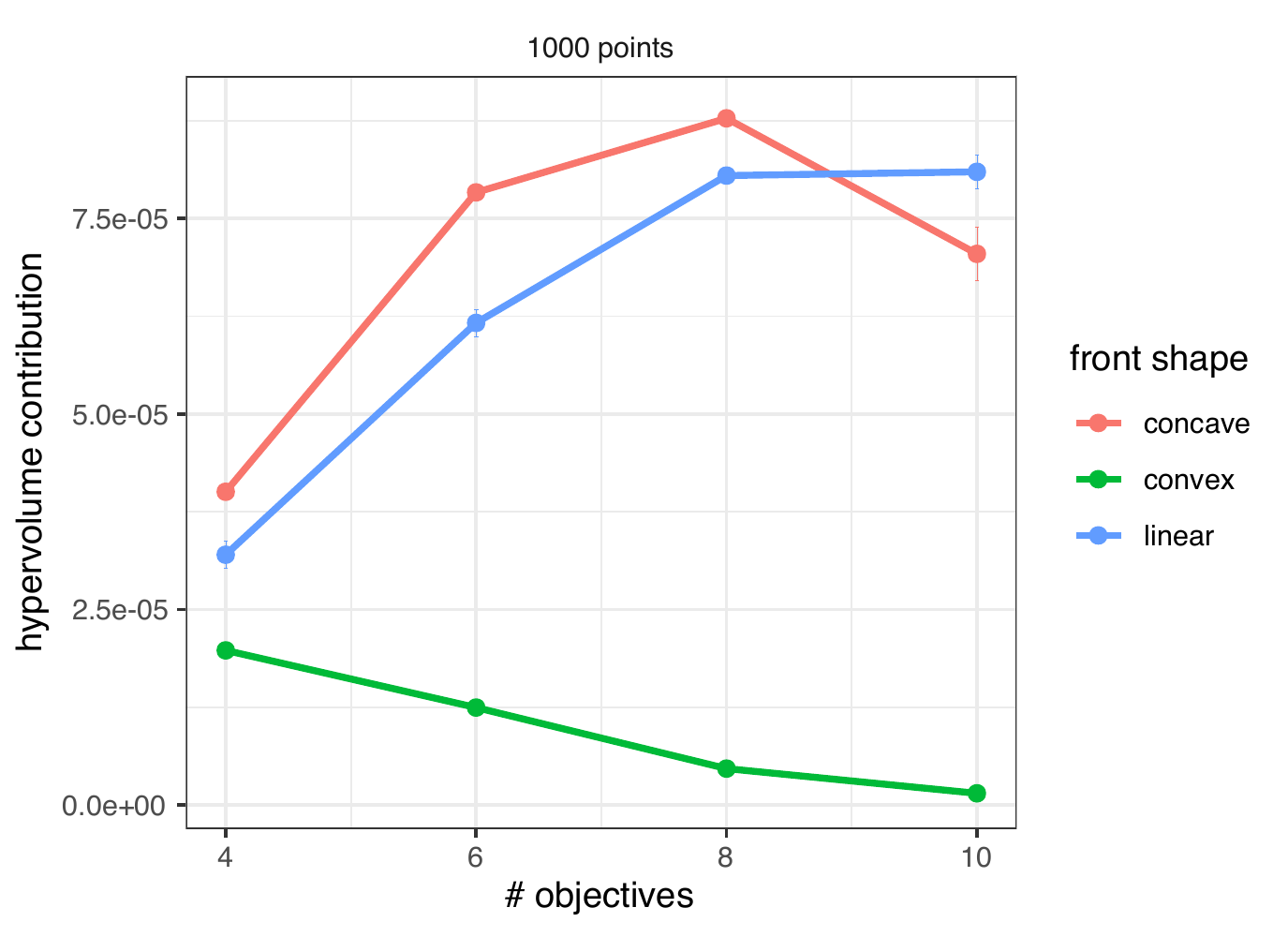}
	}
	\subfloat{
  \includegraphics[width=0.5\textwidth]{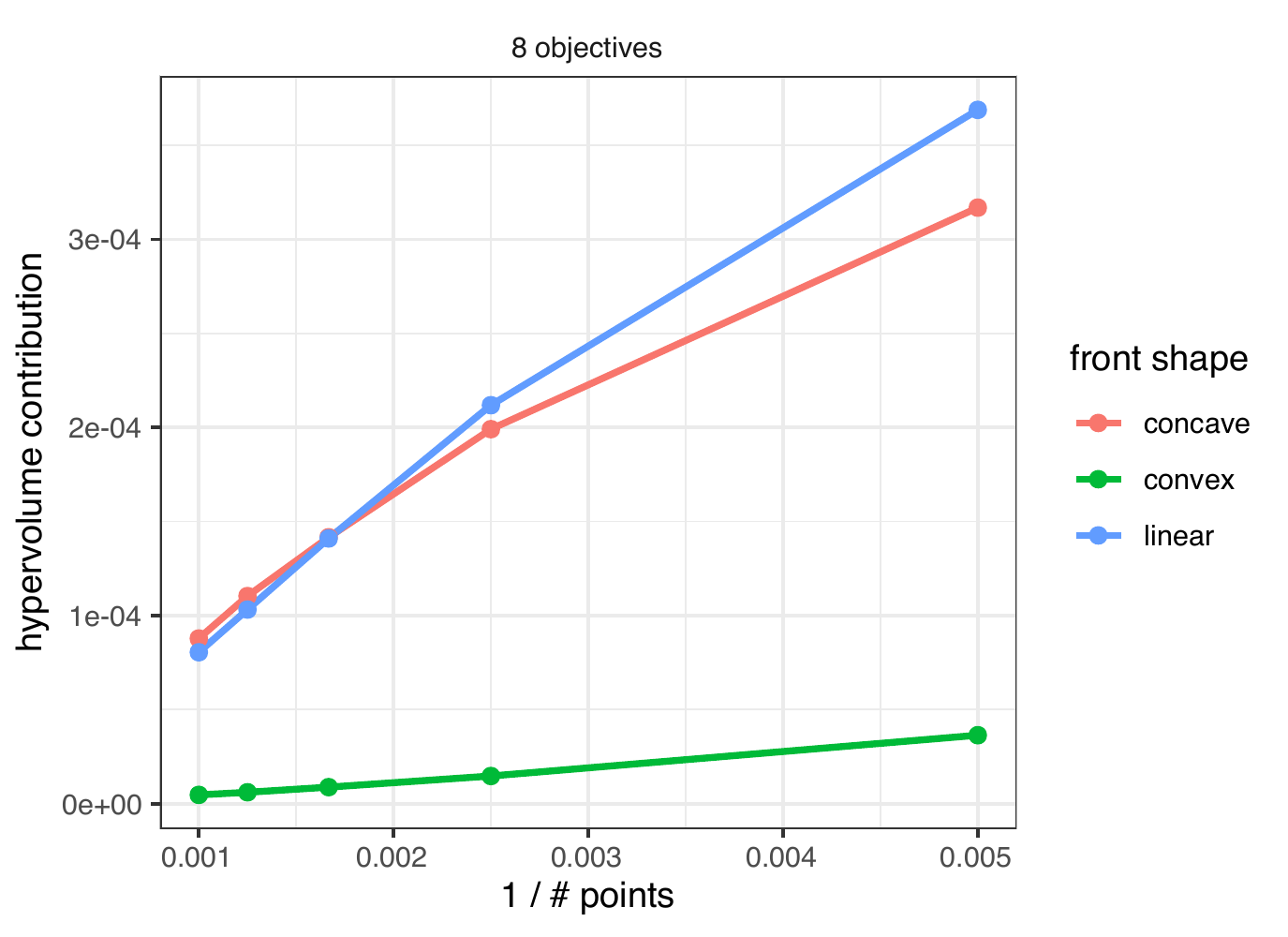}
  	} 
	\\
\caption{Average contribution of individual solutions to the hypervolume w.r.t. the number of objectives (left) and the number of points (right).}
\label{fig:HVContr}
\end{figure}


To test the above hypothesis, we perform the following experiment. Using the same data sets we run Monte Carlo sampling until it achieves size of the confidence interval lower or equal to $5/N$. We use the simplest version of Monte Carlo sampling in which the hypercuboid defined by the nadir and ideal points is uniformly sampled. ND-tree is used for the dominance test. The results are presented in Fig.~\ref{fig:MC}. As can be observed, for linear and concave instances the running times grow slowly with the number of objectives, which is due to the increasing number of comparisons of objective values. For convex instances the running time of Monte Carlo sampling decreases with the growing number of objectives, because for this set of instances with high numbers of objectives, the values of hypervolume are very close to $0$ which makes the estimation task easier. We test also the influence of the number of points on the running time for $8$ objective instances (Fig.~\ref{fig:MC}) together with quadratic trend lines which fit the data very well. Note, however, that the running times of exact methods are shorter than that of Monte Carlo sampling for $4$ objectives and comparable for $6$ objectives. Only for $8$ and $10$ objectives Monte Carlo sampling is clearly beneficial in the presented experiment.
Furthermore, since the required sample size, and thus CPU time, is $\Theta(1/{r^2})$, the running time of Monte Carlo may be much higher if a smaller confidence interval is required. For example, if the required confidence interval is $1/N$ (i.e. 5 time smaller), the running time of Monte Carlo sampling would grow about 25 times and the running times of exact methods would be larger only for 10 objectives.

\begin{figure}[!t]
\centering
	\subfloat{
      \includegraphics[width=0.5\textwidth]{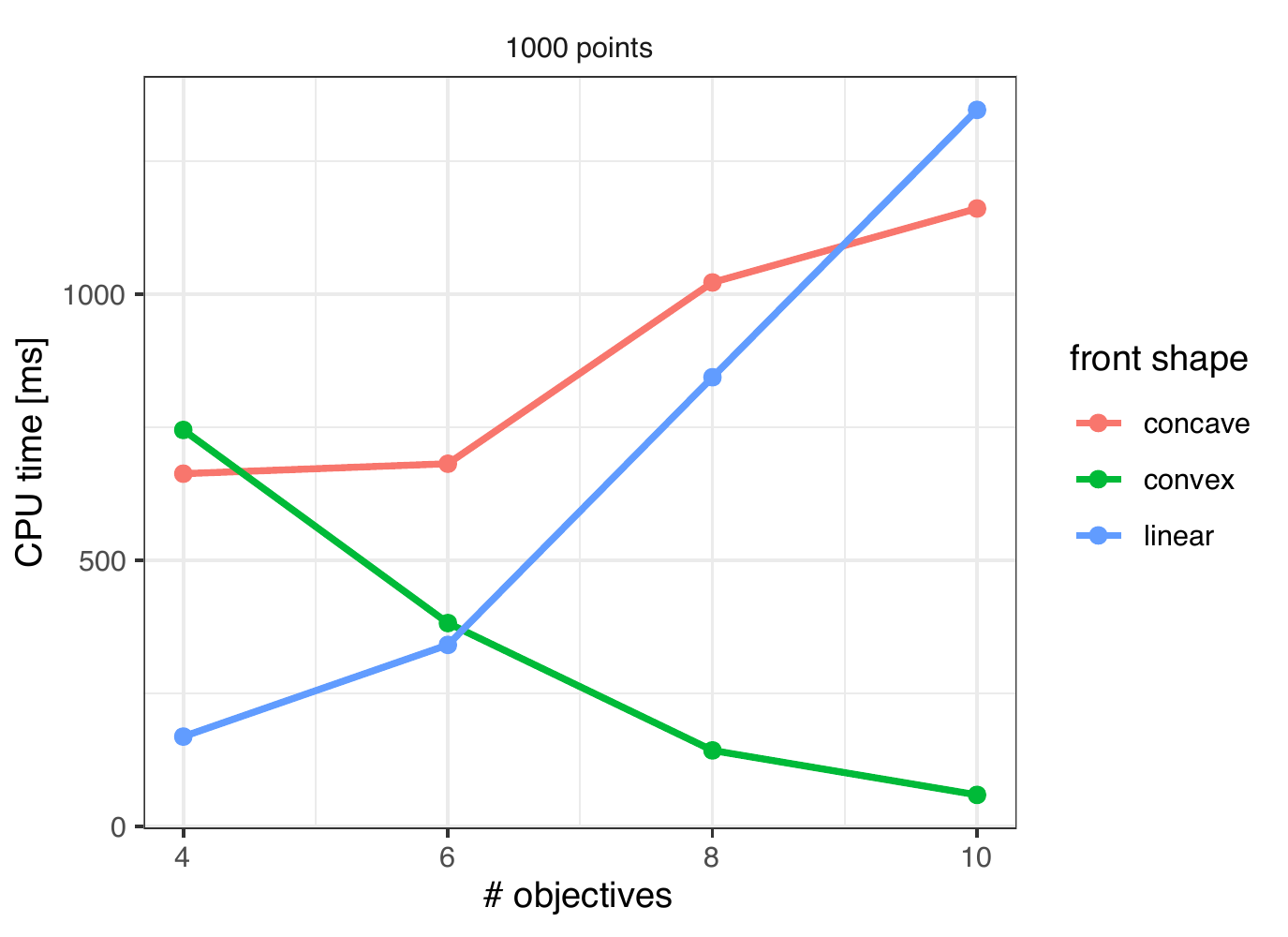}
	}
	\subfloat{
  \includegraphics[width=0.5\textwidth]{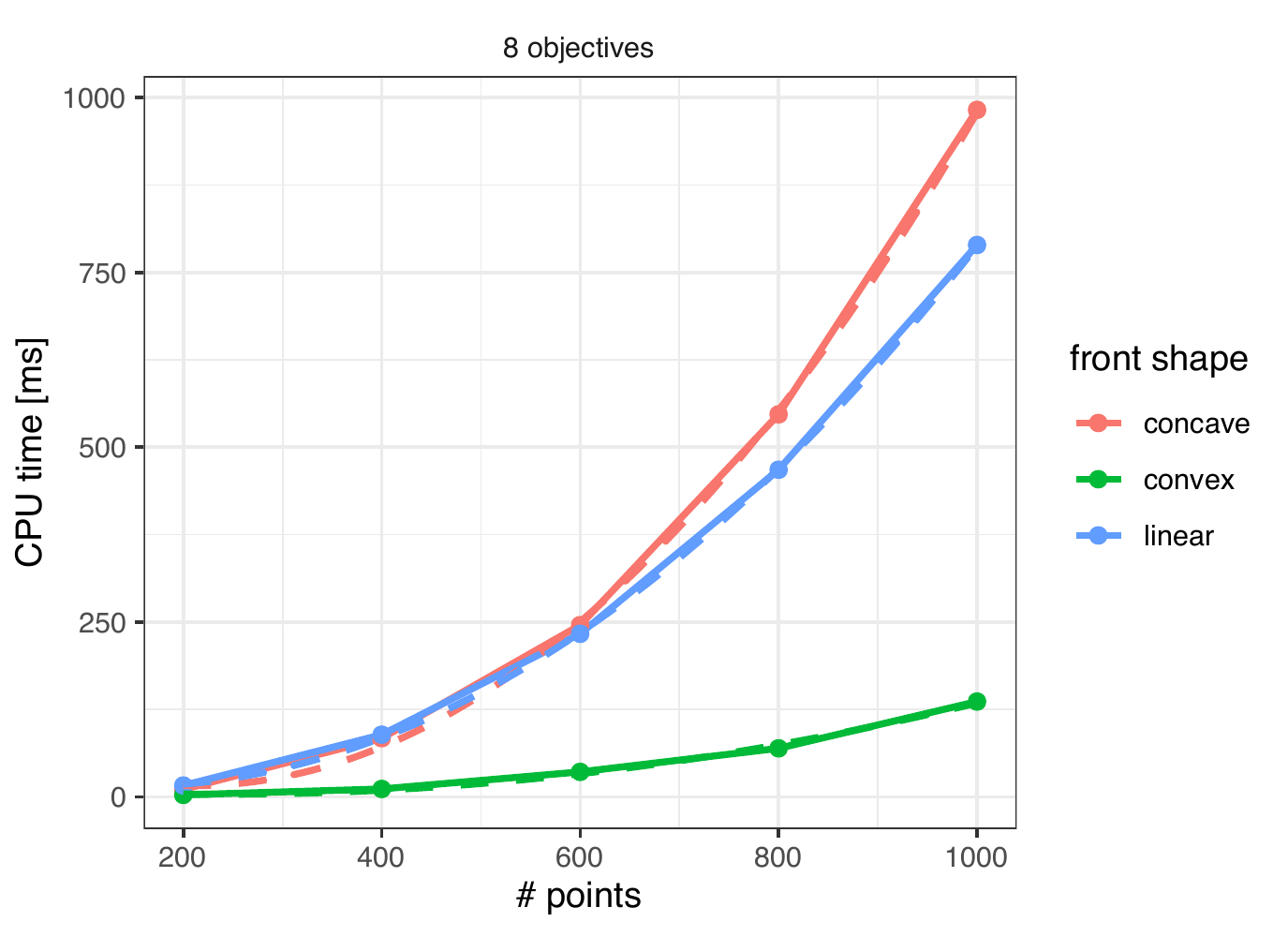}
  	} 
	\\
\caption{Running time of Monte Carlo sampling for hypervolume estimation w.r.t. the number of objectives (left) and the number of points~(right).}
\label{fig:MC}
\end{figure}

Summarizing, both the theoretical analysis and the presented experiments suggest that Monte Carlo sampling is an interesting alternative to exact methods for many-objective problems, if an approximate value of hypervolume is sufficient in a given context.



\hide{
\todo{A: Regarding Monte Carlo sampling and Hypervolume, some analysis are also reported in Figs.~\ref{fig:hv_mc}--\ref{fig:contrib_mc}, but this seems highly biased by the considered objective points, we'll probably remove this.}
\begin{figure}[htbp]
\centering%
\includegraphics[width=0.8\textwidth]{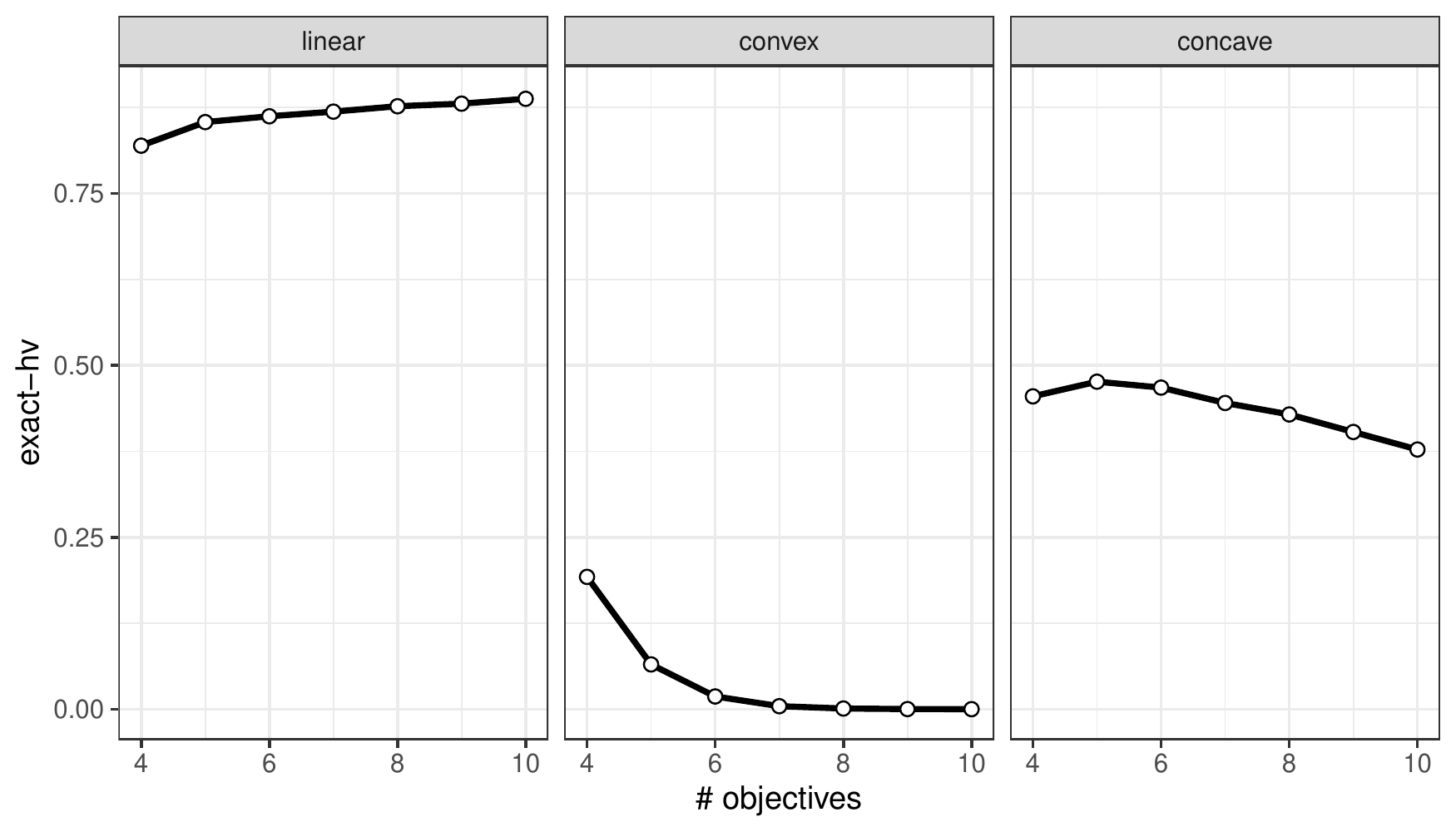}\\
\includegraphics[width=0.8\textwidth]{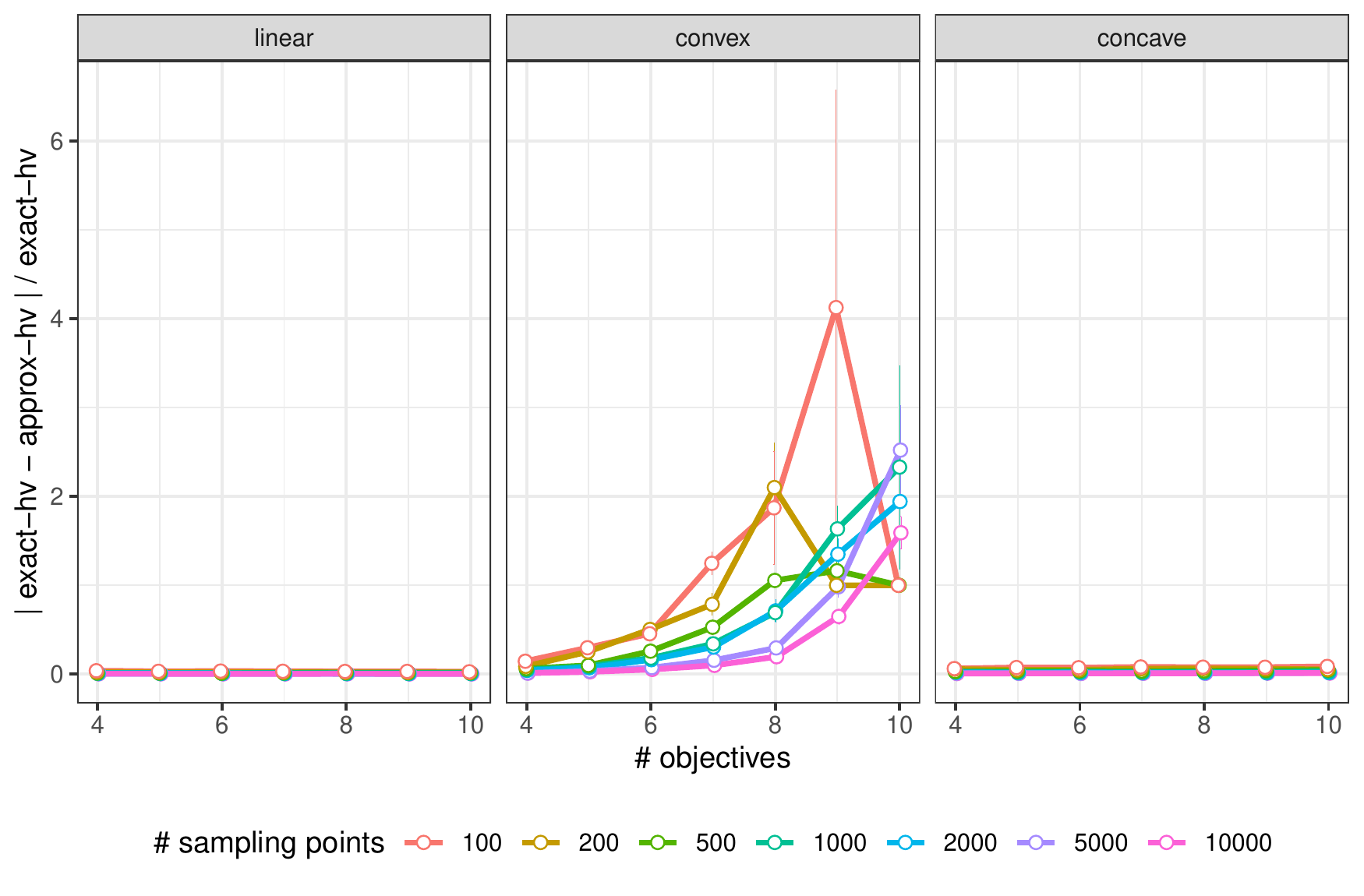}\\
\includegraphics[width=0.8\textwidth]{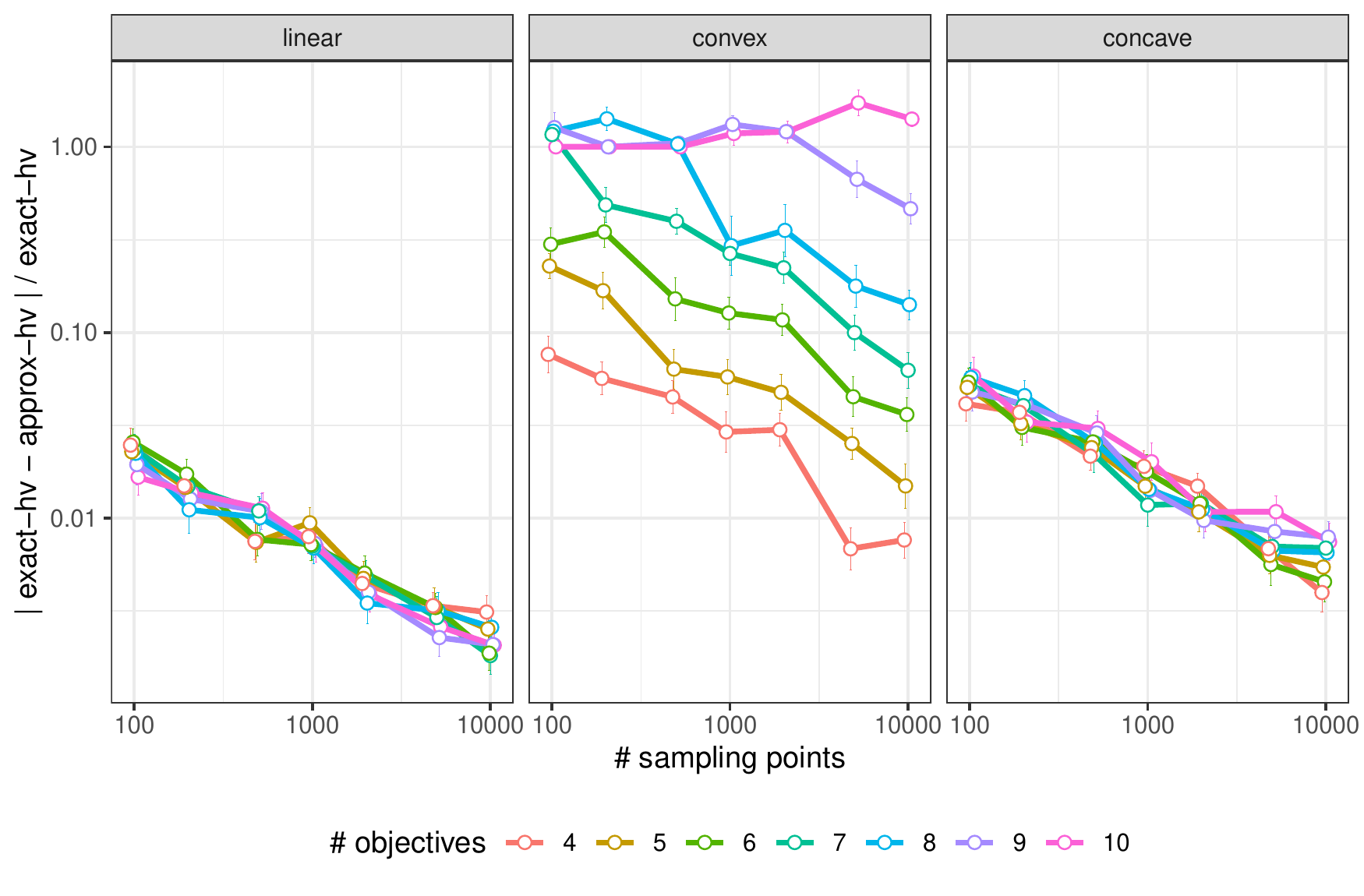}%
\caption{\todo{remove} Exact hypervolume (top) and relative deviations of Monte Carlo estimation with respect to the number of objectives (middle), and the number of sampling points (bottom).}
\label{fig:hv_mc}
\end{figure}
\begin{figure}[htbp]
\centering%
\includegraphics[width=0.6\textwidth]{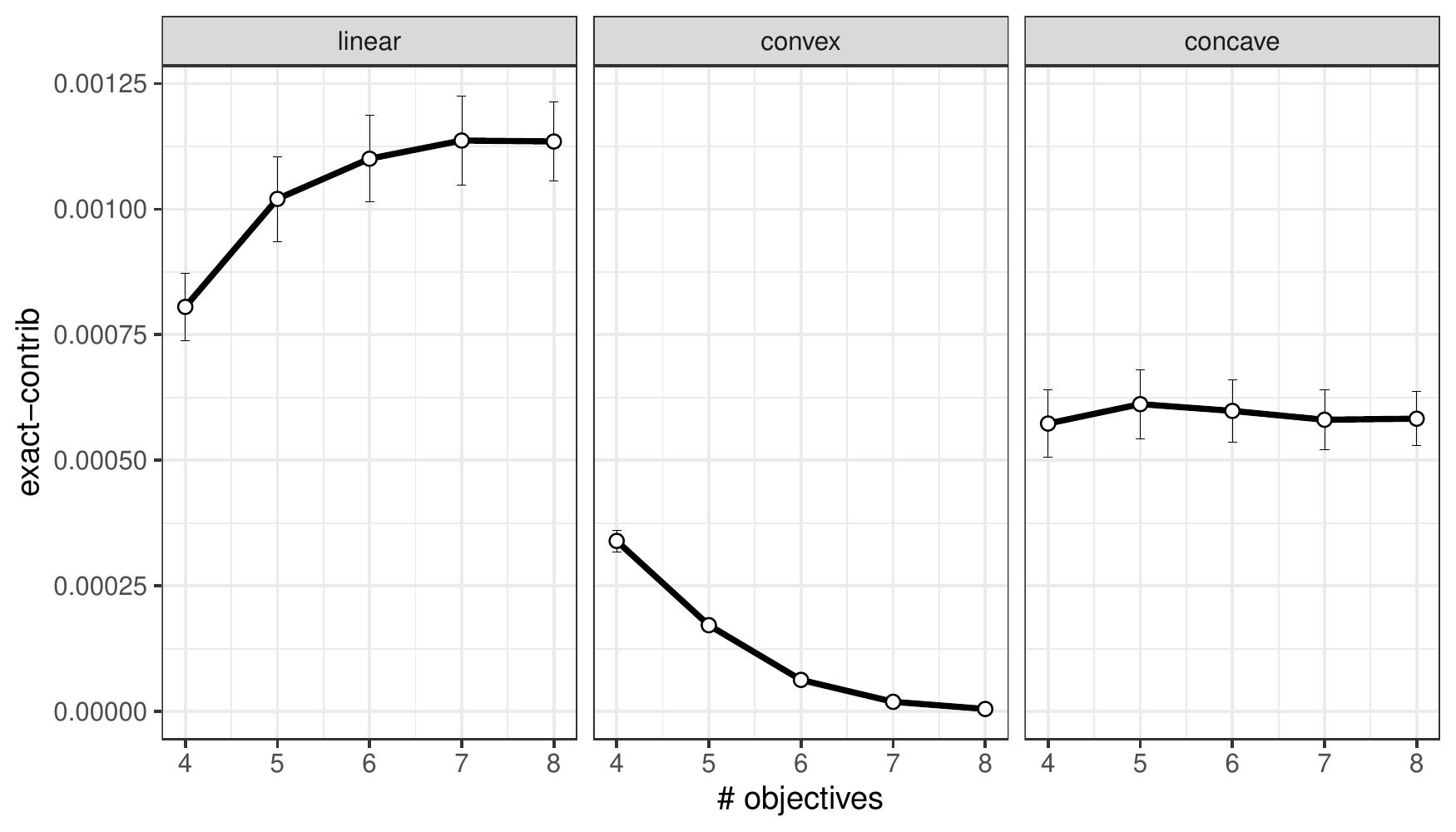}\\
\includegraphics[width=0.6\textwidth]{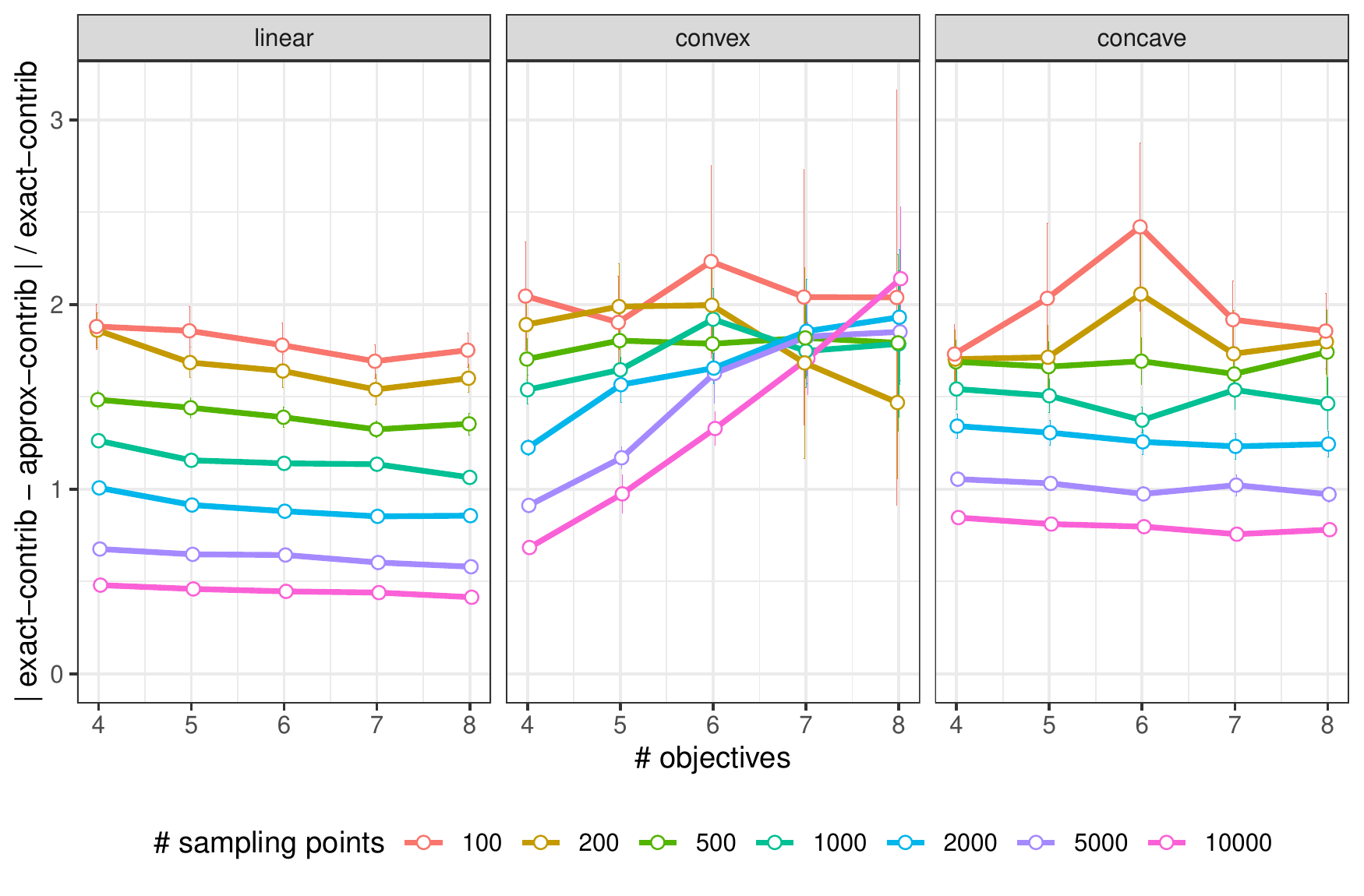}\\
\includegraphics[width=0.6\textwidth]{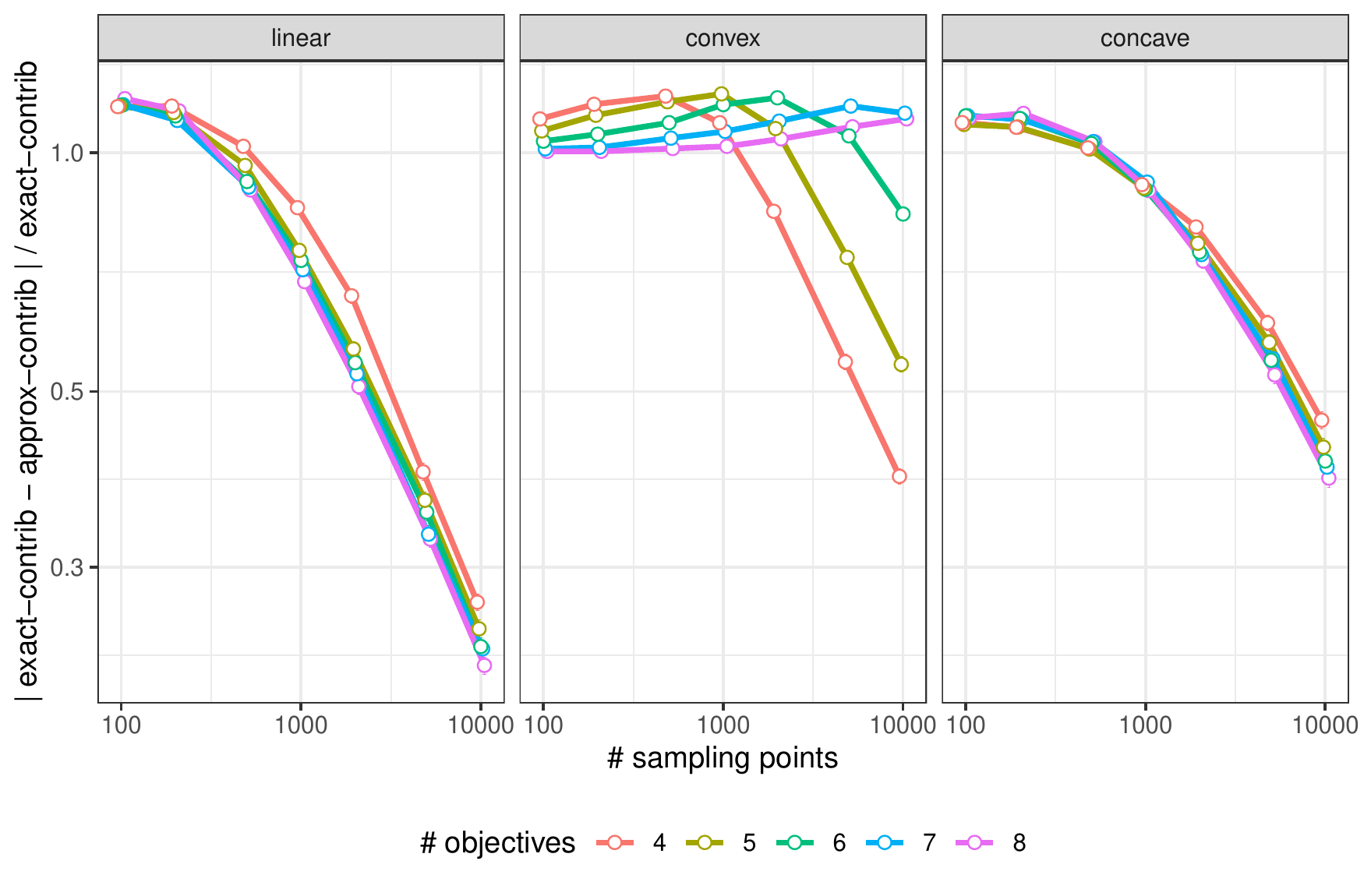}\\
\includegraphics[width=0.6\textwidth]{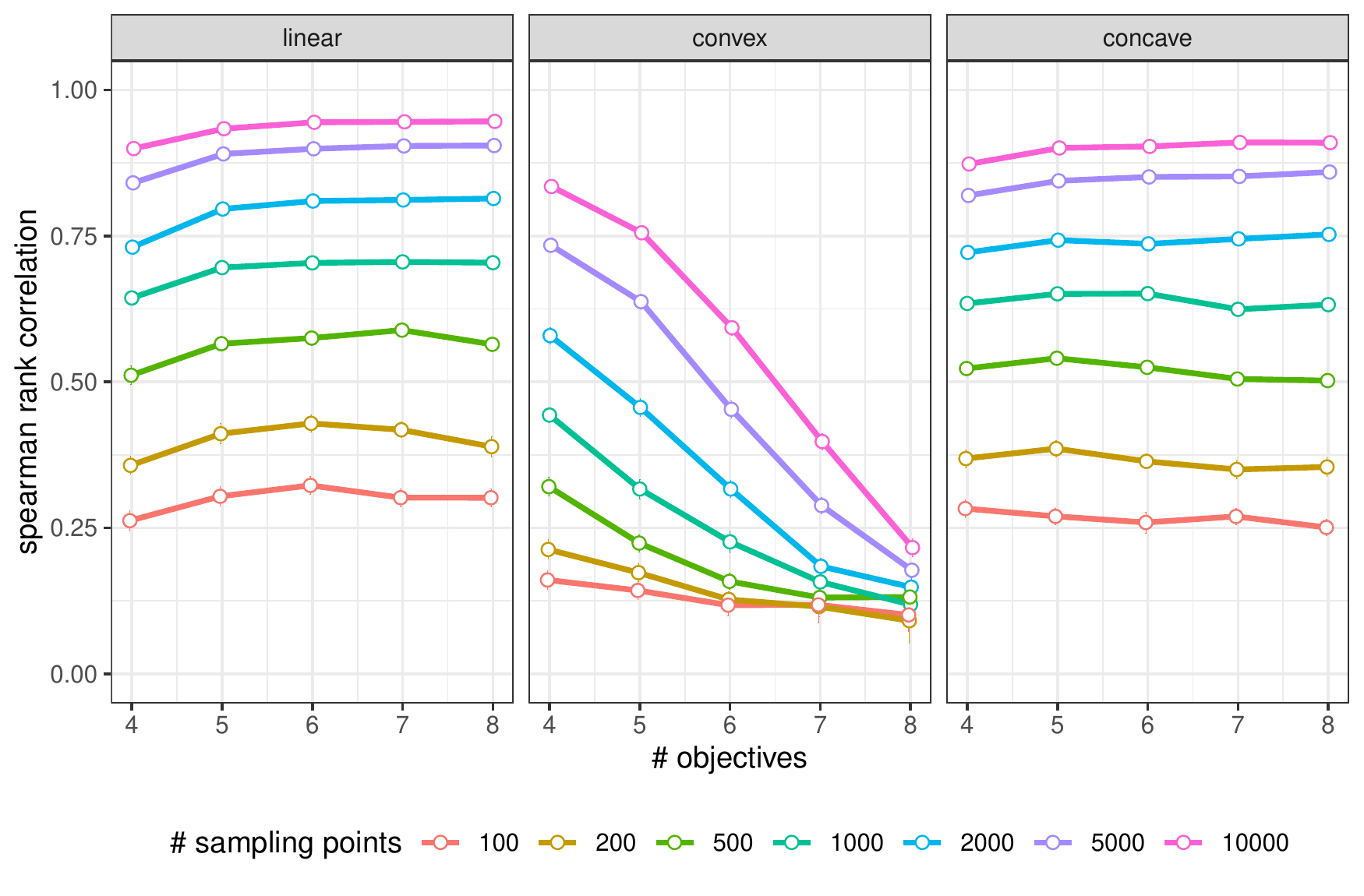}\\
\caption{\todo{remove} Exact hypervolume contribution values (top) and relative deviations of Monte Carlo estimation with respect to the number of objectives (middle), and the number of sampling points (bottom), and correlation between exact and estimate values (bottom).}
\label{fig:contrib_mc}
\end{figure}
}

\subsection{Impact on Scalarization Methods}\label{s-furtherscal}

We consider a MO problem with the vector-valued objective function $\mathbf{\bob}$ and the feasible set $X$ (see Definition \ref{d-MO}). Here, we are looking for Pareto optimal elements with respect to $\mathbf{\bob}$ and a nontrivial, closed, convex and pointed cone $C\subset \mathbb{R}^{m }$ (see Definition \ref{d-fCmax}). 
Because of technical reasons, scalarization methods are formulated as minimization problem. Taking into account that a Pareto optimal (maximal) solution with respect to $\mathbf{\bob}$ and $C$ in the sense of Definition \ref{d-fCmax} is a Pareto optimal (minimal) solution with respect to $-\mathbf{\bob}$ and $-C$ and vice versa (see Remark \ref{r-maxmin}), we replace the MO problem by the equivalent problem to find Pareto optimal (minimal) solutions with respect to $-\mathbf{\bob}$ and $-C$. For simplicity, we put $\mathbf{\bob}:= -\mathbf{\bob}$ and denote the corresponding minimization problem as MO problem too.
 
An important question is how to characterize the sets of Pareto (weakly) optimal elements of MO problems via suitable scalarization methods. In the non-convex case, especially in combinatorial MO, it is important to provide suitable non-linear scalarization functionals. This section discusses the influence of the number of objectives on scalarization procedures. Scalarization methods are often used for deducing characterizations of solutions of MO problems and for deriving algorithms for solving these problems (see~\cite{eich08} and references therein for an overview). 

A prominent scalarization method is the well known weighted Chebyshev scalarization:
\begin{equation}\label{weighTsch} 
\min_{\bx \in X} \max_{i \in \{1, \ldots ,m\}} \lambda^*_i ( f_i (\bx) - w_i ) 
\end{equation}
with $\lambda^*_i \in \operatorname*{int} \mathbb{R}^{m }_+ $, $w_i \in  \mathbb{R}$ ($i=1, \ldots , m$). 

Another important scalarization method is the $\epsilon$-constraint problem (with $\epsilon = (\epsilon_1, \ldots , \epsilon_m  ) \in \mathbb{R}^{m }$), see \citet{Hai71}:
\begin{align}\label{eps}
\begin{split}
& \min f_j(\bx) \\
& \text{subject to } f_i(\bx) \leq \epsilon_i,~i=1, \ldots, m  ,~ i \neq j,\\
& \hspace{1.7cm}{\bx \in X\subseteq\mathbb{R}^n.}
\end{split}
\end{align}

In the following section, we will explain that the scalarization functionals involved in (\ref{weighTsch}) and (\ref{eps}) are special cases of a general scalarization functional (see (\ref{funcak0}) in the next section) such that it is possible to obtain continuity, monotonicity, convexity properties of the scalarization functionals in (\ref{weighTsch}) and (\ref{eps}) from the corresponding properties of the general non-linear functional derived in \cite[Theorem 2.3.1]{GopRiaTamZal:03}. These properties are important for a characterization of the solutions to MO problems.


\subsubsection{A General Scalarization Technique}\label{sec-general}

We will formulate a general non-linear scalarizing functional (see \cite[Section 2.3]{GopRiaTamZal:03}) and explain that well known scalarization techniques (weighted Chebyshev scalarization, weighted sum scalarization, $\epsilon$-constraint problem, scalarization by \citet{Kal94}) can be described by special cases of this general scalarizing functional.

For a nonempty closed subset $A$ of $\mathbb{R}^{m }$ and an element $k\in -0^+A \setminus \{0\}$ ($-0^+A$ denotes the negative recession cone to $A$), we study a {non-linear functional} $\varphi_{A,k} : \mathbb{R}^{m } \rightarrow \overline{\mathbb{R}}:=\mathbb{R} \cup \{-\infty\} \cup \{+ \infty \}$ defined by
\begin{equation}\label{funcak0}
\varphi_{A,k} (y):= \inf \{t\in
{\mathbb{R}} \mid y\in tk + A\}.
\end{equation}

Now, we consider the scalarized MO problem using the non-linear functional (\ref{funcak0}):
\[\tag{$\rm{MO}(\varphi_{A,k})$} \min_{\bx\in X}  \varphi_{A,k} (\mathbf{\bob}(\bx)).\]
If we consider a nontrivial, closed, convex and pointed cone $C \subset \mathbb{R}^m$ and suppose $A - C \subseteq A$, a solution of ($\rm{MO}(\varphi_{A,k})$) is a weakly optimal solution of the MO problem taking into account the monotonicity properties of the functional (\ref{funcak0}), see \cite[Theorem 2.3.1 (f) and Theorem 3.1.8]{GopRiaTamZal:03}.

\bigskip

We explain that many well known scalarization methods for multi-objective optimization problems are special cases of a scalarization by means of the non-linear functional (\ref{funcak0}). 
\begin{itemize}

\item[(a)] We show that the {$\epsilon$-constraint method} can be described by means of functionals of type (\ref{funcak0}): Let some $j \in \{1,\ldots, m  \}$ and
some real values $\epsilon_i \in \mathbb{R}$, $i=1,\ldots, m  , i\not=j$ be given. Then, the $\epsilon$-constraint
scalarization (see \citep{Hai71,ChaHai83,eich08,DurStrTam2017}), is given by the functional $\varphi_{A,k}$ (see (\ref{funcak0})) with
\begin{equation}
A :=  \bar{b} - \mathbb{R}^{m }_+ , \mbox{ with } \bar{b}=(\bar{b}_1, \ldots, \bar{b}_{m })^T, 
\bar{b}_i = \left\{ \begin{array}{ll} 0 & \text{for } i=j,\\
                                       \epsilon_i  & \text{for } i \neq j,
 \end{array}\right. \label{D}  
\end{equation}
\begin{equation}\label{k} k=(k_1, \ldots, k_{m }) \mbox{ where } k_i = \left\{ \begin{array}{ll} 1 & \text{for } i=j,\\
                                     0 & \text{for } i \neq j.
\end{array}\right. 
\end{equation}
 
We denote the functional (\ref{funcak0}) with $A$ and $k$ as in (\ref{D}), (\ref{k}) by $\varphi^\epsilon_{A,k}$.
With $A$ and $k$ given by (\ref{D}), (\ref{k}), the scalarized MO problem 
\[\tag{$\rm{MO} (\varphi^\epsilon_{A,k})$} \min_{\bx\in X} \varphi^\epsilon_{A,k} (\mathbf{\bob}(\bx))\]
describes the $\epsilon$-constraint problem (\ref{eps}).

Taking into account \cite[Theorem 2.3.1 (f) and Theorem 3.1.8]{GopRiaTamZal:03}, solutions of (\ref{eps}) generate weakly optimal solutions of the MO problem.

If the number $m$ of objective functions is increasing, then the number of parameters $\epsilon_i$ ($i=1 , \ldots , m$) involved in the constraints in the definition of the functional $\varphi^\epsilon_{A,k}$ (i.e., the number of constraints) is growing up.

\item [(b)] Let a set $B_{L}$ be given by a system of linear inequalities. We consider
\begin{equation}\label{f-bGamma} B_{L} := \{y \in \mathbb{R}^{m }\; | \;\ \langle a^i ,  y \rangle   \leq \alpha_i,\ a^i \in \mathbb{R}^{m },\ \alpha_i \in \mathbb{R},\ i \in \{1, \ldots , n \}\}. 
\end{equation}
Using $a^i$ from this formula for $B_{L}$, we define a set $A_{L}
\subset \mathbb{R}^{m }$ by
\begin{equation}
   \label{equ:MengeA}
   A_{L} := \{y \in \mathbb{R}^{m }\; | \;\ \langle a^i ,  y \rangle   \leq \alpha_i,\ i \in I\}  \end{equation}
with the index set
\begin{equation}
   \label{equ:IndexmengeI}
   I := \{i \in \{1, \ldots , n\}\; | \;\ \{y \in \mathbb{R}^{m }\mid \langle a^i ,  y \rangle   = \alpha_i\} \cap B_{L} \cap \operatorname{int} \mathbb{R}^{m }_+ \neq \emptyset\}.
\end{equation}
The set $I$ is exactly the set of indices $i \in \{1, \ldots , n \}$ for
which the hyperplanes $\langle a^i ,  y \rangle   = \alpha_i$ are active in the non-negative orthant.
Let $B_{L}$
and the corresponding set $A_{L}$ defined as in (\ref{equ:MengeA}),
let vectors $k \in -0^+A_{L} \setminus \{0\}$ and $w \in \mathbb{R}^{m }$ be given. We consider the functional $\varphi_{A,k}$ (see (\ref{funcak0})) with $A= w + A_{L}$, i.e., we study the  functional $\varphi_{w + A_{L} ,k}: \mathbb{R}^{m } \rightarrow \overline{\mathbb{R}}$ of type (\ref{funcak0}) given by
\begin{equation}
\label{equ:z1Def}
   \varphi_{w + A_{L} ,k}(y) = \inf \{t \in \mathbb{R}\; | \;\ y \in t k + w + A_{L} \}, \qquad y \in \mathbb{R}^{m }.
\end{equation}
The functional $ \varphi_{w + A_{L} ,k}$ depends on the set $A_{L}$ and the parameters $k$ and $w$.

Using the non-linear functional $\varphi_{w + A_{L} ,k}$ given by (\ref{equ:z1Def}), we consider the scalarized problem
\[\tag{$\rm{MO}( \varphi_{w + A_{L} ,k} )$} \min_{\bx\in X} \varphi_{w + A_{L} ,k} (\mathbf{\bob}(\bx)) .\]

The following assertion is shown by~\citet{TamWin03} under more restrictive assumptions.

\begin{corollary}\label{special-struc-new}
We consider the set $B_{L}$ given by (\ref{f-bGamma}) and $w \in \mathbb{R}^{m }$ arbitrarily chosen. Let the corresponding set $A_{L}$ be given by (\ref{equ:MengeA}), (\ref{equ:IndexmengeI}), $k \in -0^+A_{L} \setminus \{0\}$ and the functional $ \varphi_{w + A_{L} ,k}$  given by (\ref{equ:z1Def}). Assume that $\langle a^i ,  k \rangle  
\neq 0$ for all $i \in I$.

Then, the non-linear functional $\varphi_{w + A_{L} ,k}$ in
(\ref{equ:z1Def}) is convex and $\mathbb{R}^{m }_+$-monotone (i.e., $y^1 \in y^2 +\mathbb{R}^{m }_+$ implies $\varphi_{w + A_{L} ,k}(y^1) \geq \varphi_{w + A_{L} ,k}(y^2)$). Furthermore, $\varphi_{w + A_{L} ,k}$ in
(\ref{equ:z1Def}) has the structure
\begin{equation}\label{f-spec-scal-block-new}
\varphi_{w + A_{L} ,k}(y) = \max_{i \in I} \frac{\langle a^i ,  y \rangle   - \langle a^i ,  w \rangle   - \alpha_i}{\langle a^i ,  k \rangle  }.  
\end{equation}
\end{corollary}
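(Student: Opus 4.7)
The plan is to derive the closed-form expression (\ref{f-spec-scal-block-new}) first, and then read off convexity and monotonicity either directly from that formula or by invoking the general result \cite[Theorem 2.3.1]{GopRiaTamZal:03}. The main bit of technical work is unravelling the infimum in the definition of $\varphi_{w+A_L,k}$ and justifying that it simplifies to a maximum of affine expressions; the crucial preliminary fact is that $\langle a^i, k\rangle > 0$ for every $i \in I$, which is where the assumptions $k \in -0^+ A_L \setminus \{0\}$ and $\langle a^i, k \rangle \neq 0$ interact.

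First I would establish the sign of $\langle a^i, k\rangle$. Pick any $y \in A_L$ (the set is nonempty because $B_L \cap \operatorname{int}\mathbb{R}^m_+ \ne \emptyset$ by the definition of $I$). Since $k \in -0^+ A_L$, we have $-k \in 0^+ A_L$, so $y - s k \in A_L$ for every $s \geq 0$. Expanding the defining inequalities of $A_L$ gives $\langle a^i, y\rangle - s\langle a^i, k\rangle \leq \alpha_i$ for all $s \geq 0$ and all $i \in I$. Letting $s \to +\infty$ forces $\langle a^i, k\rangle \geq 0$ for each $i \in I$. Combining this with the standing assumption $\langle a^i, k\rangle \neq 0$ yields $\langle a^i, k\rangle > 0$ for every $i \in I$.

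Next I would compute $\varphi_{w+A_L,k}(y)$ directly. By definition,
\begin{equation*}
\varphi_{w+A_L,k}(y) = \inf\{ t \in \mathbb{R} \mid y - t k - w \in A_L \} = \inf\bigl\{ t \in \mathbb{R} \mid \langle a^i, y - t k - w\rangle \leq \alpha_i \text{ for all } i \in I \bigr\}.
\end{equation*}
Using $\langle a^i, k\rangle > 0$, each constraint is equivalent to
\begin{equation*}
t \geq \frac{\langle a^i, y\rangle - \langle a^i, w\rangle - \alpha_i}{\langle a^i, k\rangle}.
\end{equation*}
Taking the infimum over all $t$ satisfying the whole finite family of linear lower bounds produces exactly (\ref{f-spec-scal-block-new}). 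In particular the infimum is attained (so $\varphi_{w+A_L,k}$ is finite-valued on $\mathbb{R}^m$), which also justifies that the representation as a max, rather than a supremum, is legitimate.

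Finally, convexity is immediate from (\ref{f-spec-scal-block-new}): a pointwise maximum of finitely many affine functions of $y$ is convex. For $\mathbb{R}^m_+$-monotonicity I would invoke \cite[Theorem 2.3.1 (f)]{GopRiaTamZal:03}, which yields $\mathbb{R}^m_+$-monotonicity of $\varphi_{A,k}$ provided $A - \mathbb{R}^m_+ \subseteq A$; applied to $A := w + A_L$, this property follows from the geometric role of $I$ (the selected hyperplanes are precisely the ones that bound the feasible set from below in the non-negative orthant, giving the needed $\mathbb{R}^m_+$-recession property). I expect the main obstacle to be this last monotonicity step, because under the relaxed assumptions stated here (compared with \cite{TamWin03}) one must argue carefully that the pruning of indices defining $A_L$ does not destroy $\mathbb{R}^m_+$-monotonicity; the cleanest route is to verify $A_L - \mathbb{R}^m_+ \subseteq A_L$ directly from the definitions of $B_L$, $A_L$ and $I$, and then appeal to the general monotonicity statement rather than to the explicit formula.
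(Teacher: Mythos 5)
Your proposal follows essentially the same route as the paper: unravel the infimum defining $\varphi_{w+A_L,k}$ into a finite family of affine lower bounds on $t$, obtain the $\max$ formula (\ref{f-spec-scal-block-new}), and deduce convexity and $\mathbb{R}^m_+$-monotonicity from \cite[Theorem 2.3.1]{GopRiaTamZal:03}. The only difference is that you explicitly verify $\langle a^i,k\rangle>0$ from the recession-cone condition before dividing (a step the paper leaves implicit), and you flag the monotonicity condition $A_L-\mathbb{R}^m_+\subseteq A_L$ as needing separate verification, which is a reasonable extra caution but not a departure from the paper's argument.
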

\begin{proof} We get the assertions from \cite[Theorem 2.3.1]{GopRiaTamZal:03} and
\begin{eqnarray}
   \varphi_{w + A_{L} ,k}(y) & = & \inf\ \{t \in \mathbb{R}\; | \;\ y - w - t k \in A_{L}\}  \nonumber \\
          & = & \inf\ \{t \in \mathbb{R}\; | \;\ \langle a^i ,  y \rangle   - \langle a^i ,  w \rangle   - t \langle a^i ,  k \rangle   \leq \alpha_i,\ i \in I\}  \nonumber \\
          & = & \inf\ \{t \in \mathbb{R}\; | \;\ \frac{\langle a^i ,  y \rangle   - \langle a^i ,  w \rangle   - \alpha_i}{\langle a^i ,  k \rangle  } \leq t,\ i \in I\}  \nonumber \\
          & = & \max_{i \in I} \frac{\langle a^i ,  y \rangle   - \langle a^i ,  w \rangle   - \alpha_i}{\langle a^i ,  k \rangle  }.  \nonumber
\end{eqnarray}
\end{proof}

It is important to mention, if the number $m$ of objective functions is increasing, then the scalar products in (\ref{f-spec-scal-block-new}) are taken for parameters $w$ and $k$ in higher dimensional spaces $\mathbb{R}^m$.

\item [(c)] We consider the non-linear functional $\varphi_{w + A_{L} ,k}$ defined by
formula (\ref{equ:z1Def}) with the structure (\ref{f-spec-scal-block-new}). Choosing the parameters $A_{L} =  - \mathbb{R}^{m }_+$, $k \in \mathbb{R}^{m }_+ \setminus \{0\}$, $w=0$, $\alpha_i = 0$, $a^i \in \mathbb{R}^{m }_+$ such that $\langle a^i ,  k\rangle =1$  ($i= 1, \ldots , m  $), we obtain in (\ref{f-spec-scal-block-new}):
\begin{equation}\label{Tscheb}
\varphi_{- \mathbb{R}^{m }_+ ,k} (y) = \varphi_{w + A_{L} ,k}(y) = \max_{i \in I} {\langle a^i ,  y \rangle   } .
\end{equation}
This is the well known weighted Chebyshev scalarization with weights $\lambda^*_i =1$ for all $i \in I$ and the origin as reference point $w$ (see (\ref{weighTsch})). Because (\ref{Tscheb}) is a special case of (\ref{funcak0}) with $A=-\mathbb{R}^m_+$, we know that it is possible to characterize weakly optimal elements of the MO problem by solutions of the problem (\ref{weighTsch}), see \cite[Theorem 2.3.1 (f) and Theorem 3.1.8]{GopRiaTamZal:03}.

For the simplest case of the scalarizing functional in (\ref{Tscheb}), we get the problem
\[ \min_{\bx \in X} \langle a ,  \mathbf{\bob}(\bx) \rangle   , \]
with $a \in  \mathbb{R}^{m }_+ \setminus \{0\}$, i.e., the weighted sum scalarization.

If we choose $a \in \operatorname{int} \mathbb{R}^{m }_+$, we obtain the  characterization of properly optimal elements in
the sense of the weighted sum scalarization (compare
\citet{sch70}).

\item [(d)] In \cite[Theorem 3.7]{Kal94}, Pareto optimal elements of $F \subset \mathbb{R}^{m }$ with respect to a polyhedral cone $C \subset \mathbb{R}^{m }$ are  characterized. Let a
polyhedral cone 
\[ C := \{y \in \mathbb{R}^{m }\; | \; \langle -b^i ,  y \rangle   \geq 0,\; b^i \in \mathbb{R}^{m }, \;  i \in \{1, \ldots , n \}\}   \]
be given.
It is known (see \citep{Kal94}): Consider a Pareto optimal element $\bar{y}$ of $F \subset \mathbb{R}^{m }$ with respect to $C \subset \mathbb{R}^{m }$. Then, the system
\[ \langle b^i ,  y-\hat{y} \rangle   < \langle b^i ,  \bar{y}-\hat{y} \rangle  , \qquad i \in \{1, \ldots , n \} ,\]
where $F \subset \hat{y} + C$, is inconsistent for all $y \in F$.

We chose $A_L = \bar{y} - C$ in (\ref{f-spec-scal-block-new}) and consider $\hat{F} = F - \hat{y}$ such that we get
$\hat{F} \subset C$. This is a weaker condition than $\hat{F} \subset \mathbb{R}^{m }_+$. Furthermore, \cite[Theorem 3.2 (iii)]{TamWin03} yields
\[ \forall\ y \in F:  \qquad \varphi_{w + A_L , k}(y-\hat{y}) \geq \varphi_{w + A_L , k}(\bar{y}-\hat{y}) . \]
Taking into account~(\ref{f-spec-scal-block-new}) with $w = 0$, $\alpha_i = 0$, we get for all $y \in
F$
\[ \max_{i =1, \ldots , n} \frac{\langle b^i ,  y-\hat{y} \rangle  }{\langle b^i ,  k \rangle  }
   \geq \max_{i =1, \ldots , n} \frac{\langle b^i ,  \bar{y}-\hat{y} \rangle  }{\langle b^i ,  k \rangle  }. \]
So, we obtain that for each $y \in F$, there are $i^\ast \in I$ and
\[ \frac{\langle b^{i^\ast} ,  y-\hat{y} \rangle  }{\langle b^{i^\ast} ,  k \rangle  }
   \geq \max_{i =1, \ldots , n} \frac{\langle b^i ,  \bar{y}-\hat{y} \rangle  }{\langle b^i ,  k \rangle  }
   \geq  \frac{\langle b^{i^\ast} ,  \bar{y} -\hat{y}  \rangle  }{\langle b^{i^\ast} ,  k \rangle  }, \]
i.e., $\langle b^{i^\ast} ,  y-\hat{y} \rangle   \geq \langle b^{i^\ast} ,  \bar{y}-\hat{y} \rangle  $. Now, we conclude that $\langle b^i ,  y-\hat{y} \rangle   < \langle b^i ,  \bar{y}-\hat{y} \rangle  $ for all $i = 1, \ldots , n$ is not possible.

\item [(e)] In order to generate weakly optimal solutions of a MO problem, Pascoletti and Serafini (cf. \citep{eich08}, \citep{PasSer84}) considered the following scalar
surrogate problem

\begin{align}
\begin{split}\label{PS}
&  \min \; t \\
& \text{subject to } \mathbf{\bob}(\bx) \in  tk + a -  \mathbb{R}^{m }_+,\\
& \hspace{1.7cm}{\bx \in X\subseteq\mathbb{R}^n , \; t \in \mathbb{R}}
\end{split}
\end{align}

with parameters $a \in \mathbb{R}^{m }$, $k \in \operatorname*{int} \mathbb{R}^m_+$. The scalar problem in (\ref{PS}) is a scalarization of the MO problem by means of a functional of type (\ref{funcak0}) with $A= a -  \mathbb{R}^{m }_+$ and $k \in \operatorname*{int} \mathbb{R}^m_+$, namely
\[\tag{$\rm{MO}(\varphi_{a-\mathbb{R}^{m }_+,k})$} \min_{\bx\in X}  \varphi_{a-\mathbb{R}^{m }_+,k} (\mathbf{\bob}(\bx)).\]

Taking into account \cite[Theorems 2.3.1 (f) and 3.1.8]{GopRiaTamZal:03}, solutions of ($\rm{MO}(\varphi_{a-\mathbb{R}^{m }_+,k})$) generate weakly optimal solutions of the MO problem.

\end{itemize}

\subsubsection{Scalarization Methods for Multi-objective Combinatorial Optimization}

The application of scalarization methods, especially of the {$\epsilon$-constraint method}, for combinatorial MO problems is discussed by~\citet{Fig2017}. As explained in Section \ref{sec-general}, the {$\epsilon$-constraint method} is a special case of a scalarization by means of the functional $\varphi_{A,k}$ in (\ref{funcak0}) with the parameters $A$ and $k$ given by (\ref{D}), (\ref{k}). 
Scalar optimization problems $\min_{\bx\in X} \varphi_{A,k} (\mathbf{\bob}(\bx))$ with $A$ and $k$ as in (\ref{D}), (\ref{k}) are resource-constrained combinatorial problems.  These scalar combinatorial optimization problems constitute to be NP-hard. In particular, if we consider combinatorial problems, then the additional $\epsilon$-constraints can
make $\min_{\bx\in X} \varphi_{A,k} (\mathbf{\bob}(\bx))$ based on different parameters $A$ and $k$ like in (\ref{D}), (\ref{k}) considerably more difficult to solve than the suitable single-objective problem.~\citet{Fig2017} explained that the cause for this additional difficulty is the perturbation of the combinatorial structure of the polyhedron of feasible solutions by the $\epsilon$-constraints (see (\ref{eps}) and (\ref{D})).

Furthermore, it is mentioned  in \citet{Fig2017} that there are particular cases where the property of total unimodularity is compatible with the $\epsilon$-constraint method. There are combinatorial MO problems, where total unimodularity can be preserved during the scalarization procedure. For instance, the constraint matrix
of a binary knapsack problem or the binary assignment problem is totally unimodular.

In general, it seems to be preferable to use the non-linear functional (\ref{equ:z1Def}) for scalarization. Taking into account Corollary \ref{special-struc-new}, it is possible to avoid additional $\epsilon$-constraints.

Concerning the influence of the number of objective functions:
\begin{itemize}
\item {\bf $\epsilon$-constraint method:} There are more $\epsilon$-constraints (see (\ref{eps}) and (\ref{D})) in the scalarized problem in the case that we add some objective functions.
\item {\bf scalarization by $\mathbf{\varphi_{w + A_{L} ,k}}$:} If we add some more objective functions, then the problem ($\rm{MO}( \varphi_{w + A_{L} ,k} )$) is not more difficult because the number of objective functions $m $ is only involved in the scalar products $\langle a^i ,  w \rangle$ and $\langle a^i ,  k \rangle$ for $a^i, w, k \in \mathbb{R}^{m }$ taking into account Corollary \ref{special-struc-new}.

\end{itemize}


\subsection{Distances between Weight Vectors}
\label{sec:dist_weights}

\begin{figure}[!t]
\centering%
\includegraphics[width=0.55\textwidth]{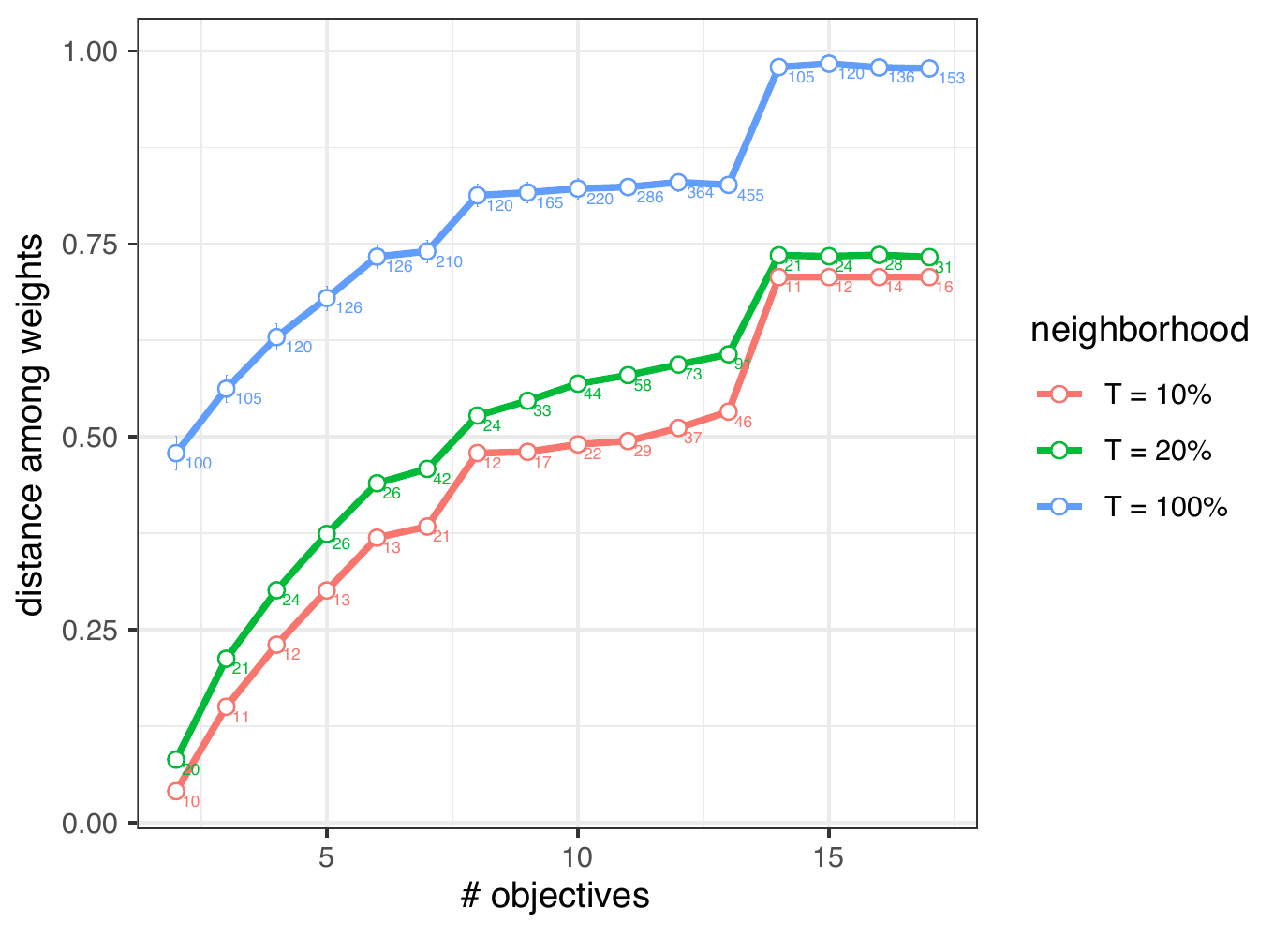}
\caption{Distance among weight vectors with respect to the number of objectives.
Below each point is reported the number of weight vectors for the considered setting.
}
\label{fig:dist_weight}
\end{figure}

In decomposition-based MOEAs, such as MOEA/D~\citep{zhang2007moea} and its variants~\citep{decompo_survey}, multiple scalar sub-problems are optimized simultaneously. Each sub-problem is defined by a particular weight vector setting for the considered scalarizing function. It is often argued that the search process actually benefits from solving the sub-problems cooperatively, given that there exists a certain locality among them, the solution from one sub-problem contributing to the solution of another sub-problem and vice-versa. In particular, a neighborhood is defined among sub-problems in order to limit the exchange of information between them.
In this section, we investigate how the distance between 
uniformly-defined weight vectors is impacted by the number of objectives.

We generate weight vectors following a simplex-lattice design \citep{simplex_lattice} as performed, e.g., in the original MOEA/D~\citep{zhang2007moea}, where the number of weights is set as the population size $\mu$.
The simplex-lattice design generates $\mu$ weight vectors 
such that $\mu = {H + m - 1 \choose m - 1}$, where $H$ is a user-defined parameter. In our experiments, for a given number of objectives~$m$, we take the smallest $H-$value such that $\mu \geq 100$.
The neighborhood of a given weight vector is defined as the set of the $T$ closest weight vectors, based on Euclidean distance. We consider three neighborhood sizes $T \in \{10\%, 20\%, 100\%\}$, given as a proportion of the total number of weight vectors $\mu$.
When $T=100\%$, this means that there is no restriction on the exchange of information between solutions; i.e. the neighborhood is made of the entire population.
For a given setting, we select $900$ pairs of weight vectors randomly (we used the same setting in Section~\ref{distanceBetweenSolutions}), each time within a given neighborhood, and we report the average Euclidean distance between them and confidence intervals in Fig.\ref{fig:dist_weight}.

We observe that the distance among weight vectors increases with the number of objectives. When no neighborhood is considered ($T=100\%$), the distance goes from about 
$0.5$ for $m=2$ to close to $1$ for $m \geq 14$. Restricting the selection of weight vectors among the $10\%$ or $20\%$ closest ones reduces the distance to an order of magnitude. However, the trend remains similar, and the distance exceeds $0.5$ for problems with more than $12$ objectives, even with a small neighborhood size of $T=10\%$.
As such, the assumption that the neighboring sub-problems share similar information becomes less accurate as the number of objective grows, and might actually affect the performance of decomposition-based MOEAs for many-objective problems.

\section{Conclusions and Future Work}\label{conclusions}

\subsection{Summary}\label{summary}
This paper has carried out a theoretical and empirical analysis of the impact of increasing the number of objectives~$m$ on (i)~the characteristics of a multi-objective (MO) problem and (ii)~the complexity of commonly used MO procedures and algorithms. For the empirical analysis, we used multi-objective NK landscapes, which allowed us to conveniently scale up the number of objectives $m$.
Table~\ref{contribution_table} provides an overview of the topics we have made contributions to, and where these can be found in the paper. The main findings of our analysis in terms of scaling efficiencies can be summarized as:
\begin{itemize}
\item \emph{Good} scaling behavior (i.e., polynomial):
	\begin{itemize}
	
	\item \textit{Impact on scalarization methods:} We have proposed a general non-linear scalarizing functional that can be used to describe several well known scalarizing techniques including weighted Chebyshev scalarization, weighted sum scalarization, Pascoletti-Serafini problem, $\epsilon$-constraint problem, and scalarization by Kaliszewski. Using the proposed non-linear functions, the complexity of solving the corresponding scalar problem grows linearly with $m$. 

	\item \textit{Approximating hypervolume:} We have shown that the complexity of computing the hypervolume exactly grows exponentially with $m$ in the typical case (not only in the worst case). When approximating the hypervolume using Monte Carlo sampling, we demonstrated empirically that the size of the confidence interval (of the approximated hypervolume value) should not change with $m$ assuming a constant Pareto archive size $N$; instead the number of sampled points should be $\Theta(N^2)$. Moreover, we showed that there is a switching point ($m\geq8$ in our study) beyond which the running time of a sampling-based approach outperforms an exact method, and the switching point moves to higher $m$ as the required confidence interval reduces in width.   
	\end{itemize}
\item \emph{Relatively good} scaling behavior:
	\begin{itemize}
	
	\item \textit{Dominance test and updating the Pareto archive:} Although, in the case of unbounded Pareto archive, the running time of updating the Pareto archive grows exponentially with $m$ for all four considered data structures -- a simple (unordered) list, ND-Tree, Quad~Tree, and MFront~II --, experiments reveal that ND-Tree and Quad~Tree are more robust. With regards to the running time of processing single solutions w.r.t to the Pareto archive size, we observed empirically a switching point in terms of $m$ beyond which the running time turns from sublinear to approximately linear dependence on the Pareto archive size. This switching point is between $5-7$ objectives for the simple list and MFront~II, between $6-8$ objectives for \mbox{ND-tree}, and between $4-6$ objectives for Quad tree.
	
	\end{itemize}
\item \emph{Poor} scaling behavior (i.e., exponential complexity, decreased quality):
	\begin{itemize}
	 
	\item \textit{Number of Pareto optimal solutions:} We have confirmed empirically that the number of Pareto optimal solutions grows exponentially with $m$, with less than 5\% of solutions being Pareto optimal for $m=2$ and this proportion growing to about 50\% for $m=7$, and more than 99\% for $m=20$ for the considered problems.
	
	\item \textit{Probability for a solution to be non-dominated:} We derived theoretically the probability that $\mu$ random pairs of solutions are mutually non-dominated, and this probability is $\left( 1-\frac{1}{2^{m-1}} \right)^{\mu}$. An empirical analysis confirmed the correctness of this probability, and showed that it becomes very likely that all solutions are mutually non-dominated for $m>15$, even for values of $\mu>1\,000$. 
	
	\item \textit{Probability of having heterogeneous objectives:} We have shown experimentally that the level of heterogeneity among objectives in a many-objective problem increases with $m$. In particular, for the case of heterogeneous evaluation durations of objectives, with durations being drawn from a Beta distribution, we observed that the difference in the minimum and maximum difference in evaluation durations decreases and increases exponentially with $m$, respectively. A unskewed (symmetric) Beta distribution was associated with the largest increase in the maximum difference. 
	
	\item \textit{Distance between solutions:} We have shown empirically that, as $m$ increases, the expected distance (in the design space) among Pareto optimal solutions becomes more similar to the one among random solutions, and for $m\geq15$ the distances were identical. Due to the spherical distribution of solutions in the objective space for the problem considered, the distance between Pareto optimal and random solutions in the objective space is very similar and increases linearly with $m$. These observations suggest that, for many-objective problems, (i)~it becomes more difficult to discover a high-quality representation of the Pareto front as the Pareto optimal solutions are distant from each other, and (ii)~that few building blocks (if any) might actually be exploited by (blind) recombination. 
	
	\item \textit{Computing hypervolume exactly:} We have shown empirically that existing theoretical results hold and that the exact hypervolume computation time growths indeed exponentially with $m$ in the typical case (not only in the worst case). 
		
	\item \textit{Distance between weight vectors:} We observed empirically that the distance among uniformly-defined weight vectors increases with $m$. 
	A smaller neighborhood size follows the same general trend but reduces the distance between weight vectors to an order of magnitude, with the distance exceeding 0.5 for $m>12$ even for a neighborhood size of 10\%. This implies that, as $m$ increases, the assumption that neighboring sub-problems share similar information becomes less accurate. 
	\end{itemize}
\end{itemize}


\subsection{Recommendations for MOEAs}\label{recommendations}
This section will make use of our theoretical and empirical findings to provide recommendations and considerations on design choices for different multi-objective optimization paradigms when applying them to problems with many objectives.  

\begin{itemize}

\item
For \emph{all classes of MOEAs}:
\begin{itemize}
\item Algorithm performance should be evaluated in terms of representation quality of the Pareto front. This indicates an algorithm's ability to deal efficiently with the large distances between Pareto optimal solutions when $m$ is large. 
\item The distance between solutions in the objective space increases with $m$, causing the quality of representation to decrease and reduce coverage.
\item The level of heterogeneity in a MO problem increases with $m$, urging the need for customized methods for coping with heterogeneity for many-objective problems. 
\item The distance between solutions in the design space increases with $m$, causing (blind) recombination to be less effective.
\item When no external Pareto archive is considered, the population size shall increase to maintain the same level of coverage of the Pareto front because the number of Pareto optimal solutions increases with $m$.  
\item When an external Pareto archive is considered, a data structure such as Quad~Tree or ND-Tree shall be used 
when $m$ is large and in need to discover a good representation quickly.
\end{itemize}

\item
For \emph{dominance-based MOEAs} (e.g. NSGA-II~\citep{deb2002fast}):
\begin{itemize}
\item The dominance relation becomes less discriminative as $m$ increases, causing a reduction in the selection pressure (when selection is done based on dominance) and affect negatively the representation quality of the Pareto front. 
\item The diversity maintenance method employed is critical to ensure adequate selection pressure as $m$ increases, since most individuals in the search space are mutually non-dominated. 
\end{itemize}

\item
For \emph{decomposition/scalarization-based MOEAs} (e.g. MOEA/D~\citep{zhang2007moea}):
\begin{itemize}
\item Assuming a constant number of weight vectors (and population size), all issues mentioned above for constant population sizes hold. Moreover, the distance between weight vectors increases with $m$.
\item Assuming that the number of weight vectors increases with $m$ (in order to maintain the same level of coverage), the algorithm complexity increases with the number of weight vectors. It remains unclear how the number of weight vectors shall increase, e.g. polynomially or exponentially.
\item The complexity of solving an individual scalar problem grows linearly with $m$, and the associated problem can be formulated conveniently using the general non-linear scalarizing functional proposed in this work. 
\end{itemize}

\item
For \emph{indicator-based MOEAs} (e.g. IBEA~\citep{zitzler2004indicator}):
\begin{itemize}
\item Computing the hypervolume of a solution set exactly has a complexity that is exponential in $m$.
\item When using Monte Carlo sampling to approximate the hypervolume combined with a constant population size, all issues mentioned above for constant population sizes hold.
\item When using a constant population size, then do not change the size of the confidence interval when approximating the hypervolume through sampling as $m$ increases. 
\item Account for both the desired accuracy of the hypervolume (width of the confidence interval) and $m$ when deciding whether to compute the hypervolume exactly or approximately. Another interesting approach is to use recently proposed methods combining exact algorithms with Monte Carlo sampling, e.g. \citep{Tang2017, Jaszkiewicz2020}.
\item When assuming a population size that increases with~$m$,
it remains unclear at this stage how the number of sampling points shall be changed (or not) to reach the same level of hypervolume approximation quality.
\end{itemize}

\end{itemize}

\subsection{Future Work}\label{futureWork}
Although this study allowed us to make headwind in terms of both theoretical and empirical contributions to understanding and dealing more efficiently with many-objective optimization problems, there is much more that we as a community can do to advance this 
research area. We identified three main directions for future work to advance our work further that we discuss in the following. 

\vspace{+2mm}
\noindent \textbf{Consider other problem setting and types to verify theoretical results:}\; Our empirical study considered the impact of varying the number of objectives $m$ using a combinatorial (binary) problem but we kept both the dimension of the design space and the level of correlation between objectives constant. It would be important to verify existing and new theoretical findings for other problem settings and other artificial and real problem types, such as continuous and mixed-integer and/or distance-based problems~\citep{koppenetal2005,distanceBasedProblem2019}. Gaining a more profound theoretical and empirical understanding of the impact of heterogeneous objectives~\cite{allmendinger2015delays,eichfelder4} on many-objective optimization is critical too. 

\vspace{+2mm}
\noindent \textbf{Investigate other multi-objective concepts and algorithms:}\; Our empirical study investigated the impact of $m$ on different problem characteristics and multi-objective concepts. Additional empirical studies could investigate if the theoretical findings hold for different multi-objective concepts and algorithms. For example, we considered the scenario where the solutions are processed in a random order, when investigating the complexity of dominance tests and updating the Pareto archive as a function of $m$, while the quality of solutions generated by MOEAs should generally improve with the running time. Thus, future work can investigate the same aspects for more realistic orders of solutions or sequences of solutions generated by real MOEAs. Moreover, it would also be important to critically evaluate the impact of $m$ on other multi-objective concepts, such as non-dominance sorting. Consequently, the findings gained from the empirical analysis could be used to develop more efficient methods for dealing with many-objective problems. This includes also more efficient scalarization methods that account for variable domination structures and special structures of the parameter sets; exploring the use of these methods for problems where the preferences of the decision maker change over time would be very timely given the uncertain environment we are living in.

\vspace{+2mm}
\noindent \textbf{Expand the theoretical study:}\; Extensive contributions can be made on advancing the theoretical grounding of multi-objective concepts and algorithms as a function $m$. For example, our theoretical work can be extended by deriving the probability for a solution to be non-dominated given a random population; the challenge here is to account for the various dependencies between solutions. Knowing this probability would allow to predict the expected number of non-dominated solutions in a population and hence facilitate the design of more efficient initialization strategies for many-objective problems. This line of research has been considered, for example, by~\cite{joshi2014empirical}. Research related to heterogeneous objectives in a many-objective setup is still in its infancy and forms another direction of future research in terms of theory and algorithms. Given a problem with heterogeneous objectives and a portfolio of algorithms, an interesting theoretical and empirical question could be to investigate whether it is possible to use information available about the heterogeneity to predict which algorithm should be selected to solve the problem. This task can also be formulated as an algorithm selection problem~\citep{rice1976algorithm}.

\vspace{+5mm}
\noindent\textbf{Acknowledgments.}
This paper is a product of discussions initiated in the Dagstuhl Seminar 20031: Scalability in Multiobjective Optimization.
This research has been partially supported by the statutory funds of the Faculty of Computing and Telecommunications, Poznan University of Technology,
%
and by the French national research agency under Project ANR-16-CE23-0013-01.

\bibliographystyle{elsarticle-harv}
\bibliography{biblio}

\end{document}